\documentclass{article}
\PassOptionsToPackage{sort,round,compress}{natbib}
\usepackage[preprint]{neurips_2026} %

\usepackage[utf8]{inputenc} %
\usepackage[T1]{fontenc}    %
\usepackage[hypertexnames=false]{hyperref} %
\usepackage{url}            %
\usepackage{booktabs}       %
\usepackage{amsfonts}       %
\usepackage{nicefrac}       %
\usepackage{microtype}      %
\usepackage{xcolor}         %

\usepackage{graphicx}
\usepackage{subfigure}
\usepackage{multicol,multirow,makecell}
\usepackage{adjustbox}   
\usepackage{threeparttable}

\usepackage{amsmath}
\usepackage{amssymb}
\usepackage{mathtools}
\usepackage{amsthm}
\usepackage{dsfont,bm,bbm}
\usepackage{cases}

\usepackage[capitalize,noabbrev]{cleveref}

\usepackage[textsize=tiny]{todonotes}

\def\eqref#1{(\ref{#1})}

\def\floor#1{\lfloor #1 \rfloor}
\def\1{\bm{1}}

\def\vzero{{\bm{0}}}

\def\vtheta{{\bm{\theta}}}
\def\va{{\bm{a}}}
\def\vb{{\bm{b}}}

\def\ve{{\bm{e}}}

\def\vv{{\bm{v}}}

\def\vx{{\bm{x}}}
\def\vy{{\bm{y}}}
\def\vz{{\bm{z}}}

\def\mA{{\bm{A}}}
\def\mB{{\bm{B}}}
\def\mC{{\bm{C}}}
\def\mD{{\bm{D}}}
\def\mE{{\bm{E}}}

\def\mG{{\bm{G}}}
\def\mH{{\bm{H}}}
\def\mI{{\bm{I}}}

\def\mM{{\bm{M}}}

\def\mQ{{\bm{Q}}}

\def\mV{{\bm{V}}}

\def\mX{{\bm{X}}}

\def\mZ{{\bm{Z}}}

\DeclareMathAlphabet{\mathsfit}{\encodingdefault}{\sfdefault}{m}{sl}
\SetMathAlphabet{\mathsfit}{bold}{\encodingdefault}{\sfdefault}{bx}{n}

\def\gA{{\mathcal{A}}}
\def\gB{{\mathcal{B}}}

\def\gD{{\mathcal{D}}}
\def\gE{{\mathcal{E}}}
\def\gF{{\mathcal{F}}}
\def\gG{{\mathcal{G}}}

\def\gK{{\mathcal{K}}}
\def\gL{{\mathcal{L}}}

\def\gO{{\mathcal{O}}}
\def\gP{{\mathcal{P}}}

\def\gX{{\mathcal{X}}}

\def\gZ{{\mathcal{Z}}}

\def\sN{{\mathbb{N}}}

\def\sP{{\mathbb{P}}}

\def\sR{{\mathbb{R}}}

\newcommand{\E}{\mathbb{E}}

\newcommand{\KL}{D_{\mathrm{KL}}}

\DeclareMathOperator*{\argmax}{arg\,max}
\DeclareMathOperator*{\argmin}{arg\,min}

\DeclareMathOperator{\tr}{tr}

\usepackage[algo2e, linesnumbered, ruled, vlined]{algorithm2e}

\usepackage{doi}
\usepackage[many]{tcolorbox}
\usepackage{cancel}
\usepackage{pifont}
\usepackage[normalem]{ulem}
\usepackage{soul}
\usepackage{subcaption}
\usepackage{enumitem}
\usepackage[table]{xcolor}

\definecolor{lightgrey}{rgb}{0.9, 0.9, 0.9}

\tcbset{
  baseboxstyle/.style={
    boxrule=0pt,          
    frame hidden,          
    sharp corners,         
    before skip=10pt,      
    after skip=10pt,       
    left=2pt, right=2pt,   
    top=2pt, bottom=2pt    
  }
}

\tcolorboxenvironment{definition}{baseboxstyle, colback=black!5}   
\tcolorboxenvironment{theorem}{baseboxstyle, colback=blue!5}      
\tcolorboxenvironment{corollary}{baseboxstyle, colback=blue!5}   
\tcolorboxenvironment{lemma}{baseboxstyle, colback=teal!5} 
\tcolorboxenvironment{proposition}{baseboxstyle, colback=blue!5}

\definecolor{darkblue}{rgb}{0.0,0.0,0.65}
\definecolor{darkred}{rgb}{0.65,0.0,0.0}
\definecolor{darkgreen}{rgb}{0.0,0.5,0.0}
\definecolor{tab:blue}{RGB}{31,119,180}  
\definecolor{tab:red}{RGB}{214,39,40}  
\definecolor{tab:green}{RGB}{44,160,44}  
\definecolor{tab:orange}{RGB}{255,127,14}  

\hypersetup{
    colorlinks = true,
    citecolor  = darkblue,
    linkcolor  = darkred,
    filecolor  = darkblue,
    urlcolor   = darkgreen,
}

\usepackage{aliascnt}

\newtheorem{theorem}{Theorem}[section]

\newaliascnt{lemma}{theorem}
\newtheorem{lemma}[lemma]{Lemma}
\aliascntresetthe{lemma}

\newaliascnt{proposition}{theorem}
\newtheorem{proposition}[proposition]{Proposition}
\aliascntresetthe{proposition}

\newaliascnt{corollary}{theorem}
\newtheorem{corollary}[corollary]{Corollary}
\aliascntresetthe{corollary}

\newaliascnt{assumption}{theorem}
\newtheorem{assumption}[assumption]{Assumption}
\aliascntresetthe{assumption}

\newaliascnt{definition}{theorem}
\newtheorem{definition}[definition]{Definition}
\aliascntresetthe{definition}

\newaliascnt{remark}{theorem}
\newtheorem{remark}[remark]{Remark}
\aliascntresetthe{remark}

\theoremstyle{plain}

\newtheorem{oracle}{Oracle}

\allowdisplaybreaks

\newcommand{\ANReg}{\mathrm{AN\text{-}Reg}}
\newcommand{\MNReg}{\mathrm{MN\text{-}Reg}}
\newcommand{\ABRReg}{\mathrm{ABR\text{-}Reg}}
\newcommand{\MBRReg}{\mathrm{MBR\text{-}Reg}}
\newcommand{\DGap}{\mathrm{DGap}}
\newcommand{\Reg}{\mathrm{Reg}}

\newcommand{\dmu}{\dot{\mu}}
\newcommand{\ddmu}{\ddot{\mu}}

\newcommand{\nuc}{\mathrm{nuc}}

\def\bignorm#1{\left\lVert #1 \right\rVert}
\def\bignormop#1{\left\lVert #1 \right\rVert_{\mathrm{op}}}
\def\bignormnuc#1{\left\lVert #1 \right\rVert_{\mathrm{nuc}}}

\newcommand{\indicator}{\mathds{1}}

\newcommand{\Skew}{\mathrm{Skew}}

\renewcommand{\vec}{\mathrm{vec}}
\newcommand{\poly}{\mathrm{poly}}

\newcommand{\Ber}{\mathrm{Ber}}
\newcommand{\rank}{\mathrm{rank}}
\newcommand{\SNE}{\textbf{\textit{SNE}}}
\newcommand{\GBPM}{\textbf{\textit{GBPM}}}

\crefname{theorem}{Theorem}{Theorems}
\Crefname{theorem}{Theorem}{Theorems}

\crefname{lemma}{Lemma}{Lemmas}
\Crefname{lemma}{Lemma}{Lemmas}

\crefname{proposition}{Proposition}{Propositions}
\Crefname{proposition}{Proposition}{Propositions}

\crefname{corollary}{Corollary}{Corollaries}
\Crefname{corollary}{Corollary}{Corollaries}

\crefname{assumption}{Assumption}{Assumptions}
\Crefname{assumption}{Assumption}{Assumptions}

\crefname{definition}{Definition}{Definitions}
\Crefname{definition}{Definition}{Definitions}

\crefname{remark}{Remark}{Remarks}
\Crefname{remark}{Remark}{Remarks}

\usepackage{autonum} %
\title{Provably Efficient Regularized Online RLHF with Generalized Bilinear Preferences}

\author{%
Junghyun Lee \\
KAIST AI \\
\texttt{jh\_lee00@kaist.ac.kr}
\And
Minju Hong \\
KAIST EE \\
\texttt{minju23@kaist.ac.kr}
\And
Kwang-Sung Jun \\
POSTECH CSE/AI \\
\texttt{kwangsungjun@postech.ac.kr}
\And
Chulhee Yun \\
KAIST AI \\
\texttt{chulhee.yun@kaist.ac.kr}
\And
Se-Young Yun \\
KAIST AI \\
\texttt{yunseyoung@kaist.ac.kr}
}

\begin{document}

\maketitle

\begin{abstract}
    We consider the problem of \emph{regularized} best-response max-regret minimization in online RLHF under general preferences and bandit feedback.
    While various regularizers are utilized to robustify alignment, known polylogarithmic regret guarantees remain heavily specific to KL.
    To investigate whether such fast rates extend beyond KL, we adopt the \emph{Generalized Bilinear Preference Model (GBPM)}—capturing intransitive preferences over $d$-dimensional item-wise features via a rank-$2r$ skew-symmetric matrix—to isolate the impact of generic regularization.
    Crucially, under GBPM, we prove that the dual gap of any greedy policy is bounded by the \emph{squared} estimation error, derived using \emph{only} strong convexity and skew-symmetry.
    Under a feature coverage assumption, we establish a \emph{generic} polylogarithmic regret of $\tilde{\gO}(\eta d^4 C_{\min}^{-1} (\log T)^2 \wedge d^2 C_{\min}^{-1/2} \sqrt{T})$ with Greedy Sampling, and a dimension-wise improved regret (for well-conditioned arm-sets) of $\tilde{\gO}(C_{\min}^{-2} \sqrt{\eta r T} \wedge r^{1/3} C_{\min}^{-4/3} T^{2/3})$ with Explore-Then-Commit, where $\eta^{-1}$ is the regularization coefficient, $T$ is the time horizon, and $C_{\min}$ is an arm-set dependent quantity.
    This demonstrates that ``fast'' regrets are \emph{not} KL-specific, but rather a fundamental consequence of generic strongly convex geometry.
\end{abstract}

\section{Introduction}
\label{sec:introduction}

\paragraph{General Preference Learning.}
Aligning large language models (LLMs) with human values has emerged as a central challenge in modern AI~\citep{Llama,Qwen2.5,GPT4}.
While the common approach to Reinforcement Learning from Human Feedback (RLHF) heavily relies on reward-based models like the Bradley-Terry-Luce (BTL) model~\citep{bradleyterry1952,christiano2017rlhf}, scalar utilities inherently struggle to capture the cyclic, intransitive, and diverse nature of human preferences~\citep{may1954intransitivity,tversky1969intransitivity}. 
This representational bottleneck has motivated a shift toward \textit{General Preference Learning} (or \textit{Nash Learning}), which directly targets the Nash equilibrium (NE) of a preference game~\citep{nash1951,vonNeumann1928,mckelvey-palfrey}.
Recently, this game-theoretic perspective has demonstrated notable empirical promise in LLM alignment~\citep{munos2024nash,ye2024general,cui2024ultrafeedback,rosset2024nash}.

In both practice and theory, solving these preference games online under bandit feedback relies heavily on optimizing \emph{regularized} objectives. In practical RLHF, purely maximizing an unregularized, observed preference often leads to reward hacking, diversity collapse, and hallucinatory text generation~\citep{michaud2020,tien2023causal,casper2023survey}. Embedding regularization directly into the objective prevents large drifts from reference models (e.g., SFT models or expert demonstrations) and robustifies the resulting equilibrium—a concept dating back to the seminal work on \emph{quantal response equilibria} by \citet{mckelvey-palfrey}.

Consequently, understanding the statistical efficiency of these regularized games, often measured via \textbf{\textit{regularized best-response regret}}, has become a primary target in modern RL, game-theoretic alignment, and learning in regularized zero-sum game literature~\citep{munos2024nash,wu2025greedy,xiong2024iterative,ye2024general,nayak2025logarithmic,yang2025incentivize}.
In particular, our analysis focuses on \textbf{\textit{regularized max-regret}}, which measures suboptimality of the \textit{max} player only (see \Cref{sec:regret-def} for its definition).
This notion originates from self-play frameworks for RL in two-player zero-sum games~\citep{bai2020self-play,bai2020self-play2,liu2021self-play,jin2022exploiter,xiong2022self-play}, and has become the standard metric in theoretical analyses of online RLHF under general preferences~\citep{ye2024general,wu2025greedy}. The intuition is that the learner ultimately only cares about obtaining the NE policy for the max-player, the policy actually deployed in practice.

\paragraph{The KL-Centric Theories.}
The current theoretical landscape for establishing polylogarithmic regularized best-response regret remains overwhelmingly KL-centric.
Recent theoretical advances in reward-based RLHF rely heavily on the explicit closed-form Gibbs densities unique to KL-regularized bandits~\citep{xiong2024iterative,zhao2025logarithmic,zhao2025sharp,wu2025greedy,ji2026kl}.
Consequently, the corresponding literature on regularized best-response regret has followed this same KL-specific trajectory~\citep{wu2025greedy,nayak2025logarithmic,yang2025incentivize}.

On the other hand, much RL and LLM literature has increasingly explored strongly convex penalties \emph{other than} reverse KL to achieve distinct structural benefits.
For example, negative Shannon entropy is used in max-entropy exploration~\citep{ziebart2008entropy,neu2017entropy,haarnoja2018soft}; mixtures of reverse KL divergences are used to yield more diverse and less biased alignment~\citep{le2025multiple,aminian2025multiple}; the chi-squared divergence is known to provably mitigate overoptimization~\citep{huang2025chi-squared}; and Tsallis entropy can encourage sparse policy selection~\citep{lee2018tsallis,chow2018tsallis}.
Broader classes of regularizers -- such as $\alpha$-R\'enyi entropies~\citep{zhang2026renyi}, Csisz\'ar $f$-divergences~\citep{wang2024f-divergence,go2023f-divergence,xu2025f-divergence}, and expected or strongly convex regularizers~\citep{yang2019regularized,geist2019regularized} -- have been studied as well.

Because prior max-regret analyses rely entirely on the \emph{closed-form solution} of the KL-regularized bandit~\citep{ye2024general,wu2025greedy,nayak2025logarithmic}, it is unclear whether similar polylogarithmic regret can be attained for regularized games with non-KL regularizers.
This motivates our central theoretical question:
\begin{center}
    \emph{Is the fast regularized max-regret achievable under \underline{any} strongly convex regularizer beyond KL?}
\end{center}

\paragraph{The GBPM Abstraction and Theoretical Setup.}
To answer this, we require a theoretically tractable mathematical abstraction, analogous to the linear BTL model~\citep{bradleyterry1952,plackett1975}. To this end, we adopt the \textbf{\textit{Generalized Bilinear Preference Model (GBPM)}}~\citep{lee2025gl-lowpopart,zhang2025bilinear}: given \emph{item-wise} features $\bm\phi^1, \bm\phi^2 \in \sR^d$, the preference probability is modeled as $P^*(\bm\phi^1 \succ \bm\phi^2) := \mu\left( (\bm\phi^1)^\top \bm\Theta_\star \bm\phi^2 \right),$ where $\mu(\cdot)$ is a link function satisfying $\mu(z) + \mu(-z) = 1$, and $\bm\Theta_\star \in \sR^{d \times d}$ is a \emph{skew-symmetric} matrix of rank at most $2r < d$.
By considering such a contextual counterpart to the linear BTL model~\citep{zhu2023rlhf}, GBPM allows us to isolate the statistical complexity of online RLHF and rigorously analyze the impact of general regularizers.

\paragraph{Contributions.} 
Under the GBPM framework, we demonstrate that ``fast'' regret rates are achievable for \textbf{\textit{any}} strongly convex regularizer.
Our technical contributions are two-fold:
\begin{itemize}[leftmargin=*]\setlength{\itemsep}{0pt}
    \item \textbf{Quadratic Bound on Dual Gap.} We show that the dual gap of any greedy NE policy is upper bounded by the \textit{square} of the estimation error of $\bm\Theta_\star$ for \textbf{\textit{any}} strongly convex regularizer. Our analysis leverages the skew-symmetry of GBPM, the strong convexity of the regularized game objective, and the integral probability metric representation of the $\ell_1$-distance~\citep{muller1997ipm} to derive a novel, \emph{self-bounding quadratic inequality} \textbf{(\cref{sec:regularized})}.
    
    \item \textbf{Fast Max-Regrets.} We establish fast regret bounds using two different algorithms tailored to distinct regimes, both critically utilizing the novel quadratic error bound.
    These results are shown under a feature coverage assumption, which introduces an arm-set dependent quantity $C_{\min} > 0$ (\textbf{\Cref{sec:coverage}}).
    First, we prove that \texttt{Greedy Sampling (GS)} achieves \emph{polylogarithmic} regrets of $\widetilde{\gO}( \eta d^4 C_{\min}^{-1} (\log T)^2 \wedge d^2 C_{\min}^{-1/2} \sqrt{T})$ \textbf{(\cref{sec:regret-logarithmic})}.
    Second, to address the high-dimensional regime, we demonstrate that \texttt{Explore-Then-Commit (ETC)} with nuclear-norm regularized MLE achieves regrets of $\widetilde{\gO}\left( C_{\min}^{-2} \sqrt{\eta r T}\right)$ and $\tilde{\gO}(r^{1/3} C_{\min}^{-4/3} T^{2/3})$ \textbf{(\cref{sec:regret-high-dim})}.
\end{itemize}

\section{Problem Setting}
\label{sec:problem-setting}

We study contextual preference learning with contexts $\vx \in \gX$ and actions $\va \in \gA$. A policy is a mapping $\pi : \gX \to \Delta(\gA)$, where $\pi(\cdot \mid \vx)$ denotes the conditional distribution over actions given $\vx$. We denote the policy class by $\Pi$. Given two actions $\va^1,\va^2$ and a context $\vx$, the event $\va^1 \succ \va^2 \mid \vx$ means that response $\va^1$ is preferred to response $\va^2$ under context $\vx$.

\subsection{Generalized Bilinear Preference Model (GBPM)}
\label{sec:gbpm}
We first introduce the low-rank contextual general preference model that we consider in this work.
Define $\Skew(d; 2r, S)$ as
$\left\{ \bm\Theta \in \Skew(d) : \rank(\bm\Theta) \leq 2r, \bignormnuc{\bm\Theta} \leq S \right\}$,
where $\Skew(d) := \left\{ \bm\Theta \in \sR^{d \times d} : \bm\Theta^\top = -\bm\Theta \right\}.$ We assume a known feature map $\phi:\gX\times\gA\to\gB^d(1) \triangleq \{ \bm\phi \in \sR^d : \|\bm\phi\|_2 \leq 1 \}$.
Now the definition of \GBPM~\citep{zhang2025bilinear,lee2025gl-lowpopart}:

\begin{definition}[\textbf{\textit{Generalized Bilinear Preference Model}}]
\label{def:gbpm}
    Let rank $r \leq \floor{d/2}$ and norm bound $S > 0$ be known. For any $\vx \in \gX, \va^1, \va^2 \in \gA$, the ground-truth preference under \GBPM~is:
    \begin{equation}
        P^*(\va^1 \succ \va^2 \mid \vx) := \mu\left( \phi(\vx, \va^1)^\top \bm\Theta_\star \phi(\vx, \va^2) \right),
    \end{equation}
    where $\bm\Theta_\star \in \Skew(d; 2r, S)$ is an unknown \textbf{skew-symmetric, low-rank} matrix, and $\mu : \mathbb{R} \rightarrow [0, 1]$ is a known link satisfying: for some $L_\mu \geq \kappa > 0$,
    \begin{enumerate}
        \item $\mu$ is twice differentiable, monotone increasing, and symmetric ($\mu(z) + \mu(-z) = 1$).
        \item $\kappa \leq \dmu\left( \phi^\top \bm\Theta \phi' \right) \leq L_\mu, \ \left| \ddmu\left( \phi^\top \bm\Theta \phi' \right) \right| \leq L_\mu, \ \ \forall \bm\phi, \bm\phi' \in \gB^d(1), \bm\Theta \in \Skew(d; 2r, S)$.\footnote{We assume the same constant $L_\mu$ for the upper bounds of $\dmu$ and $|\ddmu|$ for simplicity of the exposition.}
    \end{enumerate}
\end{definition}

The conditions for $\mu$ are standard in logistic and generalized linear (dueling) bandits~\citep{faury2020logistic,abeille2021logistic,lee2024logistic,lee2024glm,lee2025gl-lowpopart,wu2024dueling,bengs2022dueling}.
The logistic link $\mu(z)=(1+e^{-z})^{-1}$ satisfies the above with $L_\mu=\frac{1}{4}$.
The linear link $\mu(z)=\tfrac12+z$ is also covered, in which case $L_\mu=1$~\citep{gajane2015dueling,wu2024dueling}.
Lastly, one can see that $P^*$ is anti-symmetric: $P^*(\va^1 \succ \va^2 \mid \vx) + P^*(\va^2 \succ \va^1 \mid \vx) = 1$.

\begin{remark}
    While recent works have introduced contextual bandit frameworks for general preference learning~\citep{yang2025incentivize,nayak2025logarithmic,wu2024dueling}, they predominantly rely on \emph{item pair-wise} feature maps (linearizing payoffs as $\langle \varphi(\va^1, \va^2), \vtheta \rangle$ for each pair of actions $\va^1, \va^2 \in \gA$) or tabular structures~\citep{odonoghue2021matrix}; the former is conceptually similar to \citet{wu2024dueling}, though the connection is not explicitly drawn in the literature.
    This contrasts with practical RLHF scenarios in which only \emph{item-wise} features $\phi(\va)$ are available, motivating our choice to consider \GBPM{}; see \citet{zhang2025bilinear} and \citet[Appendix K]{lee2025gl-lowpopart} for more detailed discussions.
\end{remark}

\subsection{Population Regularized Game and Nash Equilibrium}
\label{sec:game}

We first extend the action-level preference to evaluate policies. The ground-truth preference of $\pi^1$ (\emph{\color{blue}max-player}) over $\pi^2$ (\emph{\color{red}min-player}) given a context $\vx$ is defined by marginalizing over their action distributions, defined as 
$P^*(\pi^1 \succ \pi^2 \mid \vx) := \E_{{\va^1 \sim \pi^1(\cdot \mid \vx) , \, \va^2 \sim \pi^2(\cdot \mid \vx)}} \left[ P^*(\va^1 \succ \va^2 \mid \vx) \right]$.

To evaluate these policies globally, we take the expectation over the unknown context distribution $d_0 \in \Delta(\gX)$. For any parameter $\bm\Theta \in \Skew(d)$, we define the expected population game objective as:
\begin{equation}
    J(\pi^1, \pi^2; \bm\Theta) := \E_{\vx \sim d_0} \E_{\va^i \sim \pi^i(\cdot \mid \vx)} \left[ \mu\left(\phi(\vx, \va^1)^\top \bm\Theta \phi(\vx, \va^2)\right) \right].
\end{equation}

The ground-truth population preference is then precisely $\E_{\vx \sim d_0} [P^*(\pi^1 \succ \pi^2 \mid \vx)] = J(\pi^1, \pi^2; \bm\Theta_\star)$. For notational convenience, we abbreviate this true objective as $J(\pi^1, \pi^2)$, and when analyzing realized features, we utilize the shorthand $J(\bm\phi^1, \bm\phi^2; \bm\Theta) := \mu\left((\bm\phi^1)^\top \bm\Theta \bm\phi^2\right)$.
We now introduce its regularized counterpart.
For $\eta \in (0, \infty]$ and a $\beta^{-1}$-strongly convex regularizer $\psi : \Delta(\gA) \rightarrow \sR_{\geq 0}$ w.r.t. $\bignorm{\cdot}_1$, we define a \emph{symmetric}, regularized game objective $J_{\eta} : \Pi \times \Pi \rightarrow \sR$ as follows:
\begin{equation}
    J_{\eta}(\pi, \pi'; \bm\Theta) := J(\pi, \pi'; \bm\Theta) - \eta^{-1} \E_{\vx \sim d_0}[\psi(\pi(\cdot \mid \vx))] + \eta^{-1} \E_{\vx \sim d_0}[\psi(\pi'(\cdot \mid \vx))].
\end{equation}

Standard solution concepts (e.g., Condorcet winners) may not exist in general preference learning~\citep{dudik2015dueling,bengs2021survey,munos2024nash,swamy2024minimaximalist}.
Thus, as in many recent literature in online RLHF~\citep{munos2024nash}, we consider \emph{(regularized) Nash Equilibrium (NE)}~\citep{nash1951,mckelvey-palfrey}:
\begin{definition}
    A pair $(\pi^1_\star, \pi^2_\star) \in \Pi \times \Pi$ is a \textbf{Nash equilibrium (NE)} if for all $\pi^1, \pi^2 \in \Pi$:
    \begin{equation}
        J_{\eta}(\pi^1, \pi^2_\star) \leq J_{\eta}(\pi^1_\star, \pi^2_\star) \leq J_{\eta}(\pi^1_\star, \pi^2).
    \end{equation}
    If $\pi^1_\star = \pi^2_\star =: \pi_\star$, we refer to $\pi_\star$ as a \textbf{symmetric NE (SNE)}.
    By the minimax theorem~\citep{vonNeumann1928,sion1958minimax}, any \textbf{SNE} $\pi_\star$ is equivalently characterized as follows:
    \begin{equation}
        \pi^\star \in \argmax_{\pi^1 \in \Pi} \min_{\pi^2 \in \Pi} J_{\eta}(\pi^1, \pi^2).
    \end{equation}
\end{definition}

\subsection{Online Interaction Protocol and Regularized Max-Regret}
\label{sec:regret-def}

The online contextual RLHF protocol proceeds as follows: At each $t = 1, \dots, T$, a context $\vx_t \sim d_0$ is revealed. The learner chooses policies ${\color{blue}\hat{\pi}_t^1(\cdot | \vx_t)}$ and ${\color{red}\hat{\pi}_t^2(\cdot | \vx_t)}$, samples actions ${\color{blue}\va_t^1}$ and ${\color{red}\va_t^2}$, and receives a bandit feedback $r_t \sim \Ber(P^*({\color{blue}\va_t^1} \succ {\color{red}\va_t^2} \mid \vx_t))$.
This constitutes a contextual symmetric two-player zero-sum game~\citep{balduzzi2019open} with bandit feedback.
Note that this is a \emph{self-play} framework, as the learner controls \emph{both} players to learn by playing against itself to compute a NE.

We evaluate any resulting policy sequence $\{ ({\color{blue}\hat{\pi}_t^1}, {\color{red}\hat{\pi}_t^2}) \}_{t \in [T]}$ using \textit{\textbf{regularized max-regret}}:
\begin{equation}
    \MBRReg_{\eta}(T) := \sum_{t=1}^T \DGap_{\eta}({\color{blue}\hat{\pi}_t^1}), \quad
    \DGap_{\eta}({\color{blue}\hat{\pi}_t^1}) := \frac{1}{2} - \min_{\pi^2 \in \Pi} J_{\eta}({\color{blue}\hat{\pi}_t^1}, \pi^2),
\end{equation}
where we refer to $\DGap_{\eta}({\color{blue}\hat{\pi}_t^1})$ as the \textbf{(symmetric) dual gap} of a policy ${\color{blue}\hat{\pi}_t^1} \in \Pi$, which quantifies how close $\DGap_{\eta}({\color{blue}\hat{\pi}_t^1})$ is to an \SNE.

\begin{remark}
    While there are multiple definitions of ``regret'' in a two-player zero-sum game, our work focuses on max-regret (specifically, the Max-Best-Response regret).
    As discussed in \Cref{sec:introduction}, this has been widely considered in self-play RL in two-player zero-sum games~\citep{bai2020self-play,bai2020self-play2,liu2021self-play,jin2022exploiter,xiong2022self-play} and theoretical analyses of online RLHF under general preferences~\citep{ye2024general,wu2025greedy}.
    Notably, the max-regret can be converted to a sample 
    for finding an NE via online-to-batch conversion~\citep{freund-schapire}.
    We refer the reader to Appendix~\ref{app:other-regrets} for a detailed discussion of alternative regret definitions.
\end{remark}

\section{A New Analysis of Regularized Regret}
\label{sec:regularized}

We first present our main technical contribution: a novel bound on the instantaneous dual gap of any greedy NE policy.
Denoting $\bm\phi \sim \pi$ as sampling a $\bm\phi(\vx, \va) \in \gB^d(1)$ from $\va \sim \pi(\cdot \mid \vx)$ and $\vx \sim d_0$:

\begin{theorem}
\label{thm:regularized}
    For \underline{any} estimator $\widehat{\bm\Theta}_t \in \Skew(d)$ at time $t$, define the max-player's policy as
    \begin{equation}
    \label{eqn:NE}
        \hat{\pi}_t \gets \argmax_{\pi^1} \min_{\pi^2} J_{\eta}(\pi^1, \pi^2; \widehat{\bm\Theta}_t).
    \end{equation}
    Then, the instantaneous dual gap is bounded as follows: denoting $\mE_t := \bm\Theta_\star - \widehat{\bm\Theta}_t$,
    \begin{equation}
        \DGap_{\eta}({\color{blue}\hat{\pi}_t}) \leq L_\mu \min\left\{ (2 L_\mu \eta \beta + 1) \E_{\bm\phi \sim {\color{blue}\hat{\pi}_t}}\left[ \bignorm{\mE_t \bm\phi}_2^2 \right], \ \sqrt{\E_{\bm\phi}\left[ \bignorm{\mE \bm\phi}_2^2 \right]} + \frac{1}{2} \E_{\bm\phi}\left[ \bignorm{\mE \bm\phi}_2^2 \right] \right\}.
    \end{equation}
\end{theorem}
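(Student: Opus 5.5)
The plan is to reduce the dual gap to the difference between the true and estimated objectives, evaluated against the true best response to $\hat\pi_t$, and then control that difference in two ways. I write $\hat J_\eta := J_\eta(\cdot,\cdot;\widehat{\bm\Theta}_t)$ and $J_\eta := J_\eta(\cdot,\cdot;\bm\Theta_\star)$.

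\textbf{Step 1: the value of any skew-symmetric game is $1/2$.} For any skew-symmetric $\bm\Theta$, the symmetry $\mu(z)+\mu(-z)=1$ and $\bm\phi^\top\bm\Theta\bm\phi' = -\bm\phi'^\top\bm\Theta\bm\phi$ give $J_\eta(\pi,\pi;\bm\Theta)=1/2$ for every $\pi$. Hence the value of the estimated game is $1/2$. Since $\hat\pi_t$ is a maximin policy for $\hat J_\eta$, we get $\hat J_\eta(\hat\pi_t,\pi^2)\ge 1/2$ for all $\pi^2$, with equality at $\pi^2=\hat\pi_t$. In other words, $\hat\pi_t$ is a best response to itself in the estimated game.

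\textbf{Step 2: reduce the dual gap to a difference of objectives.} Let $\pi^2_\star\in\argmin_{\pi^2} J_\eta(\hat\pi_t,\pi^2)$ be the true best response. Then
\[
\DGap_\eta(\hat\pi_t) \le \hat J_\eta(\hat\pi_t,\pi^2_\star) - J_\eta(\hat\pi_t,\pi^2_\star) = g(\pi^2_\star),
\]
where
\[
g(\pi^2) := \E\bigl[\mu(\bm\phi^{1\top}\widehat{\bm\Theta}_t\bm\phi^2)-\mu(\bm\phi^{1\top}\bm\Theta_\star\bm\phi^2)\bigr], \qquad \bm\phi^1\sim\hat\pi_t,\ \bm\phi^2\sim\pi^2 .
\]
The regularizers cancel in this difference. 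Throughout I use $\bm\phi^{1\top}\mE_t\bm\phi = -(\mE_t\bm\phi^1)^\top\bm\phi$, so that $|\bm\phi^{1\top}\mE_t\bm\phi|\le\|\mE_t\bm\phi^1\|_2$ for any $\bm\phi\in\gB^d(1)$. This is why only $\hat\pi_t$-moments of $\mE_t$ appear in the statement.

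\textbf{Step 3 (the $\sqrt{\cdot}$ branch).} Taylor-expand $\mu$ to second order around $\bm\phi^{1\top}\bm\Theta_\star\bm\phi^2$. The first-order term is bounded by $L_\mu\E\|\mE_t\bm\phi^1\|_2\le L_\mu\sqrt{\E\|\mE_t\bm\phi\|_2^2}$ by Cauchy--Schwarz. The second-order term is bounded by $\frac{L_\mu}{2}\E\|\mE_t\bm\phi\|_2^2$. Together these give the second expression in the minimum.

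\textbf{Step 4 (the quadratic branch).} This is the key step. Per context $\vx$, define
\[
h_\vx(\va) := \E_{\va^1\sim\hat\pi_t(\cdot\mid\vx)}\bigl[\mu(\bm\phi^{1\top}\widehat{\bm\Theta}_t\phi(\vx,\va))-\mu(\bm\phi^{1\top}\bm\Theta_\star\phi(\vx,\va))\bigr].
\]
By the mean value theorem, $\|h_\vx\|_\infty\le L_\mu\,\E_{\hat\pi_t}[\|\mE_t\bm\phi^1\|_2\mid\vx]$. By Step 1, $g(\hat\pi_t)=1/2-1/2=0$. Using the IPM (duality) representation of $\|\cdot\|_1$, I then write
\[
g(\pi^2_\star) = \E_\vx\langle h_\vx,\pi^2_\star(\cdot\mid\vx)-\hat\pi_t(\cdot\mid\vx)\rangle \le \E_\vx\bigl[\|h_\vx\|_\infty\,\|\pi^2_\star-\hat\pi_t\|_1\bigr].
\]
Next I bound the policy distance by strong convexity. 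For fixed first argument, the map $\pi^2\mapsto J_\eta(\hat\pi_t,\pi^2;\bm\Theta)$ is linear plus $\eta^{-1}\psi$, so it is $(\eta\beta)^{-1}$-strongly convex per context. The policy $\pi^2_\star$ minimizes the true version and $\hat\pi_t$ minimizes the estimated version (Step 1). Adding the two first-order optimality conditions and using strong monotonicity gives
\[
\tfrac{2}{\eta\beta}\|\pi^2_\star-\hat\pi_t\|_1^2 \le \langle h_\vx,\hat\pi_t-\pi^2_\star\rangle,
\]
hence $\|\pi^2_\star-\hat\pi_t\|_1\lesssim\eta\beta\|h_\vx\|_\infty$. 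Substituting back yields $g(\pi^2_\star)\lesssim\eta\beta L_\mu^2\,\E(\E[\|\mE_t\bm\phi\|_2\mid\vx])^2\le\eta\beta L_\mu^2\,\E\|\mE_t\bm\phi\|_2^2$ by Jensen. The extra $L_\mu\E\|\mE_t\bm\phi\|_2^2$ (the ``$+1$'' in $2L_\mu\eta\beta+1$) I expect to absorb from a second-order remainder. That remainder appears if one linearizes $h_\vx$ through $\dot\mu$ before applying the IPM bound, or if one compares against $\hat\pi_t$ via the curvature bound $|\ddmu|\le L_\mu$.

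\textbf{Main obstacle.} The delicate part is Step 4: doing the strong-convexity comparison rigorously per context on the simplex, where $\psi$ may be nonsmooth or the optimum may lie on the boundary. I plan to use subgradient variational inequalities rather than gradients. I also need to track constants so that exactly $2L_\mu\eta\beta+1$ emerges. A secondary point is the exchange of the context expectation with the Cauchy--Schwarz and Jensen steps.
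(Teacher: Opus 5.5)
Your proposal is correct. In the quadratic branch it takes a genuinely different route from the paper.

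\textbf{How the paper gets the quadratic branch.} The paper first Taylor-expands $\mu$. It then cancels only the \emph{first-order} term against $\hat{\pi}_t$ (the identity $Z=0$) and bounds that term by $L_\mu D\sqrt{A}$, where $A:=\E_{\bm\phi\sim\hat{\pi}_t}[\|\mE_t\bm\phi\|_2^2]$. Next it uses strong convexity of the \emph{true} best-response problem alone, as quadratic growth at $\tilde{\pi}=\pi^2_\star$ (\Cref{lem:strongly-convex}, i.e.\ $D^2\le 2\eta\beta\,\DGap_\eta(\hat{\pi}_t)$). This gives the self-bounding inequality \eqref{eqn:self-bounding}. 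The additive $L_\mu A$ (your ``$+1$'') comes from the Taylor remainder $(b)$, which the paper never cancels.

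\textbf{How your route differs.} You never linearize. You use that $g$ is linear with $g(\hat{\pi}_t)=0$, and that $\hat{\pi}_t$ is a best response to itself in the \emph{estimated} game. This lets you compare two minimizers of strongly convex objectives that differ by a linear term, instead of bounding $D$ by the dual gap.

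\textbf{What this buys you.} It removes the self-bounding step and gives $\DGap_\eta(\hat{\pi}_t)\le L_\mu^2\eta\beta A$. This is strictly smaller than the stated $L_\mu(2L_\mu\eta\beta+1)A$, and unlike it, vanishes as $\eta\to 0$. So there is no ``$+1$'' to recover; the theorem follows a fortiori. In exchange, the paper's route needs strong convexity of only one game and gets both branches from one inequality.

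Your Step 3 is also slightly cleaner. You bound the first-order term directly by $L_\mu\E\|\mE_t\bm\phi\|_2$. Routing it through $D$, as the paper does, only has the a priori bound $D\le 2$ (an $\ell_1$ distance between distributions), which would cost a factor $2$ in the $\sqrt{\cdot}$ term.

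\textbf{Three small points.}
\begin{enumerate}
\item The diagonal identity $J_\eta(\pi,\pi;\bm\Theta)=\tfrac12$ alone gives only $\max_{\pi^1}\min_{\pi^2}\le\tfrac12\le\min_{\pi^2}\max_{\pi^1}$. To conclude the value is exactly $\tfrac12$ you still need the minimax theorem, as the paper invokes.
\item Your strong-monotonicity inequality should read $\frac{1}{\eta\beta}\|\pi^2_\star-\hat{\pi}_t\|_1^2\le\langle h_{\vx},\pi^2_\star-\hat{\pi}_t\rangle$. The modulus is $1/(\eta\beta)$, not $2/(\eta\beta)$, and the sign is the opposite. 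Neither matters after H\"older.
\item Your ``main obstacle'' disappears if you drop subgradients and argue on the context-averaged objectives, as in \Cref{app:convexity}. Quadratic growth at each minimizer gives $J_\eta(\hat{\pi}_t,\hat{\pi}_t)-J_\eta(\hat{\pi}_t,\pi^2_\star)\ge\frac{D^2}{2\eta\beta}$ and $\hat J_\eta(\hat{\pi}_t,\pi^2_\star)-\hat J_\eta(\hat{\pi}_t,\hat{\pi}_t)\ge\frac{D^2}{2\eta\beta}$. Adding these, the regularizers cancel and you get $\frac{D^2}{\eta\beta}\le g(\pi^2_\star)\le\sqrt{\E_{\vx}\|h_{\vx}\|_\infty^2}\,D$. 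Hence $g(\pi^2_\star)\le\eta\beta\,\E_{\vx}\|h_{\vx}\|_\infty^2\le L_\mu^2\eta\beta A$ by Jensen. This needs only convexity of $\Pi$: no per-context decoupling and no smoothness of $\psi$.
\end{enumerate}
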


Crucially, this bound holds for \underline{\emph{any}} choice of estimator and \underline{\emph{any}} $\beta^{-1}$-strongly convex regularizer $\psi(\cdot)$. As long as $\eta < \infty$ (i.e., the regularization by $\psi$ exists), the instantaneous dual gap is bounded \emph{quadratically} by the expected estimation error of $\bm\Theta_\star$ \emph{along the features} of ${\color{blue}\hat{\pi}_t}$. As we will see in the proof, without strong convexity, one only recovers a linear dependence $\E_{\bm\phi}[\bignorm{\mE_t \bm\phi}_2]$.

\subsection{\texorpdfstring{Proof of \cref{thm:regularized}}{Proof of Theorem 3.1}}

\paragraph{Regret Decomposition and Taylor Expansion.}
For simplicity let us omit dependencies on $t$.
Let $\tilde{\pi} = \argmin_{\pi \in \Pi} J_{\eta}(\hat{\pi}, \pi)$ be the min-player's best response to $\hat{\pi}$ with respect to the true objective.
We denote the \emph{\color{brown}dual gap} of $\hat{\pi}$ as ${\color{brown}X} := \frac{1}{2} - J_{\eta}(\hat{\pi}, \tilde{\pi})$.
Decomposing the regret, we have that
\begin{align}
    {\color{brown}X} = J_{\eta}(\hat{\pi}, \tilde{\pi}; \widehat{\bm\Theta}) - J_{\eta}(\hat{\pi}, \tilde{\pi}) + \frac{1}{2} - J_{\eta}(\hat{\pi}, \tilde{\pi}; \widehat{\bm\Theta})
    &\overset{(*)}{\leq} J_{\eta}(\hat{\pi}, \tilde{\pi}; \widehat{\bm\Theta}) - J_{\eta}(\hat{\pi}, \tilde{\pi}) \\
    &= J(\hat{\pi}, \tilde{\pi}; \widehat{\bm\Theta}) - J(\hat{\pi}, \tilde{\pi}), \tag{regularization terms cancel out}
\end{align}
where the inequality $(*)$ holds due to the following reasoning.
First, we establish the following important property of symmetric game:
\begin{lemma}
\label{lem:symmetric}
    For any $\widehat{\bm\Theta} \in \Skew(d)$, the value of the game, $\max_{\pi^1} \min_{\pi^2} J_{\eta}(\pi^1, \pi^2; \widehat{\bm\Theta})$, is $\frac{1}{2}$.
\end{lemma}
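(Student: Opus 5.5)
The plan is to exploit the antisymmetry of the regularized game and then apply the minimax theorem. The first step is to show that for every pair of policies and every $\widehat{\bm\Theta} \in \Skew(d)$,
\begin{equation}
    J_{\eta}(\pi^1, \pi^2; \widehat{\bm\Theta}) + J_{\eta}(\pi^2, \pi^1; \widehat{\bm\Theta}) = 1 .
\end{equation}
For the preference term, skew-symmetry gives $\bm\phi^\top \widehat{\bm\Theta} \bm\phi' = -\bm\phi'^\top \widehat{\bm\Theta} \bm\phi$ pointwise. The symmetry of the link, $\mu(z) + \mu(-z) = 1$, then yields $\mu(\bm\phi^\top \widehat{\bm\Theta}\bm\phi') + \mu(\bm\phi'^\top \widehat{\bm\Theta}\bm\phi) = 1$. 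Taking expectations over $\vx \sim d_0$ and over the independent actions $\va^1 \sim \pi^1$, $\va^2 \sim \pi^2$ (Fubini) gives $J(\pi^1,\pi^2;\widehat{\bm\Theta}) + J(\pi^2,\pi^1;\widehat{\bm\Theta}) = 1$. In the definition of $J_\eta$, the regularization terms enter as $-\eta^{-1}\psi(\pi) + \eta^{-1}\psi(\pi')$, so swapping the arguments flips their sign and they cancel in the sum. Here we use $\psi \geq 0$ and assume the expectations are finite (for $\eta = \infty$ the terms vanish).

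The second step is to bound the lower and upper values. For the lower value, take any $\pi^1$ and choose $\pi^2 = \pi^1$. The identity gives $J_\eta(\pi^1,\pi^1;\widehat{\bm\Theta}) = \tfrac12$, so $\min_{\pi^2} J_\eta(\pi^1,\pi^2;\widehat{\bm\Theta}) \le \tfrac12$ for all $\pi^1$. Hence $\max_{\pi^1}\min_{\pi^2} J_\eta \le \tfrac12$. For the upper value, take any $\pi^2$ and choose $\pi^1 = \pi^2$, which gives $\max_{\pi^1} J_\eta(\pi^1,\pi^2;\widehat{\bm\Theta}) \ge \tfrac12$. Hence $\min_{\pi^2}\max_{\pi^1} J_\eta \ge \tfrac12$.

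The third step is to close the gap by showing that max-min equals min-max; this is the only nontrivial step. The objective $J_\eta$ is bilinear in $(\pi^1,\pi^2)$ in the preference part, plus $-\eta^{-1}\E[\psi(\pi^1)]$, which is concave in $\pi^1$, and $+\eta^{-1}\E[\psi(\pi^2)]$, which is convex in $\pi^2$. So $J_\eta$ is concave–convex. The policy class is a product over contexts of simplices, which is convex and compact when $\gA$ is finite. $J_\eta$ is continuous because $\psi$ is convex on the simplex (assumed lower semicontinuous; strong convexity on a compact domain makes it bounded). Sion's minimax theorem, which the paper already invokes, therefore gives $\max\min = \min\max$, and combining with the second step the common value is $\tfrac12$.

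One can also avoid needing the minimax equality by using an SNE directly. If $\pi_\star$ is an SNE, then $\min_{\pi^2} J_\eta(\pi_\star,\pi^2) = J_\eta(\pi_\star,\pi_\star) = \tfrac12$, which attains the upper bound $\tfrac12$ from the second step. The existence of such an SNE is itself guaranteed by the same concave–convex compactness argument, so that is the point where the regularity assumptions on $\psi$ and $\Pi$ must be checked.
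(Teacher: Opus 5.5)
Your proof is correct and follows essentially the paper's route. The paper cites the existence of a symmetric NE (\citet[Lemma 2.1]{swamy2024minimaximalist}) together with the minimax theorem. Your identity $J_{\eta}(\pi^1,\pi^2;\widehat{\bm\Theta})+J_{\eta}(\pi^2,\pi^1;\widehat{\bm\Theta})=1$ is what justifies the paper's unstated claim that an SNE has value $\frac{1}{2}$, and your sandwich $\max\min\le\frac{1}{2}\le\min\max$ plus Sion spells out the minimax step. For infinite $\gX$, the objective is an expectation over $\vx\sim d_0$ of a per-context game on $\Delta(\gA)$, so Sion's hypotheses are cleanest to check context by context rather than on the whole product of simplices.
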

\begin{proof}
    For any $\widehat{\bm\Theta} \in \Skew(d)$, there always exists a \SNE~\citep[Lemma 2.1]{swamy2024minimaximalist}.
    As the game value of \SNE~is $\frac{1}{2}$, it must be so for any \textbf{NE}~\citep{vonNeumann1928,sion1958minimax}.
\end{proof}
With this, we have that $\frac{1}{2} = \min_{\pi \in \Pi} J_{\eta}(\hat{\pi}, \pi; \widehat{\bm\Theta})$ for any given $\widehat{\bm\Theta} \in \Skew(d)$.
Then, it is easy to see that $\min_{\pi \in \Pi} J_{\eta}(\hat{\pi}, \pi; \widehat{\bm\Theta}) \leq J_{\eta}(\hat{\pi}, \tilde{\pi}; \widehat{\bm\Theta})$.

Let us denote $\E := \E_{\vx \sim d_0}\E_{\bm\phi \sim \hat{\pi}(\cdot \mid \vx), \tilde{\bm\phi} \sim \tilde{\pi}(\cdot \mid \vx)}$ when clear from the context.
Inspired by the regret analyses of logistic and generalized linear bandits~\citep{abeille2021logistic,lee2024glm,lee2024logistic}, applying a \textbf{\emph{\color{teal}Taylor expansion with integral remainder}} yields:
\begin{align}
    &J(\hat{\pi}, \tilde{\pi}; \widehat{\bm\Theta}) - J(\hat{\pi}, \tilde{\pi}) \\
    & = \underbrace{- \E\left[ \dmu(\bm\phi^\top \bm\Theta_\star \tilde{\bm\phi}) \bm\phi^\top \mE \tilde{\bm\phi} \right]}_{(a)} 
    + \underbrace{\E\left[ \left[ \int_0^1 (1 - z) \ddmu\left( \bm\phi^\top \left( \bm\Theta_\star - z \mE \right) \tilde{\bm\phi} \right) dz \right] (\bm\phi^\top \mE \tilde{\bm\phi})^2 \right]}_{(b)}.
\end{align}

\paragraph{Bounding the First-Order Term $(a)$.}
There are two key technical lemmas that are crucial in obtaining the self-bounding inequality later.

The first lemma relates $(a)$ to ${\color{magenta}D} \coloneq \sqrt{{\color{magenta}\E_{\vx \sim d_0}\left[ \bignorm{\hat{\pi}(\cdot \mid \vx) - \tilde{\pi}(\cdot \mid \vx)}_1^2 \right]}}$.
Its proof, which combines skew-symmetry and the variational representation of the $\ell_1$-norm~\citep{muller1997ipm}, is presented at the end of this subsection:
\begin{lemma}
\label{lem:anti-symmetry}
    $\left| \E\left[ \dmu(\bm\phi^\top \bm\Theta_\star \tilde{\bm\phi}) \bm\phi^\top \mE \tilde{\bm\phi} \right] \right| \leq L_\mu {\color{magenta}D} \sqrt{ \E_{\bm\phi}\left[ \bignorm{\mE \bm\phi}_2^2 \right]}.$
\end{lemma}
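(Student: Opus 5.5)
The plan is to exploit the \emph{antisymmetry} of the integrand, which comes from the skew-symmetry of both $\bm\Theta_\star$ and $\mE$. We then convert the expectation over $\tilde{\pi}$ into an expectation over the signed measure $\tilde{\pi} - \hat{\pi}$, and bound that via the variational ($\ell_1$/$\ell_\infty$ duality) form of the $\ell_1$-norm. Throughout, the context $\vx$ is held fixed first, and we take $\E_{\vx \sim d_0}$ only at the end.

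\textbf{Step 1: antisymmetry.} Define $g(\bm\phi, \bm\phi') := \dmu(\bm\phi^\top \bm\Theta_\star \bm\phi')\, \bm\phi^\top \mE \bm\phi'$. The symmetry $\mu(z) + \mu(-z) = 1$ gives, after differentiating, $\dmu(z) = \dmu(-z)$, so $\dmu$ is even. Since $\bm\Theta_\star \in \Skew(d)$, we have $\bm\phi^\top \bm\Theta_\star \bm\phi' = -\bm\phi'^\top \bm\Theta_\star \bm\phi$. Moreover, $\mE = \bm\Theta_\star - \widehat{\bm\Theta}$ is skew-symmetric because both terms are, so $\bm\phi^\top \mE \bm\phi' = -\bm\phi'^\top \mE \bm\phi$. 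Together these give $g(\bm\phi, \bm\phi') = -g(\bm\phi', \bm\phi)$. Consequently, for a fixed $\vx$ and two independent draws $\bm\phi, \bm\phi' \sim \hat{\pi}(\cdot \mid \vx)$, exchangeability yields $\E[g(\bm\phi, \bm\phi')] = 0$.

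\textbf{Step 2: centering.} Subtracting this zero term, for each $\vx$ we obtain
\begin{equation}
\E_{\bm\phi \sim \hat{\pi}, \tilde{\bm\phi} \sim \tilde{\pi}}[g(\bm\phi, \tilde{\bm\phi})]
= \E_{\bm\phi \sim \hat{\pi}(\cdot\mid\vx)}\Big[ \textstyle\sum_{\va} \big(\tilde{\pi}(\va \mid \vx) - \hat{\pi}(\va \mid \vx)\big)\, g(\bm\phi, \phi(\vx, \va)) \Big].
\end{equation}
For fixed $\bm\phi$, the inner sum is bounded by $\bignorm{\tilde{\pi}(\cdot\mid\vx) - \hat{\pi}(\cdot\mid\vx)}_1 \max_{\va} |g(\bm\phi, \phi(\vx, \va))|$. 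This is the IPM/duality representation of the $\ell_1$-distance.

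\textbf{Step 3: bounding the test function.} Fix any $\bm\phi' \in \gB^d(1)$. Because $\bm\Theta_\star \in \Skew(d; 2r, S)$, \Cref{def:gbpm} gives $\dmu(\bm\phi^\top \bm\Theta_\star \bm\phi') \leq L_\mu$. Skew-symmetry gives $|\bm\phi^\top \mE \bm\phi'| = |(\mE \bm\phi)^\top \bm\phi'| \leq \bignorm{\mE \bm\phi}_2$. Hence $|g(\bm\phi, \bm\phi')| \leq L_\mu \bignorm{\mE\bm\phi}_2$, and this bound no longer involves $\tilde{\pi}$ at all. It is essential that the remaining factor depends only on the $\hat{\pi}$-feature $\bm\phi$, and that is exactly what the centering in Step 2 buys us.

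\textbf{Step 4: Cauchy--Schwarz.} Taking $\E_{\vx}$ and then applying Cauchy--Schwarz over $\vx$ gives
\begin{equation}
\left|\E[g]\right| \leq L_\mu \E_{\vx}\Big[\bignorm{\hat{\pi} - \tilde{\pi}}_1 \E_{\bm\phi \sim \hat{\pi}(\cdot\mid\vx)}\bignorm{\mE\bm\phi}_2\Big] \leq L_\mu D \sqrt{\E_{\vx}\big(\E_{\bm\phi}\bignorm{\mE\bm\phi}_2\big)^2}.
\end{equation}
Jensen's inequality, $(\E \bignorm{\mE\bm\phi}_2)^2 \leq \E \bignorm{\mE\bm\phi}_2^2$, then yields the claimed bound $L_\mu D \sqrt{\E_{\bm\phi}[\bignorm{\mE\bm\phi}_2^2]}$.

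The main obstacle is conceptual rather than computational: recognizing that the first-order term can be recentered by $\hat{\pi}$ against itself. The recentering works only because of the joint antisymmetry of $\dmu(\bm\phi^\top\bm\Theta_\star\cdot)$ and $\bm\phi^\top\mE\,\cdot$, and it produces the factor $D$. That factor is what later enables the self-bounding inequality via strong convexity of $\psi$. The remaining care points are minor: checking that $\mE$ is skew-symmetric, and taking the maximum over actions uniformly (or a supremum, if $\gA$ is infinite).
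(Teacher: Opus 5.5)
Your proposal is correct and follows essentially the same route as the paper. You recentre by the zero-mean term obtained by drawing both features from $\hat{\pi}(\cdot \mid \vx)$, bound the resulting signed-measure integral by $L_\mu \bignorm{\mE\bm\phi}_2$ times the $\ell_1$-distance, and finish with Cauchy--Schwarz over $\vx$ and Jensen.

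Your version is slightly more careful than the paper's in two places. First, you make explicit that the antisymmetry needs evenness of $\dmu$ together with skew-symmetry of $\bm\Theta_\star$, not just of $\mE$. Second, your direct H\"older bound over actions avoids the paper's stated equality $(**)$. That step is in general only an inequality, though in the harmless direction, since pushing $\hat{\pi}(\cdot\mid\vx)-\tilde{\pi}(\cdot\mid\vx)$ forward to feature space can only shrink its $\ell_1$-norm.
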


The second lemma, whose proof is deferred to \Cref{app:convexity}, allows for us to bound ${\color{magenta}D}$ with the dual gap ${\color{brown}X}$, other than the na\"{i}ve bound of ${\color{magenta}D} \leq 1$:
\begin{lemma}
\label{lem:strongly-convex}
    ${\color{brown}X} = \frac{1}{2} - \E_{\vx \sim d_0}[J_{\eta}(\hat{\pi}, \tilde{\pi} \mid \vx)] \ge (2 \eta \beta)^{-1} {\color{magenta}D}.$
\end{lemma}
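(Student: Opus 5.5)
The approach is to compare, context by context, the min-player's objective at the true best response $\tilde{\pi}$ with its value at $\hat{\pi}$ itself. The former gives the dual gap ${\color{brown}X}$ after averaging over contexts, and the latter equals exactly $\frac12$ by skew-symmetry. Strong convexity of the min-player's objective then converts the gap between these two values into a squared $\ell_1$ distance. I therefore expect the natural output of the argument to be ${\color{brown}X} \ge (2\eta\beta)^{-1}\,\E_{\vx\sim d_0}\bigl[\bignorm{\hat{\pi}(\cdot\mid\vx)-\tilde{\pi}(\cdot\mid\vx)}_1^2\bigr] = (2\eta\beta)^{-1}{\color{magenta}D}^2$, i.e.\ the statement read with ${\color{magenta}D}$ in its squared form.

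A bound linear in ${\color{magenta}D}$ cannot hold in general. Near an \SNE{} the dual gap is quadratic in the perturbation of $\hat{\pi}$, while ${\color{magenta}D}$ is linear in it. The quadratic form is also exactly what is needed downstream: combined with \cref{lem:anti-symmetry}, it yields the self-bounding inequality ${\color{brown}X} \lesssim \sqrt{2\eta\beta {\color{brown}X}}\,L_\mu\sqrt{\E\|\mE\bm\phi\|_2^2}+\dots$. So I will prove the squared version.

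\textbf{Step 1 (decoupling over contexts).} Fix $\hat{\pi}$ and, for each $\vx$, define
\[
f_{\vx}(p) := \E_{\va^1\sim\hat{\pi}(\cdot\mid\vx),\,\va^2\sim p}\bigl[\mu(\phi(\vx,\va^1)^\top\bm\Theta_\star\phi(\vx,\va^2))\bigr] - \eta^{-1}\psi(\hat{\pi}(\cdot\mid\vx)) + \eta^{-1}\psi(p), \qquad p\in\Delta(\gA).
\]
Then $J_{\eta}(\hat{\pi},\pi)=\E_{\vx\sim d_0}[f_{\vx}(\pi(\cdot\mid\vx))]$. Since $\Pi$ consists of all maps $\gX\to\Delta(\gA)$, minimizing over $\pi$ decouples across contexts, so $\tilde{\pi}(\cdot\mid\vx)\in\argmin_{p} f_{\vx}(p)$ for $d_0$-a.e.\ $\vx$. (Measurability of the selection is routine; uniqueness follows from strict convexity.)

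\textbf{Step 2 (strong convexity and first-order optimality).} The map $p\mapsto f_{\vx}(p)$ is the sum of a function that is linear in $p$ and $\eta^{-1}\psi$. Hence it is $(\eta\beta)^{-1}$-strongly convex with respect to $\|\cdot\|_1$ on the simplex. At the minimizer $\tilde p=\tilde{\pi}(\cdot\mid\vx)$, the first-order optimality condition gives $\langle g,q-\tilde p\rangle\ge 0$ for some subgradient $g$ and all $q\in\Delta(\gA)$. Strong convexity then yields
\[
f_{\vx}(q)-f_{\vx}(\tilde p)\ge \tfrac{1}{2\eta\beta}\|q-\tilde p\|_1^2 \quad\text{for every } q.
\]
I apply this with $q=\hat{\pi}(\cdot\mid\vx)$.

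\textbf{Step 3 (self-play value equals $\tfrac12$).} I claim $f_{\vx}(\hat{\pi}(\cdot\mid\vx))=\tfrac12$. The regularizer terms cancel because both arguments are the same distribution. For the preference term, swapping the roles of the two i.i.d.\ draws $\bm\phi,\bm\phi'\sim\hat{\pi}(\cdot\mid\vx)$ and using $\bm\Theta_\star^\top=-\bm\Theta_\star$ gives $\E\mu(\bm\phi^\top\bm\Theta_\star\bm\phi')=\E\mu(-\bm\phi^\top\bm\Theta_\star\bm\phi')=1-\E\mu(\bm\phi^\top\bm\Theta_\star\bm\phi')$, where the last equality is the symmetry $\mu(z)+\mu(-z)=1$. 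Hence the preference term equals $\tfrac12$. Taking $\E_{\vx\sim d_0}$ of Step 2 with $q=\hat{\pi}(\cdot\mid\vx)$ gives
\[
{\color{brown}X}=\tfrac12-\E_{\vx}[f_{\vx}(\tilde{\pi}(\cdot\mid\vx))] \ge (2\eta\beta)^{-1}\,\E_{\vx}\bigl\|\hat{\pi}(\cdot\mid\vx)-\tilde{\pi}(\cdot\mid\vx)\bigr\|_1^2=(2\eta\beta)^{-1}{\color{magenta}D}^2 .
\]

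\textbf{Main obstacle.} The delicate point is Step 2 on the simplex. $\psi$ may be nondifferentiable or blow up at the boundary, as negative entropy does. So I would phrase the argument via the strong-convexity inequality along the segment $\tilde p+s(q-\tilde p)$ rather than via gradients. Concretely: by minimality, $f_{\vx}(\tilde p)\le f_{\vx}(\tilde p+s(q-\tilde p))\le (1-s)f_{\vx}(\tilde p)+s f_{\vx}(q)-\tfrac{s(1-s)}{2\eta\beta}\|q-\tilde p\|_1^2$. Dividing by $s$ and letting $s\downarrow 0$ gives the claim without any differentiability assumption on $\psi$.
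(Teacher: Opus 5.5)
Your proof is correct and follows the paper's argument essentially verbatim: the paper also applies the zeroth-order strong-convexity inequality along the segment $\alpha\hat{\pi}+(1-\alpha)\tilde{\pi}$, uses minimality of $\tilde{\pi}$, divides by $\alpha$ and sends $\alpha\to0^+$ (with $J_\eta(\hat\pi,\hat\pi)=\frac12$). Its final display is exactly $\E_{\vx}\bignorm{\hat{\pi}(\cdot\mid\vx)-\tilde{\pi}(\cdot\mid\vx)}_1^2={\color{magenta}D}^2$, so your squared reading is the intended one, and the only difference is that the paper works directly with the averaged objective $F(\pi)=\E_{\vx}[J_\eta(\hat\pi,\pi\mid\vx)]$ and global minimality over $\Pi$, which makes your per-context decoupling and measurable-selection step unnecessary.
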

We then chain everything, along with the na\"{i}ve bound of ${\color{magenta}D} \leq 1$, to obtain the following:
\begin{equation}
    (a) \leq L_\mu \sqrt{\left(1 \wedge 2 \eta \beta {\color{brown}X} \right) \E_{\bm\phi}\left[ \bignorm{\mE \bm\phi}_2^2 \right]}.
\end{equation}

\paragraph{Bounding the Second-Order Term $(b)$.}
We first bound $(b)$ by noting that $\ddmu(\cdot) \leq L_\mu$, giving $(b) \leq L_\mu \E[(\bm\phi^\top \mE \tilde{\bm\phi})^2] \int_0^1 (1 - z) dz = \frac{L_\mu}{2} \E[(\bm\phi^\top \mE \tilde{\bm\phi})^2].$
For clarity, let us distinguish between the independent expectations $\E_{\bm\phi}$ and $\E_{\tilde{\bm\phi}}$. We have:
\begin{align}
    \E[(\bm\phi^\top \mE \tilde{\bm\phi})^2] &= \E_{\bm\phi} \E_{\tilde{\bm\phi}}\left[ \tilde{\bm\phi}^\top \mE^\top \bm\phi \bm\phi^\top \mE \tilde{\bm\phi} \right] \\
    &\leq \E_{\bm\phi}\left[ \max_{\tilde{\bm\phi} \in \gB^d(1)} \tilde{\bm\phi}^\top (\mE^\top \bm\phi \bm\phi^\top \mE) \tilde{\bm\phi} \right]
    = \E_{\bm\phi}\left[ \bignorm{\mE \bm\phi}_2^2 \right],
\end{align}
where the last equality follows from the variational definition of the operator norm.\footnote{$\bignormop{\mE} = \sup_{\vx, \vy \in \gB^d(1)} \vx^\top \mE \vy$ for any $\mE \in \sR^{d \times d}$.}

\paragraph{Combining Everything.}
Combining these bounds for $(a)$ and $(b)$, we establish a \textbf{\emph{\color{teal}self-bounding quadratic inequality}} in the dual gap ${\color{brown}X}$:
\begin{equation}
\label{eqn:self-bounding}
    {\color{brown}X} \leq L_\mu \sqrt{\left(1 \wedge 2 \eta \beta {\color{brown}X} \right) \E_{\bm\phi}\left[ \bignorm{\mE \bm\phi}_2^2 \right]} + \frac{L_\mu}{2} \E_{\bm\phi}\left[ \bignorm{\mE \bm\phi}_2^2 \right].
\end{equation}
Solving for ${\color{brown}X}$ \emph{simultaneously} yields ${\color{brown}X} \leq 2 L_\mu^2 \eta \beta \E_{\bm\phi}[\bignorm{\mE \bm\phi}_2]^2 + L_\mu \E_{\bm\phi}[\bignorm{\mE \bm\phi}_2^2]$ \emph{and} ${\color{brown}X} \leq L_\mu \sqrt{\E_{\bm\phi}\left[ \bignorm{\mE \bm\phi}_2^2 \right]} + \frac{L_\mu}{2} \E_{\bm\phi}\left[ \bignorm{\mE \bm\phi}_2^2 \right].$
\qed

\begin{proof}[\underline{\textbf{Proof of Lemma~\ref{lem:anti-symmetry}}}]
    First, using the symmetry of $\mu$ and the skew-symmetry of $\mE$, we have:
    \begin{equation}
        Z \triangleq \E_{\vx \sim d_0}\E_{\bm\phi, \tilde{\bm\phi} \sim \hat{\pi}(\cdot \mid \vx)}\left[ \dmu(\bm\phi^\top \bm\Theta_\star \tilde{\bm\phi}) \bm\phi^\top \mE \tilde{\bm\phi} \right] = 0,
    \end{equation}
    where with a slight abuse of notation, $\bm\phi \sim \hat{\pi}(\cdot \mid \vx)$ denotes sampling $\bm\phi(\vx, \va)$ with $\va \sim \hat{\pi}(\cdot \mid \vx)$.
    Denote $f(\tilde{\bm\phi}; \bm\phi) := \dmu(\bm\phi^\top \bm\Theta_\star \tilde{\bm\phi}) \bm\phi^\top \mE \tilde{\bm\phi}$, which satisfies $\max_{\tilde{\bm\phi} \in \gB^d(1)} \left| f(\tilde{\bm\phi}; \bm\phi) \right| \leq L_\mu \bignorm{\mE \bm\phi}_2$.
    Then, 
    \begin{align}
        \left| \E\left[ \dmu(\bm\phi^\top \bm\Theta_\star \tilde{\bm\phi}) \bm\phi^\top \mE \tilde{\bm\phi} \right] \right| &= \left| \E_{\vx \sim d_0}\E_{\bm\phi \sim \hat{\pi}(\cdot \mid \vx), \tilde{\bm\phi} \sim \tilde{\pi}(\cdot \mid \vx)}\left[ f(\tilde{\bm\phi}; \bm\phi) \right] - Z \right| \\
        &\leq \E_{\vx \sim d_0}\E_{\bm\phi \sim \hat{\pi}(\cdot \mid \vx)}\left[ \left| \E_{\tilde{\bm\phi} \sim \hat{\pi}(\cdot \mid \vx)}\left[ f(\tilde{\bm\phi}; \bm\phi) \right] - \E_{\tilde{\bm\phi} \sim \tilde{\pi}(\cdot \mid \vx)}\left[ f(\tilde{\bm\phi}; \bm\phi) \right] \right| \right] \\
        &\overset{(*)}{\leq} \E_{\vx \sim d_0}\E_{\bm\phi \sim \hat{\pi}(\cdot \mid \vx)}\left[ L_\mu \bignorm{\mE \bm\phi}_2 \sup_{g \in \gG_\infty(1)} \left| \int g(\tilde{\bm\phi}) d(\hat{\pi}(\cdot \mid \vx) - \tilde{\pi}(\cdot \mid \vx))(\tilde{\bm\phi}) \right| \right] \\
        &\overset{(**)}{=} L_\mu \E_{\vx \sim d_0}\left[ \E_{\bm\phi \sim \hat{\pi}(\cdot \mid \vx)}\left[ \bignorm{\mE \bm\phi}_2 \right] \bignorm{\hat{\pi}(\cdot \mid \vx) - \tilde{\pi}(\cdot \mid \vx)}_1 \right],
    \end{align}
    where $(*)$ defines $\gG_\infty(1) := \left\{ g : \gB^d(1) \rightarrow [-1, 1] \mid \text{$g$ is measurable} \right\},$ and $(**)$ follows from the \emph{integral probability metric representation (IPM) of the $\ell_1$-norm}~\citep[Theorem 5.4]{muller1997ipm}.
    
    We conclude by decoupling the $\mE$ term and the $\ell_1$-error term via Cauchy-Schwarz as follows:
    \begin{align}
        \left| \E\left[ \dmu(\bm\phi^\top \bm\Theta_\star \tilde{\bm\phi}) \bm\phi^\top \mE \tilde{\bm\phi} \right] \right| &\leq L_\mu \sqrt{\E_{\vx \sim d_0}\left[ \E_{\bm\phi \sim \hat{\pi}(\cdot \mid \vx)}\left[ \bignorm{\mE \bm\phi}_2 \right]^2 \right]} \sqrt{{\color{magenta}\E_{\vx \sim d_0}\left[ \bignorm{\hat{\pi}(\cdot \mid \vx) - \tilde{\pi}(\cdot \mid \vx)}_1^2 \right]}} \\
        &\leq L_\mu {\color{magenta}D}\sqrt{\E_{\bm\phi}\left[ \bignorm{\mE \bm\phi}_2^2 \right]}. \tag{Jensen's inequality w.r.t. $\E_{\vx \sim d_0}[\cdot]$}
    \end{align}
\end{proof}

\subsection{Discussions}
Crucially, our proof relies strictly on the strong convexity of $\psi(\cdot)$ and entirely avoids any reliance on the specific algebraic properties of the KL divergence, departing from prior KL-centric analyses~\citep{wu2025greedy,nayak2025logarithmic,ye2024general}.
The central technical mechanism driving our proof is the \emph{\textbf{\color{teal}self-bounding inequality}} presented in Eqn.~\eqref{eqn:self-bounding}. 
While our use of a Taylor expansion is inspired by the regret analyses of logistic and generalized linear bandits~\citep{abeille2021logistic,lee2024glm}, our technical execution differs significantly. 
Prior works rely on self-concordance to control both terms and establish a similar self-bounding inequality; in contrast, our approach hinges on strong convexity, which is formalized through two key lemmas. 

First, \Cref{lem:anti-symmetry} ensures that the first-order term $(a)$ is upper-bounded by the distance ${\color{magenta}D}$ between our policy $\hat{\pi}$ and the adversary's worst-possible policy $\tilde{\pi}$.
To achieve this, the lemma leverages the skew-symmetry of the preference matrix and the symmetry of the link function.
Notably, the proof establishes a surprising connection to the integral probability metric (IPM) representation of the $\ell_1$-norm.
This connection, combined with an appropriate application of the Cauchy-Schwarz inequality, cleanly decouples the estimation error ${\color{magenta}D}$.

Second, \Cref{lem:strongly-convex} ensures that ${\color{magenta}D}$ is subsequently bounded by the dual gap.
By applying the zeroth-order characterization of strong convexity~\citep[Theorem 2.1.9]{nesterov}, we effortlessly relate the policy divergence to the suboptimality gap without needing to compute functional derivatives (since our optimality is defined with respect to policies).

\section{From Quadratic Error Bound to Fast Regrets}

\subsection{Feature Diversity and Computation Oracles}
\label{sec:coverage}

\paragraph{Feature Diversity.}
For the regret analyses, we consider the following assumption:

\begin{assumption}[Feature Diversity]
\label{ass:REC}
    The learner has access to an exploration policy $\rho(\cdot \mid \vx)$ such that $\lambda_{\min}\left( \E_{\vx \sim d_0} \E_{\va \sim \rho(\cdot \mid \vx)}\left[ \phi(\vx, \va) \phi(\vx, \va)^\top \right] \right) \geq C_{\min}$ for some $C_{\min} > 0$.
\end{assumption}

By considering this assumption, we can cleanly isolate the geometric impact of the strongly convex regularizer $\psi(\cdot)$ from the complexities of active exploration.
By abstracting away the exploration mechanism, we can directly answer our core theoretical question: whether fast rates are achievable under generic regularization when given sufficient data coverage.

We highlight four important aspects regarding the above assumption:
\begin{itemize}[leftmargin=*]
    \item \textbf{Theoretical Necessity.} In standard bandit settings, achieving fast rates typically requires explicit algorithmic exploration (e.g., optimism or Thompson Sampling).
    When relying on greedy sampling strategies (which we consider in \Cref{sec:regret-logarithmic}), assumptions regarding the diversity of the feature mapping are standard and necessary~\citep{goldenshluger-zeevi, kannan2018greedy, wu2020diverse, bastani2021greedy, bogunovic2021greedy, kim2024greedy}.
    Furthermore, in the high-dimensional regime (which we consider in \Cref{sec:regret-high-dim}), sufficient feature space coverage is information-theoretically unavoidable for obtaining dimension-wise improved regret bounds~\citep{hao2020sparse, li2022unified, zeng2025lowerbound}.

    \item \textbf{Empirical Plausibility.} While motivated by theoretical tractability, this assumption has reasonable analogues in practical RLHF pipelines \citep{dong2024rlhf, bai2022training}.
    Practitioners often use generative strategies that naturally induce diversity without explicit algorithmic exploration. 
    For example, querying an ensemble of LLMs or sampling from LLMs with various temperatures yield a diverse set of responses across the feature space \citep{troshin2025control, nguyen2025turning}.
    Under this interpretation, we argue that this assumption is a sensible theoretical simplification.
    
    \item \textbf{Statistical Tractability.} As detailed in Appendix~\ref{app:rho}, obtaining an approximate exploration policy (e.g., a $\rho$ satisfying the definition with $C_{\min} / 2$ or $C_{\min} / d$) can be achieved in a computationally tractable manner with a $T$-independent statistical cost in regret. This relies on the minimal requirement that the underlying context distribution $d_0(\cdot)$ has adequate coverage over the feature space; without it, the learner would face ``blind spots'' in $\sR^d$ and inherently fail to learn globally.
    
    \item \textbf{Scaling of $C_{\min}$.} The scaling of $C_{\min}$ is intrinsically tied to the geometry of the given feature map. Because $\|\phi(\vx, \va)\|_2 \leq 1$, it strictly follows that $C_{\min}^{-1} \geq d$. Well-conditioned sets (e.g., unit-normalized hypercubes, standard basis) yield $C_{\min}^{-1} \asymp d$, while ill-conditioned sets yield worse.
\end{itemize}

\paragraph{Computation Oracles.}
We lastly describe the computation model.
We assume the learner is \emph{tractable} (not necessarily efficient), accessing $\gA$ and $\Skew(d)$ only via:
\begin{oracle}
\label{oracle:sample}
\textbf{Sampling:} Given $\vx \in \gX$ and $\pi \in \Pi$, output a sample $\va \sim \pi(\cdot | \vx)$.
\end{oracle}
\begin{oracle}
\label{oracle:optim}
\textbf{Regularized MLE:} Compute a regularized MLE over $\Skew(d)$.\footnote{In our case, as both the negative log-likelihood and nuclear norm penalty are convex, this is a convex optimization. This can be implemented by simply reparametrizing $\bm\Theta \in \Skew(d)$ as $\frac{\bm\Theta' - \bm\Theta'^\top}{2}$ for unconstrained $\bm\Theta' \in \sR^{d \times d}$.}
\end{oracle}
\begin{oracle}
\label{oracle:NE}
\textbf{Population NE:} Given $\bm\Theta$, output population \SNE: $\argmax_{\pi^1}\min_{\pi^2} J_{\eta}(\pi^1, \pi^2; \bm\Theta)$.
\end{oracle}
The last oracle has been considered in prior online RLHF literature~\citep{ye2024general,wu2025greedy} and learning in regularized games under bandit feedback~\citep{yang2025incentivize,nayak2025logarithmic}.

\subsection{\texorpdfstring{Polylogarithmic Regret via Greedy Sampling: $\tilde{\gO}(\eta (\log T)^2 \wedge \sqrt{T})$}{Polylogarithmic Regret via Greedy Sampling}}
\label{sec:regret-logarithmic}

In this section, we assume the link function is logistic, $\mu(z) = (1 + e^{-z})^{-1}$, rendering the generalized linear model (GLM) well-specified as Bernoulli that admits a tight confidence sequence~\citep[Theorem 3.2]{lee2024glm}. We discuss extensions to generic link functions $\mu$ that preserve the dependencies on $d$ and $T$ in Remark~\ref{rmk:beyond-logistic}.
Additionally, we temporarily set aside the low-rank structure of $\bm\Theta_\star$ to focus purely on achieving polylogarithmic regret; we will revisit rank-exploitation in \cref{sec:regret-high-dim} to improve upon $d$ dependencies at the cost of obtaining $\sqrt{T}$ regret.

We demonstrate that a surprisingly simple algorithm, \texttt{Greedy Sampling (GS)} (Algorithm~\ref{alg:greedy} in Appendix~\ref{app:pseudocodes}), is sufficient to obtain $\tilde{\gO}(\eta (\log T)^2 \wedge \sqrt{T})$. Under \texttt{GS}, the max-player perpetually plays the greedy NE policy with respect to the current MLE $\widehat{\bm\Theta}_t$, while the min-player explores using the coverage policy $\rho$ (\cref{ass:REC}).
We show that \texttt{GS} attains the following polylogarithmic regret:

\begin{theorem}
\label{thm:regularized-log}
    Let $\delta \in (0, 1)$ and suppose that $d^2 \log\frac{T}{d} \gtrsim \kappa^{-1} \log\frac{1}{\delta}$.
    Then, with probability at least $1 - \delta,$ \texttt{GS} simultaneously attains the following bounds:
    \begin{equation}
        \MBRReg_{\eta}(T) \lesssim \min\left\{ \eta \beta \kappa^{-1} d^4 C_{\min}^{-1} \left( \log\frac{T}{d} \right)^2, \ \kappa^{-\frac{1}{2}} C_{\min}^{-\frac{1}{2}} d^2 \sqrt{T} \log\frac{T}{d} \right\}.
    \end{equation}
\end{theorem}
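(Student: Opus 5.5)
The plan is to combine \Cref{thm:regularized} with a confidence-set argument for the logistic MLE of \GBPM{}, where the min-player's exploration through $\rho$ supplies the curvature. Vectorize the model: $\bm\phi^\top \bm\Theta \tilde{\bm\phi} = \langle \vec(\bm\Theta), \tilde{\bm\phi} \otimes \bm\phi \rangle$. This is a logistic model in dimension $p \asymp d^2$ (one may restrict to the $d(d-1)/2$ free coordinates of $\Skew(d)$). At round $t$ the max-player plays $\hat{\pi}_t$, the greedy \SNE{} of $\widehat{\bm\Theta}_t$, and the min-player plays $\rho$.

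\textbf{Step 1 (confidence bound).} Invoke the likelihood-ratio confidence sequence of \citet[Theorem 3.2]{lee2024glm}. With probability $1-\delta$, uniformly in $t$,
\[
\bignorm{\vec(\mE_t)}_{\mV_t}^2 \lesssim d^2 \log\tfrac{T}{d} + \log\tfrac{1}{\delta} =: \gamma_T,
\qquad
\mV_t = \kappa \sum_{s<t} (\tilde{\bm\phi}_s \otimes \bm\phi_s)(\tilde{\bm\phi}_s \otimes \bm\phi_s)^\top .
\]
The assumption $d^2\log\frac{T}{d} \gtrsim \kappa^{-1}\log\frac{1}{\delta}$ is what lets the $\log\frac1\delta$ term be absorbed.

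\textbf{Step 2 (relate $\mV_t$ to the quantity in \Cref{thm:regularized}).} Write $\Sigma_\rho := \E[\tilde{\bm\phi}\tilde{\bm\phi}^\top] \succeq C_{\min}\mI$. Conditionally on the past, independence of $\tilde{\bm\phi}_s \sim \rho$ and $\bm\phi_s \sim \hat{\pi}_s$ (given $\vx_s$) gives
\[
\E\bigl[(\bm\phi_s^\top \mE \tilde{\bm\phi}_s)^2\bigr] \gtrsim C_{\min}\, \E_{\bm\phi\sim\hat{\pi}_s}\bignorm{\mE^\top \bm\phi}_2^2 = C_{\min}\, \E_{\bm\phi\sim\hat{\pi}_s}\bignorm{\mE\bm\phi}_2^2 ,
\]
where the equality uses skew-symmetry of $\mE$. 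The context is shared between the two players; I would handle this by conditioning on $\vx$ and using the per-context second moment, or simply assume $\rho$ satisfies the coverage condition conditionally on $\vx$. Summing gives $\E \mV_t \succeq \kappa C_{\min} \sum_{s<t} \E_{\bm\phi\sim\hat{\pi}_s}[\bm\phi\bm\phi^\top] \otimes \mI$ (up to ordering of the Kronecker factors).

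This is the main obstacle. The error $\mE_t$ is evaluated along the features of the current $\hat{\pi}_t$, while $\mV_t$ accumulates past $\hat{\pi}_s$. I would work directly with the design built from the max-player's features: $\mM_t := \lambda \mI + \sum_{s<t} \bm\phi_s\bm\phi_s^\top$. Then $\E_{\bm\phi\sim\hat{\pi}_t}\bignorm{\mE_t\bm\phi}_2^2 \le \bignorm{\mE_t}_{\mM_t \otimes \mI}^2 \cdot \E\bignorm{\bm\phi}^2_{\mM_t^{-1}}$. A matrix Freedman inequality transfers $\mV_t \succeq \tfrac{\kappa C_{\min}}{2}(\mM_t \otimes \mI) - O(\log\frac1\delta)$, which yields $\bignorm{\mE_t}_{\mM_t\otimes\mI}^2 \lesssim \gamma_T / (\kappa C_{\min})$. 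Next, replace expected features by realized ones through another martingale step with $O(\log\frac1\delta)$ overhead, and apply the elliptical potential lemma: $\sum_t \min\{1,\bignorm{\bm\phi_t}^2_{\mM_t^{-1}}\} \lesssim d\log\frac{T}{d}$. The resulting total is
\[
\sum_t \E_{\bm\phi\sim\hat{\pi}_t}\bignorm{\mE_t\bm\phi}_2^2 \lesssim \kappa^{-1}C_{\min}^{-1}\, d^2 \log\tfrac{T}{d}\cdot d \log\tfrac{T}{d}.
\]
The stated $d^4$ suggests the authors are cruder, e.g.\ the elliptical sum is taken in the full $d^2$-dimensional vectorized space, giving $d^2\cdot d^2$. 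I would follow that route if the Kronecker transfer is awkward. It also handles clipping: early rounds contribute $O(1)$ each via the trivial bound, absorbed into lower-order terms.

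\textbf{Step 3 (sum).} The first branch of \Cref{thm:regularized} gives
\[
\MBRReg_\eta(T) \le L_\mu(2L_\mu\eta\beta+1)\sum_t \E_{\bm\phi\sim\hat{\pi}_t}\bignorm{\mE_t\bm\phi}_2^2 \lesssim \eta\beta\kappa^{-1}d^4C_{\min}^{-1}\bigl(\log\tfrac{T}{d}\bigr)^2 .
\]
For the second branch, use $\sqrt{a}+a/2$, and apply Cauchy--Schwarz to the square-root part:
\[
\sum_t \sqrt{\E\bignorm{\mE_t\bm\phi}_2^2} \le \sqrt{T\sum_t \E\bignorm{\mE_t\bm\phi}_2^2} \lesssim \kappa^{-1/2}C_{\min}^{-1/2}d^2\sqrt{T}\log\tfrac{T}{d}.
\]
The quadratic part is of lower order. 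Both bounds hold on the same $1-\delta$ event, giving the minimum.
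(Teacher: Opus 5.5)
Your route is genuinely different from the paper's. With one repair it goes through, and it is in fact sharper.

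\textbf{How the two routes differ.} The paper never compares design matrices. It applies Cauchy--Schwarz row by row, $(\ve_j^\top\mE_t\bm\phi)^2\le\bignorm{\vec(\mE_t)}_{\widehat{\mH}_t}^2\bignorm{\bm\phi\otimes\ve_j}_{\widehat{\mH}_t^{-1}}^2$. It then uses the deterministic Coverage Lemma (a Kronecker/PSD inequality applied for each fixed $\bm\phi$) to replace $\sum_j\bignorm{\bm\phi\otimes\ve_j}^2_{\mM}$ by $C_{\min}^{-1}\E_{\tilde{\bm\phi}\sim\rho}\bignorm{\vec(\bm\phi\tilde{\bm\phi}^\top)}^2_{\mM}$. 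Finally it bounds a $d^2$-dimensional \emph{expected} elliptical potential with one scalar Freedman self-bounding step plus the elliptical potential lemma.

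That route needs no matrix concentration. However, it pays the full $d^2$-dimensional confidence radius against a sum over all $d$ rows, which is where $d^2\cdot d^2=d^4$ comes from. Your fallback (``elliptical sum in the full $d^2$-dimensional space'') is this route, and the Coverage Lemma is exactly the ingredient it needs to turn $\ve_j$ into played features.

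You instead move coverage onto the Gram matrix and let the rows share one radius: $\bignorm{\mE_t\bm\phi}_2^2\le\tr(\mE_t\mM_t\mE_t^\top)\,\bignorm{\bm\phi}^2_{\mM_t^{-1}}$. The potential is then only $d$-dimensional, and $\sum_t\E_{\bm\phi\sim\hat{\pi}_t}\bignorm{\mE_t\bm\phi}_2^2$ is of order $\kappa^{-1}C_{\min}^{-1}(d^2\log T+\log\frac1\delta)(d\log T+\log\frac1\delta)$. This is $d^3(\log T)^2$ when $\log\frac1\delta\lesssim d\log T$, and never worse than the stated $d^4$ rate under the theorem's assumption.

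Your conditional-coverage fallback for the shared context costs nothing relative to the paper. The paper's use of the Coverage Lemma inside $\E_{t-1}$ already treats $\tilde{\bm\phi}_t\sim\rho$ as independent of $\bm\phi_t$.

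\textbf{The repair: the transfer step.} The step in question is $\mV_t\succeq\frac{\kappa C_{\min}}{2}(\mI\otimes\mM_t)-(\text{slack})$. Matrix Freedman, as you name it, controls $\sum_s(\mX_s-\E[\mX_s\mid\cdot])$ only additively, at scale $\sqrt{t\log(d/\delta)}$ in the worst case. Multiplied by $\bignorm{\mE_t}_F^2\le 4S^2$, this puts a $\sqrt{T}$ into the radius and destroys the polylogarithmic rate.

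You need a multiplicative comparison instead:
\begin{itemize}
\item Let $\mX_s:=(\tilde{\bm\phi}_s\tilde{\bm\phi}_s^\top)\otimes(\bm\phi_s\bm\phi_s^\top)$ and fix a unit vector $\bm u\in\sR^{d^2}$.
\item Condition on $\bm\phi_s$ and the past, so that only $\tilde{\bm\phi}_s$ is random.
\item Apply scalar Freedman to $Y_s=(\bm u^\top(\tilde{\bm\phi}_s\otimes\bm\phi_s))^2\in[0,1]$, whose conditional variance is at most its conditional mean.
\item Union bound over a $1/T$-net of the sphere.
\end{itemize}
This yields $\sum_s\mX_s\succeq\frac12\sum_s\bar{\mX}_s-O(d^2\log T+\log\frac1\delta)\mI$ uniformly in $t$. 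Here $\bar{\mX}_s\succeq C_{\min}\,\mI\otimes\bm\phi_s\bm\phi_s^\top$ is the conditional mean given $\bm\phi_s$ and the past.

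The slack carries a $d^2$, not just the $\log\frac1\delta$ you wrote, but it is harmless. Since $\bignorm{\mE_t}_F\le 2S$, it adds only $O(\kappa S^2(d^2\log T+\log\frac1\delta))$ next to $\gamma_T$, which is of the same order.
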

\begin{proof}[Proof Sketch]
    We provide a high-level sketch of the proof here; the full detail is deferred to Appendix~\ref{app:regularized-proof}. The analysis proceeds in three main steps:
    
    \paragraph{\emph{1. From Regret to Sum of Squared Errors.}}
    We begin with \cref{thm:regularized}, which bounds the instantaneous dual gap by the \emph{squared} estimation error: $\DGap({\color{blue}\hat{\pi}_t}) \lesssim \E_{{\color{blue}\bm\phi \sim \hat{\pi}_t}}[\|\mE_t \bm\phi\|_2^2]$.
    Using a standard basis decomposition, Cauchy-Schwarz with respect to the regularized Hessian of the log-likelihood loss $\gL_t(\cdot)$ at each time $t$ (see Appendix~\ref{app:regularized-proof} for the full definition), $\widehat{\mH}_t \triangleq \mI_{d^2} + \nabla^2 \gL_t(\hat{\vtheta}_t) \in \sR^{d^2 \times d^2}$, and the confidence sequence for the constrained MLE~\citep[Theorem 3.2]{lee2024glm}, we further bound this error by the sum of \emph{expected} elliptical potentials. Here, one side is the standard basis ${\color{red}\ve_j}$ and the other is chosen by ${\color{blue}\hat{\pi}_t}$:
    \begin{equation}
        \DGap({\color{blue}\hat{\pi}_t}) \lesssim (d^2 \log T) \sum_{j=1}^d \E_{{\color{blue}\bm\phi \sim \hat{\pi}_t}}\left[ \bignorm{{\color{blue}\bm\phi} \otimes {\color{red}\ve_j}}_{\widehat{\mH}_t^{-1}}^2 \right].
    \end{equation}
    
    \paragraph{\emph{2. Towards Expected Elliptical Potentials.}}
    A discrepancy arises on the right-hand side: the upper bound involves a sum over fixed \emph{basis} vectors on one side, whereas the empirical Hessian aggregates the \emph{played} features on \emph{both} sides. More specifically, $\widehat{\mH}_t$ is the (weighted) sum of outer products of $\mathrm{vec}({\color{blue}\bm\phi_t} {\color{red}\tilde{\bm\phi}_t}^\top)$, while the term inside the Mahalanobis norm is $\mathrm{vec}({\color{blue}\bm\phi_t} {\color{red}\ve_j}^\top)$. We resolve this via our \textit{\textbf{\color{teal}Coverage Lemma}} (Lemma~\ref{lem:coverage}), which leverages the feature diversity assumption (\cref{ass:REC}) to ``transform'' ${\color{red}\ve_j}$ to ${\color{red}\tilde{\bm\phi}_t}$ at the cost of $C_{\min}^{-1}$. Then, $\sum_t \DGap({\color{blue}\hat{\pi}_t})$ is bounded by:
    \begin{equation}
        \sum_{t=1}^T \DGap({\color{blue}\hat{\pi}_t}) \lesssim (d^2 \log T) \kappa^{-1} C_{\min}^{-1} \underbrace{\sum_{t=1}^T \E_{{\color{blue}\bm\phi_t \sim \hat{\pi}_t}, {\color{red}\tilde{\bm\phi}_t \sim \rho}}\left[ \bignorm{\mathrm{vec}({\color{blue}\bm\phi_t} {\color{red}\tilde{\bm\phi}_t}^\top)}_{\mV_t^{-1}}^2 \right]}_{\triangleq {\color{brown} S_T}},
    \end{equation}
    where $\mV_t \triangleq \mI_{d^2} + \sum_{s=1}^{t-1} \mathrm{vec}({\color{blue}\bm\phi_s} {\color{red}\tilde{\bm\phi}_s}^\top) \mathrm{vec}({\color{blue}\bm\phi_s} {\color{red}\tilde{\bm\phi}_s}^\top)^\top$.
    
    \paragraph{\emph{3. Martingale Concentration for Realized Variance.}}
    The final challenge is to bound the sum of \emph{expected} elliptical potentials ${\color{brown} S_T}$. The standard Elliptical Potential Lemma~\citep[Lemma 11]{abbasiyadkori2011linear} controls the sum of \emph{realized} potentials, and thus cannot be applied directly.
    To bridge this gap, we decompose the term into the realized sum plus a \emph{sum of martingale differences} as follows: denoting $\vv_t \coloneq \mathrm{vec}({\color{blue}\bm\phi_s} {\color{red}\tilde{\bm\phi}_s}^\top),$
    \begin{equation}
        {\color{brown} S_T} = \underbrace{\sum_{t=1}^T \bignorm{\vv_t}_{\mV_t^{-1}}^2}_{(a)} + \underbrace{\sum_{t=1}^T \left\{ \E_{t-1}[\bignorm{\vv_t}_{\mV_t^{-1}}^2] - \bignorm{\vv_t}_{\mV_t^{-1}}^2 \right\}}_{(b)}.
    \end{equation}
    As mentioned, $(a)$ is bounded by the standard Elliptical Potential Lemma. Bounding $(b)$ represents another technical novelty. We first apply an empirical Freedman's inequality~\citep{freedman,beygelzimer2011contextual,lee2024logistic}, which bounds $(b)$ as $(b) \lesssim A {\color{brown} S_T}$ for some constant $A \in (0, 1)$. This leads to a \textit{\textbf{\color{teal}linear self-bounding inequality}} of the form ${\color{brown} S_T} \leq A {\color{brown} S_T} + \tilde{\mathcal{O}}(d^2 \log T)$. Solving this recursion yields the final $\tilde{\mathcal{O}}(d^4 (\log T)^2)$ regret.
    
    Lastly, $\tilde{\gO}(d^2 \sqrt{T})$ is similarly obtained via the $\eta$-free (linear) error bound of Theorem~\ref{thm:regularized}.
\end{proof}

\paragraph{Discussions.}
While the quadratic error bound (\Cref{thm:regularized}) plays a critical role in achieving our results, we highlight two additional technical contributions stemming from our regret analysis.

First, we clarify the theoretical trade-offs compared to recent literature. \citet{wu2025greedy} established an $\tilde{\gO}(e^{9\eta} (\log T)^2)$ regret bound for \texttt{GS}\footnote{Technically, their algorithm is intractable under \GBPM, as it requires minimizing the cross-entropy loss over the constrained space $\Skew(d, 2r; S)$.} \emph{without} requiring a feature coverage assumption. However, their bound scales exponentially with $\eta$, is restricted exclusively to reverse KL regularization ($\psi(\cdot) = \KL(\cdot, \pi_{\mathrm{ref}})$), and fails to yield a meaningful guarantee as $\eta \to \infty$ (where it should technically recover the standard $\sqrt{T}$ unregularized regret; see \cref{app:wu} for a detailed discussion). In contrast, our analysis utilizes the coverage assumption to completely eliminate the $e^{\gO(\eta)}$ penalty, trading it for the geometric, regularization-independent quantity $C_{\min}^{-1}$.
Furthermore, echoing the spirit of \citet[Theorem 2.1]{nayak2025logarithmic}, our regret bound gracefully adapts to both regimes: it yields an $e^{\gO(\eta)}$-free $(\log T)^2$ rate for small $\eta$, and seamlessly transitions to $\sqrt{T}$ as $\eta$ increases.
In \Cref{app:experiment}, we illustrate our theoretical regret bounds with preliminary numerical experiments.

Second, our martingale concentration technique offers a streamlined alternative to recent analyses.
To handle a similar expected potential sum in the KL-regularized multi-armed bandit scenario, \citet{ji2026kl} employ a peeling argument that inherently incurs an additional doubly logarithmic terms.
By utilizing Freedman's inequality to establish a \textit{\textbf{\color{teal}linear self-bounding inequality}}, our approach is strictly tighter and cleanly avoids any such logarithmic artifacts.

\paragraph{Relations to Prior Works.}
Prior online RLHF analyses have predominantly relied on the KL-specific properties of the exponential family (e.g., bounded log-density ratios) to enable oracle reductions to least squares regression~\citep{prediction-learning-games, foster2020beyond, zhang2022thompson}; we provide a rigorous instantiation of \citet{wu2025greedy} to \GBPM~in \cref{app:wu} to illustrate this reliance. While generic regularized formulations have appeared in recent literature~\citep{tang2025rspo}, rigorous statistical guarantees for them have remained absent. 

Crucially, our analysis unifies these diverse regularizers in the online setting. We demonstrate that the specific geometry of KL is not strictly necessary for fast rates; rather, it is the \emph{strong convexity} of the regularizer that drives our results. This broadens the theoretical horizon to encompass a wide array of regularizers, including the sum of reverse KLs~\citep{le2025multiple, aminian2025multiple}, Shannon entropy~\citep{mckelvey-palfrey, mertikopoulos-sandholm, cen2024entropic}, Tsallis entropy~\citep{tsallis1988, lee2018tsallis, yang2019regularized, zimmert-seldin}, the $\chi^2$-divergence~\citep{huang2025chi-squared}, and general $f$-divergences~\citep{liese2006f-divergence, go2023f-divergence, wang2024f-divergence, xu2025f-divergence}.

Very recently, \citet{zhang2026game} demonstrated that for KL-regularized zero-sum games, greedy sampling---computing a least squares estimate and performing equilibrium computation without any explicit pessimism---suffices to achieve fast rates.
Their analysis also bounds the instantaneous duality gap by the \emph{squared} $\ell_1$-distance between policies using strong convexity, though it inherently relies on the explicit closed-form softmax structure of the KL-regularized best responses.
An interesting future direction is whether our techniques can be combined with theirs to establish similar pessimism-free fast rates for \emph{generally} regularized zero-sum games in the offline scenario.

Finally, we acknowledge recent works establishing the distinct statistical properties of different regularizers in the context of offline (reward-based) RL~\citep{jiang-xie, huang2025chi-squared, zhao2025f-divergence}. For instance, \citet{huang2025chi-squared} demonstrated that the $\chi^2$-divergence permits single-policy concentratability (unlike KL), and \citet{zhao2025f-divergence} showed that strongly convex $f$-divergences can achieve fast rates \emph{without} single-policy concentratability.
We leave the extension of these divergence-specific offline RLHF nuances within the \GBPM~framework to future work.

\begin{remark}[Beyond Parametric Models]
\label{rmk:beyond-logistic}
    A bandit problem assuming a specific parametric distribution (e.g., GLM) for the reward is known as a parametric bandit~\citep{filippi2010glm}. For semi-parametric settings—where we only assume $r_t = \mu(\langle \bm\theta_\star, \bm\phi_t \rangle) + \varepsilon_t$ with bounded noise $\varepsilon_t$—one can adopt the maximum quasi-likelihood estimator approach of \citet[Lemma 3]{li2017glm}, dating back to \citet{chen1999quasi}. This strategy utilizes a $T$-independent warm-up phase (random sampling) to ensure the design matrix is sufficiently well-conditioned, preserving the dependencies on $d$ and $T$.
\end{remark}

\subsection{\texorpdfstring{$\poly(d)$-free Regret via Explore-Then-Commit: $\widetilde{\gO}(\sqrt{\eta r T} \wedge r^{1/3} T^{2/3})$}{Poly(d)-free Regret via Explore-Then-Commit}}
\label{sec:regret-high-dim}

We now ask: what statistical gains can we achieve by maximally exploiting the \emph{low-rank structure} of $\bm\Theta_\star$? This question is particularly relevant in the \emph{high-dimensional} or \emph{data-poor} regime, where $T$ is not sufficiently large relative to $d$ (specifically, $d^{c_1} \lesssim T \lesssim d^{c_2}$ for constants $0 < c_1 < c_2$). 
Such regimes are characteristic of modern applications involving high-dimensional features~\citep{tucker2020highdim,li2024highdim}.

Here, it is imperative to avoid explicit $\poly(d)$ dependencies in the regret bound, isolating the complexity to the intrinsic rank $r$ and unavoidable dependencies on the feature coverage $C_{\min}^{-1}$ (\cref{ass:REC})~\citep[Table 1]{zeng2025lowerbound}, which is known to be unavoidable.
To achieve this, we leverage the low-rank structure of $\bm\Theta_\star$ using techniques standard in high-dimensional bandits~\citep{carpentier2012sparse,hao2020sparse,kim2019lasso,oh2021lasso,li2022unified,lu2021generalized,kang2022generalized,jang2022popart,jang2024lowpopart}. In this regime, sufficient initial exploration is requisite to identify and exploit the underlying low-rank subspace.

Following standard approaches in high-dimensional contextual bandits~\citep{hao2020sparse,li2022unified,jang2024lowpopart}, we employ the \texttt{Explore-Then-Commit (ETC)} algorithm. The players explore for $T_0$ rounds using the coverage policy $\rho$, compute a symmetric Nash equilibrium (SNE) based on the resulting \emph{nuclear-norm regularized MLE}~\citep{fan2019generalized,lee2025gl-lowpopart}, and symmetrically commit to this policy for the remaining rounds (see Algorithm~\ref{alg:etc} in \cref{app:pseudocodes}).

We now present the regret bound for \texttt{ETC}, with the full proof deferred to \cref{app:etc}, which also relies critically on \Cref{thm:regularized}:
\begin{theorem}
\label{thm:regularized-sqrt}
    Let $\delta \in (0, 1)$, and suppose $T \gtrsim \kappa^{-2} C_{\min}^{-4} d r \log\frac{d}{\delta}$.
    By setting the regularization parameter $\lambda_{T_0} = \sqrt{\frac{32 L_\mu \log(4d/\delta)}{T_0}}$, \texttt{ETC} attains $\MBRReg_{\eta}(T) \lesssim T_0$ with probability at least $1 - \delta$. Specifically, depending on the choice of $T_0$, \texttt{ETC} achieves the following bounds:
    \begin{equation}
        \MBRReg_{\eta}(T) \lesssim 
        \begin{cases} 
            \kappa^{-1} C_{\min}^{-2} \sqrt{T \eta \beta r \log\frac{d}{\delta}} & \text{if } T_0 \asymp \kappa^{-1} C_{\min}^{-2} \sqrt{T \eta \beta r \log\frac{d}{\delta}}, \\
            \left( \kappa^{-2} C_{\min}^{-4} r T^2 \log\frac{d}{\delta} \right)^{1/3} & \text{if } T_0 \asymp \left( \kappa^{-2} C_{\min}^{-4} r T^2 \log\frac{d}{\delta} \right)^{1/3}.
        \end{cases}
    \end{equation}
\end{theorem}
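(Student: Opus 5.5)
Split the regret into the exploration phase and the commit phase: $\MBRReg_{\eta}(T) = \sum_{t \le T_0} \DGap_{\eta}(\rho) + (T - T_0)\,\DGap_{\eta}(\hat{\pi})$, where $\hat{\pi}$ is the SNE of the game under the nuclear-norm regularized MLE $\widehat{\bm\Theta}$ computed from the first $T_0$ samples. Since $\DGap_{\eta} \le 1$ trivially (up to the bounded regularization terms), the exploration phase contributes at most $T_0$.

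\textbf{Commit phase.} By \cref{thm:regularized}, $\DGap_{\eta}(\hat{\pi})$ is at most $L_\mu \min\{(2L_\mu\eta\beta+1)\,\E_{\bm\phi\sim\hat{\pi}}\|\mE\bm\phi\|_2^2,\ \sqrt{\E\|\mE\bm\phi\|_2^2} + \tfrac12\E\|\mE\bm\phi\|_2^2\}$. Because $\|\bm\phi\|_2\le 1$, we have $\E_{\bm\phi}\|\mE\bm\phi\|_2^2 \le \|\mE\|_{\mathrm{op}}^2 \le \|\mE\|_F^2$. This avoids any dependence on the unknown $\hat{\pi}$, which is important since $\hat{\pi}$ need not be covered by $\rho$.

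\textbf{Estimation step (the main obstacle).} We need a Frobenius error bound for the nuclear-norm regularized MLE under i.i.d.\ design from $\rho\otimes\rho$. I would follow the standard restricted-strong-convexity argument (as in \citet{fan2019generalized,lee2025gl-lowpopart}).
\begin{enumerate}[leftmargin=*]
\item Score bound. The gradient of the empirical negative log-likelihood at $\bm\Theta_\star$ is $\frac{1}{T_0}\sum_s (\mu(\cdot)-r_s)\bm\phi_s\tilde{\bm\phi}_s^\top$, a sum of bounded, mean-zero matrices. Matrix Bernstein gives operator norm $\lesssim\sqrt{L_\mu\log(4d/\delta)/T_0}$, which justifies the choice of $\lambda_{T_0}$.
\item Cone condition. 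Taking $\lambda_{T_0}$ at least twice this bound puts $\mE$ in the standard cone, where $\|\mE\|_{\nuc}\lesssim\sqrt{r}\|\mE\|_F$ because $\rank(\bm\Theta_\star)\le 2r$.
\item Restricted strong convexity. The curvature is at least $\kappa$ times the empirical design quadratic form $\frac1{T_0}\sum_s(\bm\phi_s^\top\mE\tilde{\bm\phi}_s)^2$. Its population version is $\tr(\mE^\top\Sigma\mE\Sigma)\ge C_{\min}^2\|\mE\|_F^2$ with $\Sigma=\E_\rho[\bm\phi\bm\phi^\top]$, using independence of the two sides.
\item Concentration of the quadratic form. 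I would control the deviation uniformly over the cone via a matrix-Bernstein-plus-peeling argument, or more crudely via the bound $\|\cdot\|_{\nuc}^2\lesssim r\|\cdot\|_F^2$ combined with an entrywise or operator-norm concentration of the empirical fourth-moment operator.
\end{enumerate}
This step is where the condition $T\gtrsim\kappa^{-2}C_{\min}^{-4}dr\log(d/\delta)$ enters: it ensures $T_0$ is large enough that the empirical curvature is at least $C_{\min}^2/2$ on the cone. Combining the pieces gives
\[
\|\mE\|_F \lesssim \frac{\sqrt{r}\,\lambda_{T_0}}{\kappa C_{\min}^2}, \qquad\text{so}\qquad \|\mE\|_F^2\lesssim \frac{r\log(d/\delta)}{\kappa^2C_{\min}^4T_0}.
\]

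\textbf{Balancing $T_0$.} Plugging into the two branches:
\begin{itemize}[leftmargin=*]
\item Quadratic branch: $\MBRReg_\eta(T)\lesssim T_0 + T\eta\beta\, r\log(d/\delta)/(\kappa^2C_{\min}^4T_0)$. Optimizing gives $T_0\asymp\kappa^{-1}C_{\min}^{-2}\sqrt{T\eta\beta r\log(d/\delta)}$.
\item Linear branch: $\MBRReg_\eta(T)\lesssim T_0 + T\sqrt{r\log(d/\delta)}/(\kappa C_{\min}^2\sqrt{T_0})$. Optimizing gives $T_0\asymp(\kappa^{-2}C_{\min}^{-4}rT^2\log(d/\delta))^{1/3}$.
\end{itemize}
With these choices the commit-phase term is of order $T_0$, which yields $\MBRReg_\eta(T)\lesssim T_0$ in both cases. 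The lower bound on $T$ should also guarantee that each chosen $T_0$ exceeds the RSC sample threshold and is at most $T$; I expect this to require a brief check of the constants.
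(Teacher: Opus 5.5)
Your decomposition, the use of \Cref{thm:regularized} with $\E_{\bm\phi}\|\mE\bm\phi\|_2^2\le\|\mE\|_{\mathrm{op}}^2\le\|\mE\|_F^2$, and the two balancings of $T_0$ are exactly the paper's proof. The paper simply imports the Frobenius rate as \Cref{lem:estimator} (Theorem 3.2 of \citet{lee2025gl-lowpopart}) instead of re-deriving it.

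One step of your sketch does not go through as written, and the paper's proof has the same hidden problem. It is step 3, ``using independence of the two sides.'' In the exploration phase, $\bm\phi_s=\phi(\vx_s,\va_s^1)$ and $\tilde{\bm\phi}_s=\phi(\vx_s,\va_s^2)$ share the context $\vx_s$, so they are only conditionally independent. The population curvature is therefore $\E_{\vx\sim d_0}[\tr(\mE^\top\Sigma_{\vx}\mE\Sigma_{\vx})]$ with $\Sigma_{\vx}:=\E_{\va\sim\rho(\cdot\mid\vx)}[\phi(\vx,\va)\phi(\vx,\va)^\top]$. \Cref{ass:REC}, however, only lower-bounds the average $\E_{\vx}[\Sigma_{\vx}]$, and for skew-symmetric $\mE$ this curvature can vanish.

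For a concrete failure, let $\rho(\cdot\mid\vx)$ be a point mass for every $\vx$, with the selected features spanning $\sR^d$ across contexts. Then \Cref{ass:REC} holds with some $C_{\min}>0$, yet $\bm\phi_s=\tilde{\bm\phi}_s$, so $\bm\phi_s^\top\bm\Theta\tilde{\bm\phi}_s=0$ for every $\bm\Theta\in\Skew(d)$. Every exploration comparison is an item against itself, so the data carry no information about $\bm\Theta_\star$ and the penalized MLE is $\widehat{\bm\Theta}=\vzero$. Consequently $\hat\pi$ is the minimizer of $\psi$, and the commit-phase regret is linear in $T$ whenever that minimizer differs from the true regularized SNE.

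To obtain $C_{\min}^2\|\mE\|_F^2$ you need something stronger. One option is per-context coverage $\Sigma_{\vx}\succeq C_{\min}\mI_d$ for $d_0$-a.e.\ $\vx$, which gives $\E_{\vx}[\Sigma_{\vx}\otimes\Sigma_{\vx}]\succeq C_{\min}^2\mI_{d^2}$; the other is a context-free setting. The $C_{\min}^{2}$ in \Cref{lem:estimator} is precisely what one gets by treating the pair design as $\Sigma\otimes\Sigma$, which is why the paper inherits the same gap.

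The check you postpone at the end is also not just a matter of constants. The estimation bound needs $T_0\gtrsim\kappa^{-2}C_{\min}^{-4}dr\log\frac{d}{\delta}$, whereas the hypothesis constrains $T$, not $T_0$.
\begin{itemize}
\item For the first choice of $T_0$, this requires $T\gtrsim\kappa^{-2}C_{\min}^{-4}d^{2}r\log\frac{d}{\delta}/(\eta\beta)$.
\item For the second choice, it requires $T\gtrsim\kappa^{-2}C_{\min}^{-4}d^{3/2}r\log\frac{d}{\delta}$.
\end{itemize}
Neither follows from $T\gtrsim\kappa^{-2}C_{\min}^{-4}dr\log\frac{d}{\delta}$; the first does only when $\eta\beta\gtrsim d$. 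The paper's proof skips this as well. The simple repair is to explore for the maximum of the balanced $T_0$ and this sample threshold, which adds a $T$-independent term to both bounds.
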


\paragraph{Discussions.}
This result highlights two critical theoretical insights for the high-dimensional regime.
First, thanks to the quadratic error bound established in \cref{thm:regularized}, \texttt{ETC} is able to achieve a fast $\tilde{\gO}(\sqrt{\eta r T})$ rate.
Interestingly, this surpasses the $\tilde{\gO}(T^{2/3})$ rate typically associated with \texttt{ETC} algorithms~\citep{banditalgorithms}. The tightness of these bounds depends directly on the regularization coefficient $\eta$; specifically, focusing on $\eta, d,$ and $T$, the $\tilde{\gO}(\sqrt{\eta r T})$ bound is asymptotically tighter than the $T^{2/3}$ regret whenever $\eta \lesssim (T / \log d)^{1/3}$.
Second, both regrets in \cref{thm:regularized-sqrt} explicitly scale with $r$ rather than $d$.
This confirms that our framework can effectively exploit the low-rank structure of general preference games.

\section{Conclusion}
\label{sec:conclusion}

In this work, we investigated regularized max-regret minimization under the GBPM with bandit feedback, utilizing it as a theoretical abstraction to analyze the statistical impact of general regularizers beyond reverse KL.
Under this framework, we demonstrated that ``fast'' regret rates are \emph{not} an exclusive artifact of KL-geometry, but can be achieved for \emph{any} strongly convex regularizer.
Specifically, we first established a novel quadratic error bound on the dual gap that is central to our subsequent results.
Its proof, which combines the strong convexity of the regularizer with the skew-symmetry of the preference matrix and the IPM representation of the $\ell_1$-distance, stands as a key technical contribution that may be of independent interest.
Armed with this crucial theorem, we showed that under a feature diversity assumption (\Cref{ass:REC}), \texttt{Greedy Sampling} and \texttt{Explore-Then-Commit} yield polylogarithmic and $\poly(d)$-free (up to $C_{\min}^{-1}$) regret, respectively.
We detail further future directions in \cref{app:future}.

\bibliographystyle{plainnat}
\bibliography{references}

\newpage
\appendix

\crefalias{section}{appendix}
\crefname{appendix}{Appendix}{Appendices}
\Crefname{appendix}{Appendix}{Appendices}

\onecolumn
\tableofcontents
\vspace{0.5cm}

\newpage
\section{Pseudocodes for \texttt{Greedy Sampling} and \texttt{Explore-Then-Commit}}
\label{app:pseudocodes}

\begin{algorithm2e}[!h]
\caption{\texttt{Greedy Sampling}}
\label{alg:greedy}
\textbf{Input:} Exploration policy $\rho$\;
\BlankLine

Initialize $\hat{\pi}_1 \gets \rho$\;

\For{$t = 1, 2, \cdots, T$}{
    Observe $\vx_t \sim d_0$\;
    
    Sample ${\color{blue} \va_t^1} \sim \hat{\pi}_t(\cdot | \vx_t)$ and ${\color{red} \va_t^2} \sim \rho(\cdot | \vx_t)$\;
    
    Observe $r_t := \indicator[{\color{blue}\va_t^1} \succ {\color{red}\va_t^2}] \sim \mathrm{Ber}(\mu({\color{blue}\phi_t^1}^\top \bm\Theta_\star {\color{red}\phi_t^2}) \mid \vx_t)$, where $\phi_t^i := \phi(\vx_t, \va_t^i)$\;

    Compute an estimator $\widehat{\bm\Theta}_{t+1} \in \Skew(d)$\;

    Compute a (symmetric) Nash equilibrium:
    $\hat{\pi}_{t+1} \gets \argmax_{\pi^1 \in \Pi} \min_{\pi^2 \in \Pi} J_{\eta}(\pi^1, \pi^2; \widehat{\bm\Theta}_{t+1}).$\;
}
\end{algorithm2e}

\begin{algorithm2e}[!h]
\caption{\texttt{Explore-Then-Commit}}
\label{alg:etc}
\textbf{Input:} Exploration policy $\rho$ and budget $T_0$\;
\BlankLine

\For{$t = 1, 2, \cdots, T_0$}{
    Observe $\vx_t \sim d_0$\;
    
    Sample ${\color{blue} \va_t^1} \sim \rho(\cdot | \vx_t)$ and ${\color{red} \va_t^2} \sim \rho(\cdot | \vx_t)$\;
    
    Observe $r_t := \indicator[{\color{blue}\va_t^1} \succ {\color{red}\va_t^2}] \sim \mathrm{Ber}(\mu({\color{blue}\phi_t^1}^\top \bm\Theta_\star {\color{red}\phi_t^2}) \mid \vx_t)$, where $\phi_t^i := \phi(\vx_t, \va_t^i)$\;
}

Compute an estimator $\widehat{\bm\Theta} \in \Skew(d)$\;

Compute a (symmetric) Nash equilibrium:
$\hat{\pi} \gets \argmax_{\pi^1 \in \Pi} \min_{\pi^2 \in \Pi} J_{\eta}(\pi^1, \pi^2; \widehat{\bm\Theta}) .$\;

\For{$t = T_0+1, \cdots, T$}{
    Symmetrically commit to $(\hat{\pi}, \hat{\pi})$\;
}
\end{algorithm2e}

\newpage
\section{\texorpdfstring{Proof of Lemma~\ref{lem:strongly-convex}}{Proof of Lemma 3.4}}
\label{app:convexity}

For each fixed context $\vx \in \gX$, $J(\hat{\pi}, \pi | \vx) = \E_{\va \sim \hat{\pi}(\cdot | \vx), \vb \sim \pi(\cdot | \vx)}[\mu(\va^\top \Theta \vb)]$ is linear in $\pi(\cdot | \vx)$.
Because the sum of a linear function and a strongly convex function remains strongly convex, the regularized objective $J_{\eta}(\hat{\pi}, \pi | x)$ is $(\eta \beta)^{-1}$-strongly convex in $\ell_1$.
By the zeroth-order characterization of strong convexity~\citep[Theorem 2.1.9]{nesterov}, for any $\pi, \tilde{\pi} \in \Pi$ and $\alpha \in [0, 1]$, the following holds point-wise for any $\vx$:
\begin{equation}
    J_{\eta}(\hat{\pi}, \alpha \pi + (1-\alpha) \tilde{\pi} | \vx) \le \alpha J_{\eta}(\hat{\pi}, \pi | \vx) + (1-\alpha) J_{\eta}(\hat{\pi}, \tilde{\pi} | \vx) - \frac{\alpha(1-\alpha)}{2 \eta \beta} \|\pi(\cdot | \vx) - \tilde{\pi}(\cdot | \vx) \|_1^2
\end{equation}
Taking the expectation over $\vx \sim d_0$ on both sides, choosing $\pi = \hat{\pi}$, and denoting the global objective as $F(\pi) := \mathbb{E}_{\vx \sim d_0}[J_{\eta}(\hat{\pi}, \pi | \vx)]$ for simplicity, we obtain:
\begin{equation}
    F(\alpha \pi + (1-\alpha) \tilde{\pi}) \le \alpha F(\pi) + (1-\alpha) F(\tilde{\pi}) - \frac{\alpha(1-\alpha)}{2 \eta \beta} \E_{\vx \sim d_0}\left[ \|\pi(\cdot | \vx) - \tilde{\pi}(\cdot | \vx) \|_1^2 \right],
\end{equation}
Choosing $\tilde{\pi} = \argmin_{\pi \in \Pi} F(\pi)$, we have that $F(\tilde{\pi}) \leq F(\alpha \hat{\pi} + (1-\alpha) \tilde{\pi}),$ for \emph{any} $\alpha \in [0, 1]$.
Rearranging the inequality and dividing both sides by $\alpha,$ the following holds for any $\alpha \in (0, 1]$:
\begin{equation}
    \underbrace{\E_{\vx \sim d_0}[J_{\eta}(\hat{\pi}, \hat{\pi} \mid \vx)]}_{= \frac{1}{2}} - E_{\vx \sim d_0}[J_{\eta}(\hat{\pi}, \tilde{\pi} \mid \vx)]
    = F(\hat{\pi}) - F(\tilde{\pi}) \ge \frac{1 - \alpha}{2 \eta \beta} \E_{\vx \sim d_0}\left[ \|\pi(\cdot | \vx) - \tilde{\pi}(\cdot | \vx) \|_1^2 \right].
\end{equation}
We then conclude by taking the limit $\alpha \rightarrow 0^+.$
\qed

\newpage

\section{\texorpdfstring{Statistical Cost of Obtaining $\rho$}{Statistical Cost of Obtaining rho}}
\label{app:rho}

Let us define the population E-optimal design:
\begin{equation}
    \rho^\star \gets \argmax_{\rho : \gX \rightarrow \Delta(\gA)} \left\{ L(\rho) \triangleq \lambda_{\min}\left( \E_{\vx \sim d_0}\E_{\va \sim \rho(\cdot | \vx)}\left[ \bm\phi(\vx, \va) \bm\phi(\vx, \va)^\top \right] \right) \right\},
\end{equation}
and let us denote $C_{\min} := L(\rho^\star)$.
Recall that $\bignorm{\bm\phi(\vx, \va)}_2 \leq 1$, always.

\newcommand{\Rho}{\mathrm{P}}
\paragraph{Offline Scenario.}
Suppose that we have a $\{\vx_i\}_{i \in [N]}$ with $\vx_i \sim d_0$ i.i.d., and suppose that we have access to an exploratory policy class $\Rho \subset \{ \rho : \gX \rightarrow \Delta(\gA) \}$ that satisfies realizability, i.e., $\rho^\star \in \Rho$.
For simplicity, suppose that $|\Rho| < \infty$, as if not, then one should be able to extend the arguments using standard covering and uniform convergence arguments, provided that $\Rho$ has a finite complexity measure.

We define the empirical E-optimal design:
\begin{equation}
    \hat{\rho}_N \gets \argmax_{\rho : \gX \rightarrow \Delta(\gA)} \left\{ \hat{L}_N(\rho) \triangleq \lambda_{\min}\left( \frac{1}{N} \sum_{i=1}^N \E_{\va \sim \rho(\cdot | \vx_i)}\left[ \bm\phi(\vx_i, \va) \bm\phi(\vx_i, \va)^\top \right] \right) \right\}
\end{equation}
Then we have the following guarantee:
\begin{proposition}
    $\sP\left( L(\hat{\rho}_N) \geq C_{\min} / 2 \right) \geq 1 - \delta$, provided that $N \geq 32 C_{\min}^{-2} \log\frac{2 d |\Pi|}{\delta}$.
\end{proposition}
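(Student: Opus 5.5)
The plan is to combine uniform concentration of the empirical design matrices over the finite class $\Rho$ with Weyl's inequality for eigenvalues. Note that the statement writes $|\Pi|$, but the union bound I will use runs over the exploratory class; I will read it as $|\Rho|$. I also assume that the empirical argmax defining $\hat{\rho}_N$ is taken over $\Rho$; if it were over all policies, I would first restrict it to $\Rho$ by the same realizability reasoning.

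For each fixed $\rho \in \Rho$, define the i.i.d. random matrices
\begin{equation}
    \mM_i(\rho) := \E_{\va \sim \rho(\cdot \mid \vx_i)}\left[ \bm\phi(\vx_i, \va) \bm\phi(\vx_i, \va)^\top \right], \qquad i \in [N].
\end{equation}
Each $\mM_i(\rho)$ is symmetric PSD. Since $\bignorm{\bm\phi}_2 \leq 1$, we have $\bignormop{\mM_i(\rho)} \leq 1$. Their common mean is $\mM(\rho) := \E_{\vx \sim d_0}[\mM_1(\rho)]$, so that $L(\rho) = \lambda_{\min}(\mM(\rho))$ and $\hat{L}_N(\rho) = \lambda_{\min}(\frac{1}{N}\sum_i \mM_i(\rho))$.

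Next I apply the matrix Hoeffding (or Azuma) inequality to the centered summands $\mM_i(\rho) - \mM(\rho)$. These are bounded in operator norm by $1$, with squared-norm proxy at most $1$. This gives
\begin{equation}
    \sP\left( \bignormop{\tfrac{1}{N}\textstyle\sum_i \mM_i(\rho) - \mM(\rho)} \geq \varepsilon \right) \leq 2d \exp\left( -N\varepsilon^2/8 \right).
\end{equation}
Set $\varepsilon = C_{\min}/4$ and take a union bound over $\rho \in \Rho$. The failure probability is then at most $2d|\Rho|\exp(-N C_{\min}^2/128)$. I expect the matching of constants to be the main obstacle, since the stated requirement $N \geq 32 C_{\min}^{-2}\log\frac{2d|\Rho|}{\delta}$ is tighter than what this route gives. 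I would first try the sharper Hoeffding constant (variance proxy $\sigma^2 = 1$, giving exponent $N\varepsilon^2/2$). Failing that, I would use a one-sided matrix Chernoff or Bernstein bound, which exploits PSD-ness and $\lambda_{\max} \leq 1$. In the worst case, the conclusion holds with a larger absolute constant.

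On the good event, Weyl's inequality gives $|\hat{L}_N(\rho) - L(\rho)| \leq C_{\min}/4$ for every $\rho \in \Rho$. Optimality of $\hat{\rho}_N$ together with realizability ($\rho^\star \in \Rho$) then yields
\begin{equation}
    L(\hat{\rho}_N) \geq \hat{L}_N(\hat{\rho}_N) - \tfrac{C_{\min}}{4} \geq \hat{L}_N(\rho^\star) - \tfrac{C_{\min}}{4} \geq L(\rho^\star) - \tfrac{C_{\min}}{2} = \tfrac{C_{\min}}{2}.
\end{equation}
The union bound is what makes it valid to evaluate the concentration at the data-dependent policy $\hat{\rho}_N$.
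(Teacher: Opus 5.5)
Your proposal is correct and follows essentially the paper's proof. You apply matrix Hoeffding for each fixed $\rho$ and take a union bound over the finite class $\mathrm{P}$. The paper's proof also unions over $\mathrm{P}$, so $|\Pi|$ in the statement is a typo for $|\mathrm{P}|$. The paper likewise tacitly takes $\hat{\rho}_N \in \mathrm{P}$, and you genuinely need this assumption: realizability cannot recover it if the argmax were over all policies, so your hedge on that point would not work. You then finish with Weyl's inequality combined with the optimality of $\hat{\rho}_N$ and $\rho^\star \in \mathrm{P}$, as the paper does.

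The constant issue resolves exactly as you anticipated. The paper uses the improved matrix Hoeffding exponent $N\varepsilon^2/2$ (Tropp's Theorem 1.3 with Remark 7.4). With $\varepsilon = C_{\min}/4$, this gives failure probability $2d|\mathrm{P}|\exp(-N C_{\min}^2/32) \le \delta$, which is precisely the condition $N \geq 32 C_{\min}^{-2} \log\frac{2d|\mathrm{P}|}{\delta}$.
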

\begin{proof}
    Note that each matrix $\mZ_i(\rho) \triangleq \E_{\vx \sim d_0}\E_{\va \sim \rho(\cdot | \vx)}\left[ \bm\phi(\vx, \va) \bm\phi(\vx, \va)^\top \right] - \E_{\va \sim \rho(\cdot | \vx_i)}\left[ \bm\phi(\vx_i, \va) \bm\phi(\vx_i, \va)^\top \right]$ is i.i.d. that satisfies $\E[\mZ_i(\rho)] = 0$ and $\mZ_i(\rho)^2 \preceq \mI_d$.\footnote{This is because for $\vzero \preceq \mA, \mB \preceq \mI_d$, $-\mI_d \preceq \mA - \mB \preceq \mI_d$, which then implies that $(\mA - \mB)^2 \preceq \mI_d.$}
    Thus, invoking matrix Hoeffding~\citep[Theorem 1.3 \& Remark 7.4]{tropp2012user-friendly}, we have that \emph{for each} $\rho \in \Rho,$ with probability at least $2d \exp\left(-\frac{N \varepsilon^2}{2} \right),$
    \begin{equation}
        \bignormop{\E_{\vx \sim d_0}\E_{\va \sim \rho(\cdot | \vx)}\left[ \bm\phi(\vx, \va) \bm\phi(\vx, \va)^\top \right] - \frac{1}{N} \sum_{i=1}^N \E_{\va \sim \rho(\cdot | \vx_i)}\left[ \bm\phi(\vx_i, \va) \bm\phi(\vx_i, \va)^\top \right]} \geq \varepsilon.
    \end{equation}
    Union bound over $\rho \in \Rho$ and rearranging, we have that with probability at least $1 - \delta,$
    \begin{equation}
        \gE_N \triangleq \sup_{\rho \in \Rho} \bignormop{\frac{1}{N} \sum_{i=1}^N \mZ_i(\rho)} \leq \sqrt{\frac{2}{N} \log\frac{2 d|\mathrm{P}|}{\delta}}.
    \end{equation}
    Then, we have that with probability at least $1 - \delta$,
    \begin{align}
        L(\hat{\rho}_N) &= L(\rho^\star) + \underbrace{\hat{L}_N(\rho^\star) - L(\rho^\star)}_{\geq -\gE_N} + \underbrace{L(\hat{\rho}_N) - \hat{L}_N(\hat{\rho}_N)}_{\geq -\gE_N} + \underbrace{\hat{L}_N(\hat{\rho}_N) - \hat{L}_N(\rho^\star)}_{\geq 0} \\
        &\geq C_{\min} - 2 \gE_N \tag{Weyl's inequality, $\hat{\rho}_N = \argmax_\rho \hat{L}_N(\rho)$} \\
        &\geq C_{\min} - \sqrt{\frac{8}{N} \log\frac{2 d|\mathrm{P}|}{\delta}} \geq \frac{C_{\min}}{2},
    \end{align}
    and thus, the statement follows.
\end{proof}

\paragraph{Online Scenario.}
This is largely inspired by the ``design static or nonadaptive policies that can be used to gather data to identify optimal contextualized decision policies'' as in \citet{zanette2021design}.

Suppose that we interact with the environment for $T_0$ iterations via the following elliptical exploration algorithm:
\begin{enumerate}
    \item \textbf{Initialize:} $\mV_1 = \lambda \mI_d$ for some regularization parameter $\lambda > 0$.
    \item \textbf{For} $t = 1, \dots, T_0$:
    \begin{enumerate}
        \item Observe context $\vx_t \sim d_0$.
        \item Define the exploratory policy $\rho_t(\vx_t)$ to \emph{deterministically} select the action maximizing the elliptical bonus:
        \begin{equation}
            \rho_t(\vx) \coloneq \argmax_{\va \in \gA} \bignorm{\bm\phi(\vx, \va)}_{\mV_{t-1}^{-1}}^2.
        \end{equation}
        \item Choose action $\va_t = \rho_t(\vx_t)$ and update the covariance matrix: $\mV_{t+1} = \mV_t + \bm\phi(\vx_t, \va_t)\bm\phi(\vx_t, \va_t)^\top$.
    \end{enumerate}
    \item \textbf{Return:} The uniform mixture policy $\bar{\rho}_{T_0} \triangleq \frac{1}{T_0} \sum_{t=1}^{T_0} \rho_t$.
\end{enumerate}

Then we have the following guarantee:
\begin{proposition}
    Set $T_0 \asymp \frac{d^2}{C_{\min}^2} \log\frac{d}{C_{\min}^2}\log\frac{d}{\delta}$ and $\lambda = 1$.
    Then, the above elliptical exploration algorithm guarantees the following:
    \begin{equation}
        \sP\left( L(\bar{\rho}_{T_0}) \gtrsim \frac{C_{\min}}{d \log\frac{d}{C_{\min}^2} \log\frac{1}{\delta}} \right) \geq 1 - \delta.
    \end{equation}
\end{proposition}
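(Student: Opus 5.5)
Let $\mM_t \triangleq \E_{\vx \sim d_0}\E_{\va \sim \rho_t(\cdot\mid\vx)}[\bm\phi(\vx,\va)\bm\phi(\vx,\va)^\top]$ denote the population second moment of the $t$-th policy. By linearity, $L(\bar{\rho}_{T_0}) = \lambda_{\min}\bigl(\frac{1}{T_0}\sum_t \mM_t\bigr)$, so it suffices to lower bound $\lambda_{\min}(\sum_{t\le T_0}\mM_t)$. The proof has three steps.

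\textbf{(i) Greedy bonus dominates the optimal design.} Since $\rho_t$ maximizes the bonus pointwise in $\vx$, we have
\begin{equation}
\E_{\vx}\bigl[\max_{\va}\bignorm{\bm\phi(\vx,\va)}_{\mV_{t}^{-1}}^2\bigr] \ge \E_{\vx}\E_{\va\sim\rho^\star}\bigl[\bignorm{\bm\phi}_{\mV_{t}^{-1}}^2\bigr] = \tr\bigl(\mV_{t}^{-1}\mM^\star\bigr) \ge C_{\min}\,\tr(\mV_{t}^{-1}) \ge C_{\min}\,\lambda_{\max}(\mV_t^{-1}).
\end{equation}
Hence the expected bonus collected at round $t$ is at least $C_{\min}/\lambda_{\min}(\mV_t)$.

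\textbf{(ii) Elliptical potential.} With $\lambda=1$ and $\|\bm\phi\|\le1$, the standard elliptical potential lemma gives $\sum_t \min\{1,\|\bm\phi_t\|^2_{\mV_t^{-1}}\} \le 2d\log(1+T_0/d)$ deterministically. Each realized bonus is at most $1$. A Freedman inequality, as used for \Cref{thm:regularized-log}, passes from conditional expectations to realized values. The result is that, with probability $1-\delta$, $\sum_t C_{\min}/\lambda_{\min}(\mV_t) \lesssim d\log T_0 + \log(1/\delta)$. Since $\lambda_{\min}(\mV_t)$ is nondecreasing, this yields $\lambda_{\min}(\mV_{T_0}) \gtrsim C_{\min}T_0/(d\log T_0\log\frac1\delta)$. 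To see this, if it failed, every term would be at least $d\log T_0/T_0$ times a large constant, contradicting the bound.

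\textbf{(iii) Realized to population.} The goal is to convert this into $\lambda_{\min}(\sum_t\mM_t)$. Here $\mV_{T_0}-\mI = \sum_t \bm\phi_t\bm\phi_t^\top$ and $\E_{t-1}[\bm\phi_t\bm\phi_t^\top]=\mM_t$, because $\rho_t$ is $\gF_{t-1}$-measurable and $\vx_t\sim d_0$ is fresh. Matrix Freedman with increments bounded by $\mI_d$ and predictable variance at most $\sum_t\mM_t$ gives, with probability $1-\delta$,
\begin{equation}
\lambda_{\min}\Bigl(\sum_t\mM_t\Bigr) \ge \tfrac12\lambda_{\min}\Bigl(\sum_t\bm\phi_t\bm\phi_t^\top\Bigr) - O\bigl(\log\tfrac d\delta\bigr).
\end{equation}
Combining with (ii) and dividing by $T_0$ gives $L(\bar\rho_{T_0}) \gtrsim C_{\min}/(d\log T_0\log\frac1\delta) - \log(d/\delta)/T_0$. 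The stated choice $T_0\asymp d^2C_{\min}^{-2}\log\frac{d}{C_{\min}^2}\log\frac d\delta$ makes the additive error negligible and gives $\log T_0 \asymp \log\frac{d}{C_{\min}^2}$.

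The main obstacle is step (ii). The bound $\tr(\mV^{-1})\ge\lambda_{\max}(\mV^{-1})$ loses a factor of $d$; that loss is acceptable, since it is why the final guarantee is $C_{\min}/d$. The real difficulty is the monotonicity argument converting a summed bound into a bound on $\lambda_{\min}$ at the final time. Two cleaner routes are to use $\tr(\mV_t^{-1})$ directly, or to split time into epochs, where $\lambda_{\min}$ must grow in each epoch or else the potential budget is exhausted. I would first try the epoch or "stopping" argument. I would also double-check that the Freedman step in (iii) handles the $d$-dimensional union over directions via the matrix version. This is where the $\log\frac d\delta$ and the $C_{\min}^{-2}$ (rather than $C_{\min}^{-1}$) in $T_0$ come from.
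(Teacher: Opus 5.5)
Your steps (i) and (ii) are the paper's argument. It uses the same bound $\E_{\vx}[\max_{\va}\|\bm\phi(\vx,\va)\|^2_{\mV_t^{-1}}] \ge C_{\min}\tr(\mV_t^{-1})$, and the same elliptical potential lemma plus scalar Freedman with conditional variance at most the conditional mean, giving a linear self-bounding inequality. It then sums $\tr(\mV_t^{-1}) \ge 1/\lambda_{\min}(\mV_t) \ge 1/\lambda_{\min}(\mV_{T_0})$ over $t \le T_0$. So the monotonicity step you call the main obstacle is not one. The contradiction argument you already wrote in (ii) is complete, and no epoch or stopping argument is needed.

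The real difference is step (iii). The paper applies matrix Azuma to $\mZ_t = \mM_t - \bm\phi_t\bm\phi_t^\top$ (your $\mM_t$ is its $\Sigma(\rho_t)$), using only $-\mI \preceq \mZ_t \preceq \mI$. This gives $\lambda_{\min}(\sum_t \mZ_t) \ge -\sqrt{2T_0\log(2d/\delta)}$, and that additive $\sqrt{T_0}$ loss is what forces $T_0 \gtrsim d^2 C_{\min}^{-2}$ up to logarithmic factors.

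Your variance-aware route is valid and sharper, but only in its L\"owner-order form. Let $Y_k = \sum_{t \le k}(\bm\phi_t\bm\phi_t^\top - \mM_t)$ and $W_k = \sum_{t\le k}\E_{t-1}[(\bm\phi_t\bm\phi_t^\top - \mM_t)^2]$. Since $\|\bm\phi_t\|_2 \le 1$, we have $W_k \preceq \sum_{t \le k} \mM_t$. Tropp's supermartingale $\tr\exp(Y_k - (e-2)W_k)$ (the engine behind matrix Freedman, with $R = \theta = 1$) together with Markov then gives $\sum_{t\le T_0} \bm\phi_t\bm\phi_t^\top \preceq (e-1)\sum_{t\le T_0}\mM_t + \log\frac{d}{\delta}\mI$ with probability $1-\delta$.

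The commonly quoted form of matrix Freedman conditions on $\|W_k\|_{\mathrm{op}} \le \sigma^2$. With $\sigma^2 \le T_0$ it returns only the same $\sqrt{T_0\log(d/\delta)}$ error as Azuma, so you should invoke the supermartingale directly. With it, the additive error is $O(\log(d/\delta)/T_0)$, and $T_0 \gtrsim d\,C_{\min}^{-1}$ up to logarithmic factors already suffices.

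Your closing remark is therefore backwards. On your route the $C_{\min}^{-2}$ never arises; it is an artifact of the paper's Azuma step. The prescribed $T_0$ is simply more than enough, and it affects the final bound only through $\log T_0 \asymp \log\frac{d}{C_{\min}^2}$ up to lower-order terms.
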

\begin{proof}
    The proof proceeds similarly to our Freedman-based bounding of expected elliptical potentials.
    
    Let $\Sigma(\rho) \triangleq \E_{\vx \sim d_0}\E_{\va \sim \rho(\cdot \mid \vx)}[\bm\phi(\vx, \va) \bm\phi(\vx, \va)^\top]$, and $M_t$ be the conditional expectation of the elliptical bonus, conditioned on the history $\gF_{t-1} = \sigma(\vx_1, \va_1, \cdots, \vx_{t-1}, \va_{t-1})$:
    \begin{equation}
        M_t \triangleq \E_{\vx \sim d_0}\left[ \max_{\va \in \gA} \bignorm{\bm\phi(\vx, \va)}_{\mV_t^{-1}}^2 \Bigm| \gF_{t-1} \right] \leq 1.
    \end{equation}
    Because $\max \geq \E$, it upper bounds the expectation over the optimal $\rho^\star$:
    \begin{align}
        M_t &\geq \E_{\vx \sim d_0}\E_{\va \sim \rho^\star(\cdot \mid \vx)}\left[ \bignorm{\bm\phi(\vx, \va)}_{\mV_t^{-1}}^2 \Bigm| \gF_{t-1} \right]
        \overset{(*)}{=} \tr\left( \mV_t^{-1} \Sigma(\rho^\star) \right) \geq C_{\min} \tr(\mV_t^{-1}),
    \end{align}
    where $(*)$ follows from linearity of the expectation.

    Because $\lambda = 1$ and $\|\bm\phi\|_2 \leq 1$, we have $y_t \triangleq \bignorm{\bm\phi(\vx_t, \va_t)}_{\mV_t^{-1}}^2 \in [0, 1]$.
    Thus, by the elliptical potential lemma (\Cref{lem:EPL}), we have that
    \begin{equation}
        \sum_{t=1}^{T_0} y_t \leq 2d \log\left( 1 + \frac{T_0}{d} \right),
    \end{equation}
    where the last inequality follows from our choice of $\lambda = 1$.

    By the same variant of the Freedman's inequality (\Cref{lem:freedman}), for any $\xi \in (0, 1]$, the following holds with probability at least $1 - \frac{\delta}{2}$:
    \begin{equation}
        \sum_{t=1}^{T_0} (M_t - y_t) \leq (e - 2) \xi \sum_{t=1}^{T_0} \underbrace{\E[(M_t - y_t)^2 \mid \gF_{t-1}]}_{(*)} + \frac{1}{\xi} \log\frac{2}{\delta}.
    \end{equation}
    We bound $(*)$ as follows: denoting $\E_{t-1} \triangleq \E[\cdot \mid \gF_{t-1}]$,
    \begin{equation}
        \E_{t-1}[(M_t - y_t)^2] = \E_{t-1}[y_t^2] - M_t^2
        \leq \E_{t-1}[y_t] - M_t^2 \leq M_t.
    \end{equation}
    Choosing $\xi = \frac{1}{2(e - 2)}$, we have the following linear self-bounding inequality:
    \begin{equation}
        \sum_{t=1}^{T_0} M_t
        = \sum_{t=1}^{T_0} y_t + \sum_{t=1}^{T_0} (M_t - y_t)
        \leq 2d \log\left( 1 + \frac{T_0}{d} \right) + \frac{1}{2} \sum_{t=1}^{T_0} M_t + 2(e - 2) \log\frac{2}{\delta}.
    \end{equation}
    Combining this with the lower bound of $M_t$, we have that
    \begin{equation}
        C_{\min} \sum_{t=1}^{T_0} \tr(\mV_t^{-1}) \leq \sum_{t=1}^{T_0} M_t \leq 4 d \log\left( 1 + \frac{T_0}{d} \right) + 4 (e - 2) \log\frac{2}{\delta}.
    \end{equation}
    As $\tr(\mV_t^{-1}) \geq \frac{1}{\lambda_{\min}(\mV_t)} \geq \frac{1}{\lambda_{\min}(\mV_{T_0})}$, we have that
    \begin{equation}
        \frac{\lambda_{\min}(\mV_{T_0})}{T_0} \geq \frac{C_{\min}}{4 d \log\left( 1 + \frac{T_0}{d} \right) + 4 (e - 2) \log\frac{2}{\delta}}.
    \end{equation}

    Let $\mZ_t \triangleq \Sigma(\rho_t) - \bm\phi(\vx_t, \va_t) \bm\phi(\vx_t, \va_t)^\top$. 
    Because $\rho_t$ is fully determined (measurable) by $\gF_{t-1}$, we have that $\E[\bm\phi(\vx_t, \va_t) \bm\phi(\vx_t, \va_t)^\top \mid \gF_{t-1}] = \Sigma(\rho_t)$, i.e., $\{\mZ_t\}_{t=1}^{T_0}$ is a matrix martingale difference sequence adapted to $\gF_t$. 
    Furthermore, because $\mathbf{0} \preceq \Sigma(\rho_t) \preceq \mI$ and $\mathbf{0} \preceq \bm\phi \bm\phi^\top \preceq \mI$, the differences are bounded: $\lambda_{\max}(\mZ_t) \leq 1$ and $\lambda_{\min}(\mZ_t) \geq -1$.
    
    By the matrix Azuma inequality~\citep[Theorem 7.1, Remark 7.8]{tropp2012user-friendly}, with probability at least $1 - \frac{\delta}{2}$:
    \begin{equation}
        \lambda_{\min}\left( \sum_{t=1}^{T_0} \mZ_t \right) \geq -\sqrt{2 T_0 \log\frac{2d}{\delta}}.
    \end{equation}
    
    Notice that the population covariance of the uniform mixture is $\Sigma(\bar{\rho}_{T_0}) = \frac{1}{T_0} \sum_{t=1}^{T_0} \Sigma(\rho_t)$. We can rewrite this sum as:
    \begin{equation}
        \frac{1}{T_0} \sum_{t=1}^{T_0} \Sigma(\rho_t) = \frac{1}{T_0} \left( \mV_{T_0} - \lambda \mI + \sum_{t=1}^{T_0} \mZ_t \right).
    \end{equation}
    Taking the minimum eigenvalue on both sides, we lower bound $L(\bar{\rho}_{T_0})$ as follows:
    \begin{align}
        L(\bar{\rho}_{T_0}) &= \frac{1}{T_0} \lambda_{\min}\left( \mV_{T_0} - \mI + \sum_{t=1}^{T_0} \mZ_t \right) \\
        &= \frac{1}{T_0} \lambda_{\min}\left( \mV_{T_0} + \sum_{t=1}^{T_0} \mZ_t \right) - \frac{\lambda}{T_0} \\
        &\geq \frac{1}{T_0} \lambda_{\min}(\mV_{T_0}) + \frac{1}{T_0} \lambda_{\min}\left( \sum_{t=1}^{T_0} \mZ_t \right) - \frac{1}{T_0} \tag{$\lambda_{\min}(\cdot)$ is concave} \\
        &\geq \frac{C_{\min}}{4 d \log\left(1 + \frac{T_0}{d}\right) + 4 (e - 2) \log\frac{2}{\delta}} - \sqrt{\frac{2}{T_0} \log\frac{2d}{\delta}} - \frac{1}{T_0} \\
        &\geq \frac{C_{\min}}{4 d \log\left(1 + \frac{T_0}{d}\right) + 4 (e - 2) \log\frac{2}{\delta}} - \sqrt{\frac{8}{T_0} \log\frac{2d}{\delta}},
    \end{align}
    where the last inequality follows given that $T_0 \geq \left( 8 \log\frac{2d}{\delta} \right)^{-1}$.

    We lastly solve for a condition on $T_0$ such that the first term dominates, i.e.,
    \begin{equation}
        \frac{C_{\min}}{d \log\left(1 + \frac{T_0}{d}\right) + \log\frac{1}{\delta}} \gtrsim \sqrt{\frac{1}{T_0} \log\frac{d}{\delta}} \implies \sqrt{T_0} \gtrsim \frac{d \log\left(1 + \frac{T_0}{d}\right) + \log\frac{1}{\delta}}{C_{\min}} \sqrt{\log\frac{d}{\delta}}.
    \end{equation}
    Solving this condition yields the required exploration budget:
    \begin{equation}
        T_0 \asymp \frac{d^2}{C_{\min}^2} \log\frac{d}{C_{\min}^2}\log\frac{d}{\delta}.
    \end{equation}
    Plugging this $T_0$ back into our lower bound, we conclude that with probability at least $1 - \delta$ (via a union bound over the Freedman and Azuma events):
    \begin{equation}
        L(\bar{\rho}_{T_0}) \gtrsim \frac{C_{\min}}{d \log\frac{d}{C_{\min}^2} \log\frac{1}{\delta}},
    \end{equation}
    which completes the proof.
\end{proof}

\newpage

\section{\texorpdfstring{Proof of \cref{thm:regularized-log}: Regret Bound of \texttt{Greedy Sampling}}{Proof of Theorem 4.2: Regret Bound of Greedy Sampling}}
\label{app:regularized-proof}
\subsection{Main Proof}

We will use several properties of the Kronecker product $\otimes$ throughout the proof; see \citet{minka1997matrix} for a reference.
Let us denote $\mathrm{vec} : \sR^{d \times d} \rightarrow \sR^{d^2}$ as the (column-wise) vectorization operator, and $\mathrm{mat} \triangleq \mathrm{vec}^{-1} : \sR^{d^2} \rightarrow \sR^{d \times d}$ to be the matrization operator.

\paragraph{Part I. $\tilde{\gO}(\eta \beta (\log T)^2)$ Regret Bound.}
To mirror the proof sketch in the main text, we divide the proof into three parts.

\textbf{1. From Regret to Sum of Squared Errors.}
With $\vv_t := \mathrm{vec}(\bm\phi_t^2 (\bm\phi_t^1)^\top) \in \sR^{d^2}$, our MLE is defined as follows:
\begin{equation}
    \widehat{\bm\Theta}_t \gets \mathrm{mat}(\hat{\vtheta}_t), \quad
    \hat{\vtheta}_t \gets \argmin_{\vtheta \in \gK_S} \gL_t(\vtheta), \quad
    \gL_t(\vtheta) := \sum_{s=1}^{t-1} \left\{ m(\langle \vtheta, \vv_t \rangle) - r_t \langle \vtheta, \vv_t \rangle \right\},
\end{equation}
where $m(\cdot)$ is the log-partition function~\citep{graphicalmodels} such that $m' = \mu$, and
\begin{equation}
    \mathcal{K}_S := \left\{ \vtheta \in \mathbb{R}^{d^2} : \bignorm{\vtheta}_2 \leq S \text{ and } \mathrm{mat}(\vtheta)^\top = -\mathrm{mat}(\vtheta) \right\}.
\end{equation}
As $\mu(z) = (1 + e^{-z})^{-1}$ for our proof, this implies that $R_s = 1$ and $L_\mu = \frac{1}{4}$.

Let us denote the regularized Hessian of $\gL_t(\cdot)$ at the MLE $\hat{\vtheta}_t$ as
\begin{equation}
    \widehat{\mH}_t := \mI_{d^2} + \sum_{s=1}^{t-1} \dmu(\langle \hat{\vtheta}_s, \vv_s \rangle) \vv_s \vv_s^\top.
\end{equation}

We first recall the following elliptical confidence sequence:
\begin{lemma}[Theorem 3.2 of \citet{lee2024glm}]
\label{lem:confidence-sequence}
    For any adaptively collected\footnote{This can be formalized via the canonical bandit model as described in \citet[Chapter 4.6]{banditalgorithms}.} $\{(\vv_t, r_t)\}$ and any $\delta \in (0, 1)$ we have
    \begin{equation}
        \sP\left( \bignorm{\vtheta_\star - \hat{\vtheta}_t}_{\widehat{\mH}_t}^2 \lesssim \gamma_t(\delta), \ \ \forall t \geq 1 \right) \geq 1 - \delta,
    \end{equation}
    where $\gamma_t(\delta) \lesssim S^5 + S \log\frac{1}{\delta} + S d^2 \log\frac{S t}{d}$.
\end{lemma}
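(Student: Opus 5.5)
This lemma is imported verbatim from \citet[Theorem 3.2]{lee2024glm}, so the plan is to reproduce their likelihood-ratio argument in our vectorized setting with parameter dimension $d^2$. The argument has two halves. First, a \emph{loss-gap} bound
\[
\gL_t(\vtheta_\star) - \gL_t(\hat{\vtheta}_t) \lesssim \log\tfrac1\delta + d^2 \log\tfrac{St}{d} + \mathrm{poly}(S),
\]
valid uniformly in $t$. Second, a \emph{curvature} step that turns this loss gap into the Mahalanobis norm $\bignorm{\vtheta_\star - \hat{\vtheta}_t}_{\widehat{\mH}_t}^2$. Skew-symmetry plays no role beyond the fact that $\gK_S$ is a convex set containing $\vtheta_\star$.

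\textbf{Loss-gap step.} I would use a PAC-Bayes/Ville argument. For any fixed $\vtheta$, the likelihood ratio $\exp(\gL_t(\vtheta_\star) - \gL_t(\vtheta))$ is a nonnegative martingale under the true Bernoulli model, with respect to the canonical bandit filtration. This holds for adaptively chosen $\vv_s$ because the ratio is a product of conditional likelihood ratios. Averaging over a prior $Q$ on $\gK_S$ keeps it a martingale. Ville's inequality then gives, uniformly in $t$ with probability $1-\delta$,
\[
\E_{\vtheta \sim P}\bigl[\gL_t(\vtheta_\star) - \gL_t(\vtheta)\bigr] \le \KL(P, Q) + \log\tfrac1\delta
\]
for every posterior $P$ (Donsker--Varadhan). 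I would take $Q$ uniform on $\gK_S$ and $P$ uniform on a small ball of radius $\epsilon \asymp d/t$ around $\hat{\vtheta}_t$, intersected with $\gK_S$. Then $\KL(P, Q) \lesssim d^2 \log(St/d)$ by a volume ratio in the $\le d^2$-dimensional subspace of skew matrices. Since $\gL_t$ is $t$-Lipschitz in $\vtheta$ (because $|\mu - r| \le 1$ and $\|\vv_s\|_2 \le 1$), we have $\E_P[\gL_t(\vtheta)] \le \gL_t(\hat{\vtheta}_t) + t\epsilon$. This yields the displayed loss-gap bound.

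\textbf{Curvature step.} I would Taylor-expand $\gL_t(\vtheta_\star)$ around $\hat{\vtheta}_t$ with integral remainder. The first-order term is nonnegative, because $\hat{\vtheta}_t$ minimizes the convex $\gL_t$ over the convex set $\gK_S \ni \vtheta_\star$. For the remainder, I would use generalized self-concordance of the logistic link, $|\ddmu| \le \dmu$. This gives
\[
\int_0^1 (1-z)\, \dmu\bigl(\langle \hat{\vtheta}_t + z(\vtheta_\star - \hat{\vtheta}_t), \vv_s\rangle\bigr)\, dz \;\ge\; \frac{\dmu(\langle \hat{\vtheta}_t, \vv_s\rangle)}{2 + 2|\langle \vtheta_\star - \hat{\vtheta}_t, \vv_s\rangle|} \;\ge\; \frac{\dmu(\langle \hat{\vtheta}_t, \vv_s\rangle)}{2 + 4S}.
\]
Consequently $\bignorm{\vtheta_\star - \hat{\vtheta}_t}^2_{\nabla^2 \gL_t(\hat{\vtheta}_t)} \lesssim S \cdot (\text{loss gap})$. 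The identity part of $\widehat{\mH}_t$ contributes at most $\bignorm{\vtheta_\star - \hat{\vtheta}_t}_2^2 \le 4S^2$, which is absorbed into the $\mathrm{poly}(S)$ term. Multiplying the loss gap by $S$ gives the claimed form $S^5 + S\log\frac1\delta + S d^2 \log\frac{St}{d}$. The $S^5$ term comes from bookkeeping of boundary effects in the ball intersection and the Lipschitz slack, done as in the original paper.

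\textbf{Main obstacle.} The hard step is reconciling the Hessian weights. The curvature step naturally produces $\dmu(\langle \hat{\vtheta}_t, \vv_s\rangle)$, whereas $\widehat{\mH}_t$ as defined uses $\dmu(\langle \hat{\vtheta}_s, \vv_s\rangle)$ with the earlier estimates. A crude comparison of the two weights via self-concordance costs a factor $e^{2S}$, which does not match the stated polynomial-in-$S$ form. My first attempt would be to apply the same self-concordance integral bound along the segment from $\hat{\vtheta}_t$ to $\vtheta_\star$ and then on to $\hat{\vtheta}_s$, using the confidence bounds at both $t$ and $s$, to get a polynomial-in-$S$ ratio.

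If that fails, I would fall back on two options. One is to use the Hessian at $\hat{\vtheta}_t$, which is the version actually proved in \citet{lee2024glm}. The other is to note that all downstream uses only need $\widehat{\mH}_t \succeq \mI + \kappa \sum_s \vv_s \vv_s^\top$, which the definition of $\kappa$ guarantees for every $\vtheta \in \gK_S$. I expect the downstream analysis to need only this $\kappa$-lower bound, so this fallback would suffice for the regret theorem.
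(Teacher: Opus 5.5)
Your proposal is correct and is essentially the argument behind the cited Theorem~3.2 of \citet{lee2024glm}, which the paper imports without proof. That argument is a time-uniform PAC-Bayes/Ville bound on the likelihood ratio, with a uniform prior on $\gK_S$ and a uniform small-ball posterior around $\hat{\vtheta}_t$, followed by a self-concordance conversion of the loss gap into the $\mI_{d^2} + \nabla^2 \gL_t(\hat{\vtheta}_t)$ norm. Your own bookkeeping already gives $\gamma_t(\delta) \lesssim S^2 + S\log\frac{1}{\delta} + S d^2 \log\frac{St}{d}$, which implies the stated form for $S \gtrsim 1$, so you do not need to attribute an $S^5$ term to boundary effects.

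Your ``main obstacle'' comes only from a typo in the appendix definition of $\widehat{\mH}_t$. The main text and the phrase ``at the MLE $\hat{\vtheta}_t$'' both mean the Hessian at $\hat{\vtheta}_t$. So go directly with your first fallback. Drop the segment-chaining attempt: it is unnecessary, and in general it would only yield an $e^{\gO(S)}$ weight comparison.
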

We note that this is used solely for the proof and does not affect the algorithm in any way.

From the $\eta$-dependent bound of \cref{thm:regularized}, we have that
\begin{equation}
    \MBRReg(T) \leq (4 L_\mu^2 \eta \beta + L_\mu) \sum_{t=1}^T \E_{\bm\phi_t \sim \hat{\pi}_t}\left[ \bignorm{\mE_t \bm\phi_t}_2^2 \right].
\end{equation}

We decompose the squared Euclidean norm as follows:
denoting the standard basis of $\sR^d$ as $\{\ve_j\}_{j \in [d]}$,
\begin{align}
    \bignorm{\mE_t \bm\phi_t}_2^2 &= \sum_{j=1}^d \left( \ve_j^\top \mE_t \bm\phi_t \right)^2 \\
    &= \sum_{j=1}^d \langle \mathrm{vec}(\mE_t), \mathrm{vec}(\ve_j \bm\phi_t^\top) \rangle^2 \\
    &\leq \sum_{j=1}^d \bignorm{\mathrm{vec}(\mE_t)}_{\widehat{\mH}_t}^2 \bignorm{\mathrm{vec}(\ve_j \bm\phi_t^\top)}_{\widehat{\mH}_t^{-1}}^2 \tag{Cauchy-Schwarz} \\
    &\lesssim \gamma_t(\delta)^2 \sum_{j=1}^d \bignorm{\bm\phi_t \otimes \ve_j}_{\widehat{\mH}_t^{-1}}^2. \tag{\cref{lem:confidence-sequence}}
\end{align}

\textbf{2. Towards Expected Elliptical Potentials.}

We now present our key technical lemma:

\begin{lemma}[Coverage Lemma]
\label{lem:coverage}
    Let $\tilde{\bm\phi} \sim \rho$ be such that $\E_{\tilde{\bm\phi} \sim \rho}\left[ \tilde{\bm\phi} \tilde{\bm\phi}^\top \right] \succeq C_{\min} \mI_d.$
    Then, for any positive semi-definite $\mM \in \sR^{d^2 \times d^2}$ and any vector $\bm\phi \in \sR^d$, the following holds:
    \begin{equation}
        \sum_{j=1}^d (\bm\phi \otimes \ve_j)^\top \mM (\bm\phi \otimes \ve_j) \leq C_{\min}^{-1} \E_{\tilde{\bm\phi} \sim \rho} \left[ \mathrm{vec}(\bm\phi \tilde{\bm\phi}^\top)^\top \mM \mathrm{vec}(\bm\phi \tilde{\bm\phi}^\top) \right].
    \end{equation}
\end{lemma}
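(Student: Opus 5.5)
The plan is to turn both sides into traces of $\mM$ against Kronecker-structured PSD matrices, and then use that $\mA \mapsto \tr(\mM\mA)$ is monotone in the Loewner order when $\mM \succeq \vzero$.

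\textbf{Step 1: a vectorization convention.} First fix the vectorization convention so that the two sides place $\bm\phi$ in the same Kronecker factor. I will read $\mathrm{vec}(\bm\phi\tilde{\bm\phi}^\top)$ as $\bm\phi\otimes\tilde{\bm\phi}$, matching the left-hand vectors $\bm\phi\otimes\ve_j$. Under the other convention both sides are conjugated by the same commutation (perfect-shuffle) permutation matrix. That swaps the two factors. Replacing $\mM$ by its permuted version keeps it PSD, so the argument is unchanged. I expect this bookkeeping to be the only real obstacle: with mismatched orderings the inequality would compare $\bm\phi\bm\phi^\top\otimes\mI_d$ against $\mI_d\otimes\bm\phi\bm\phi^\top$, and it would fail for general $\mM$.

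\textbf{Step 2: the left-hand side.} Write each term as a trace, $(\bm\phi\otimes\ve_j)^\top\mM(\bm\phi\otimes\ve_j)=\tr\bigl(\mM(\bm\phi\otimes\ve_j)(\bm\phi\otimes\ve_j)^\top\bigr)$. By the mixed-product rule, $(\bm\phi\otimes\ve_j)(\bm\phi\otimes\ve_j)^\top=\bm\phi\bm\phi^\top\otimes\ve_j\ve_j^\top$. Summing over $j$ and using $\sum_j\ve_j\ve_j^\top=\mI_d$, the left-hand side equals $\tr\bigl(\mM(\bm\phi\bm\phi^\top\otimes\mI_d)\bigr)$.

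\textbf{Step 3: the right-hand side.} In the same way, $\mathrm{vec}(\bm\phi\tilde{\bm\phi}^\top)\mathrm{vec}(\bm\phi\tilde{\bm\phi}^\top)^\top=\bm\phi\bm\phi^\top\otimes\tilde{\bm\phi}\tilde{\bm\phi}^\top$. Since trace, Kronecker product and expectation are all linear, the right-hand expectation equals $\tr\bigl(\mM(\bm\phi\bm\phi^\top\otimes\E_\rho[\tilde{\bm\phi}\tilde{\bm\phi}^\top])\bigr)$.

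\textbf{Step 4: comparison.} By the coverage hypothesis, $\E_\rho[\tilde{\bm\phi}\tilde{\bm\phi}^\top]-C_{\min}\mI_d\succeq\vzero$. The Kronecker product of two PSD matrices is PSD, and $\bm\phi\bm\phi^\top\succeq\vzero$, so
\[
\bm\phi\bm\phi^\top\otimes\E_\rho[\tilde{\bm\phi}\tilde{\bm\phi}^\top]\succeq C_{\min}\,\bm\phi\bm\phi^\top\otimes\mI_d .
\]
For $\mM\succeq\vzero$ and $\mA\succeq\mB$ we have $\tr(\mM(\mA-\mB))=\tr\bigl(\mM^{1/2}(\mA-\mB)\mM^{1/2}\bigr)\ge 0$. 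Applying this gives $\text{RHS}\ge C_{\min}\cdot\text{LHS}$. Dividing by $C_{\min}>0$ proves the claimed inequality.
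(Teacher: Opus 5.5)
Your proof is correct and follows the paper's argument essentially step for step. Both write each quadratic form as $\tr(\mM\,\cdot)$, use the mixed-product rule to get $\bm\phi\bm\phi^\top\otimes\E_\rho[\tilde{\bm\phi}\tilde{\bm\phi}^\top]\succeq C_{\min}\,\bm\phi\bm\phi^\top\otimes\mI_d$, apply trace monotonicity against $\mM\succeq\vzero$, and expand $\mI_d=\sum_j\ve_j\ve_j^\top$; your reading of $\mathrm{vec}(\bm\phi\tilde{\bm\phi}^\top)$ as $\bm\phi\otimes\tilde{\bm\phi}$ is exactly the identification the paper's own proof makes (and matches the vectors $\vv_t=\mathrm{vec}(\bm\phi_t^2(\bm\phi_t^1)^\top)$ used where the lemma is applied), so your caution about the convention is well placed.
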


From hereon, we denote $\E_{t-1}[\cdot] \triangleq \E_{\bm\phi_t \sim \hat{\pi}_t, \tilde{\bm\phi}_t \sim \rho}[\cdot]$, where $\E$ is to indicate that the expectation is conditional on the history, due to $\hat{\pi}_t$ being history-dependent.

Applying the above lemma with $\mM = \widehat{\mH}_t^{-1}$, we have:
\begin{align}
    \MBRReg(T) &\lesssim \eta \beta d^2 C_{\min}^{-1} \log\frac{T}{d} \sum_{t=1}^T \E_{t-1} \left[ \mathrm{vec}(\bm\phi_t \tilde{\bm\phi}_t^\top)^\top \widehat{\mH}_t^{-1} \mathrm{vec}(\bm\phi_t \tilde{\bm\phi}_t^\top) \right] \nonumber \\
    &\leq \eta \beta \kappa^{-1} d^2 C_{\min}^{-1} \log\frac{T}{d} \sum_{t=1}^T \E_{t-1} \left[ \mathrm{vec}(\bm\phi_t \tilde{\bm\phi}_t^\top)^\top \mV_t^{-1} \mathrm{vec}(\bm\phi_t \tilde{\bm\phi}_t^\top) \right], \label{eqn:regret-gs-1}
\end{align}
where we have bounded $\gamma_t(\delta)^2 \leq \gamma_T(\delta)^2 \lesssim d^2 \log\frac{T}{d}$, and we denote $\vv_t = \bm\phi_t \tilde{\bm\phi}_t^\top$ and $\mV_t := \frac{1}{\kappa \wedge 1} \mI + \sum_{s=1}^{t-1} \vv_s \vv_s^\top$.

\textbf{3. Martingale Concentration for Realized Variance.}

We now consider the elliptical-type quantity:
\begin{align}
    \underbrace{\sum_{t=1}^T \E_{t-1}\left[ \vv_t^\top \mV_t^{-1} \vv_t \right]}_{\triangleq {\color{brown}S_T}}
    &= \sum_{t=1}^T \vv_t^\top \mV_t^{-1} \vv_t + \sum_{t=1}^T \underbrace{\left( \E_{t-1}\left[ \vv_t^\top \mV_t^{-1} \vv_t \right] - \vv_t^\top \mV_t^{-1} \vv_t \right)}_{\triangleq M_t} \nonumber \\
    & =  \underbrace{\sum_{t=1}^T \vv_t^\top \mV_t^{-1} \vv_t}_{(a)} + \underbrace{\sum_{t=1}^T M_t}_{(b)}. \label{eqn:elliptical}
\end{align}

We bound $(a)$ via the usual elliptical potential lemma (\cref{lem:EPL}):
\begin{equation}
    \sum_{t=1}^T \vv_t^\top \mV_t^{-1} \vv_t \leq 2d^2 \log\left( 1 + \frac{\kappa T}{d^2} \right).
\end{equation}

We now bound $(b)$.
The crucial observation is that $M_t$ is a martingale difference sequence that satisfies $\max_{t \geq 1} |M_t| \leq \kappa$, which prompts us to use the following variant of Freedman's inequality~\citep{freedman,beygelzimer2011contextual}:
\begin{lemma}[Lemma 3 of \citet{lee2024logistic}]
\label{lem:freedman}
    Let $M_t$ be a martingale difference sequence that satisfies $\max_{t \geq 1} |M_t| \leq R$.
    Then for any $\delta \in (0, 1)$ and $\xi \in (0, 1/R]$, we have:
    \begin{equation}
        \sP\left( \sum_{s=1}^t M_s \leq (e - 2) \xi \sum_{s=1}^t \E_{s-1}[M_s^2] + \frac{1}{\xi} \log\frac{2}{\delta}, \quad \forall t \geq 1 \right) \geq 1 - \frac{\delta}{2}.
    \end{equation}
\end{lemma}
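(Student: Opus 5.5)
This is the standard Freedman-type inequality for bounded martingale differences (Lemma 3 of \citet{lee2024logistic}). I will prove it by the exponential supermartingale method and then apply Ville's maximal inequality, which yields the uniform-in-time statement.

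\textbf{Step 1: one-step exponential bound.} Fix $\xi \in (0, 1/R]$. Then $\xi M_s \leq 1$. I will use the elementary inequality $e^{x} \leq 1 + x + (e-2)x^2$ for all $x \leq 1$. It holds because $g(x) = (e^x - 1 - x)/x^2$ is nondecreasing and $g(1) = e-2$. Taking conditional expectations and using $\E_{s-1}[M_s] = 0$ gives
\begin{equation}
    \E_{s-1}\left[ e^{\xi M_s} \right] \leq 1 + (e-2)\xi^2 \E_{s-1}[M_s^2] \leq \exp\left( (e-2)\xi^2 \E_{s-1}[M_s^2] \right).
\end{equation}

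\textbf{Step 2: supermartingale construction.} Define
\begin{equation}
    W_t := \exp\left( \xi \sum_{s=1}^t M_s - (e-2)\xi^2 \sum_{s=1}^t \E_{s-1}[M_s^2] \right), \qquad W_0 = 1.
\end{equation}
The variance term $\E_{s-1}[M_s^2]$ is $\gF_{s-1}$-measurable. Step 1 therefore gives $\E_{t-1}[W_t] \leq W_{t-1}$, so $(W_t)$ is a nonnegative supermartingale with $\E[W_0] = 1$.

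\textbf{Step 3: time-uniform tail bound.} By Ville's maximal inequality,
\begin{equation}
    \sP\left( \exists t \geq 1 : W_t \geq \tfrac{2}{\delta} \right) \leq \tfrac{\delta}{2}.
\end{equation}
On the complementary event, take logarithms and divide by $\xi > 0$. For all $t \geq 1$ simultaneously this gives
\begin{equation}
    \sum_{s=1}^t M_s \leq (e-2)\xi \sum_{s=1}^t \E_{s-1}[M_s^2] + \frac{1}{\xi}\log\frac{2}{\delta},
\end{equation}
which is exactly the claim.

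\textbf{Main obstacle.} Only two points need care. First, the scalar inequality in Step 1 must be verified on the full range $x \leq 1$. This rests on the monotonicity of $g$, which follows since $g'$ has the sign of $(x-2)e^x + x + 2$, and that quantity has the sign of $x$. Second, the uniformity over $t$ must come from Ville's inequality rather than a union bound. Ville's inequality applies to nonnegative supermartingales, and our $W_t$ is one, so it can be used directly.
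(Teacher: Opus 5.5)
Your proof is correct and is the standard exponential-supermartingale argument behind this variant of Freedman's inequality \citep{freedman,beygelzimer2011contextual}. The paper itself gives no proof and only cites Lemma~3 of \citet{lee2024logistic}; your use of Ville's maximal inequality correctly supplies the uniformity over $t$.

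One slip in your verification of the scalar inequality: since $x^3 g'(x) = (x-2)e^x + x + 2$, it is $x^3 g'(x)$, not $g'(x)$, that has the sign of $x$. That is exactly what gives $g' \geq 0$ on both sides of the origin.
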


We will now show, with high probability, that the variance term induces a linear self-bounding inequality.
Denoting ${\color{brown}Z_s} := \vv_s^\top \mV_s^{-1} \vv_s$, which satisfies $0 \leq {\color{brown} Z_s} \leq \kappa \wedge 1$,
\begin{equation}
    \E_{s-1}[M_s^2] \leq \E_{s-1}[{\color{brown}Z_s}^2] \leq (\kappa \wedge 1) \E_{s-1}[{\color{brown}Z_s}],
\end{equation}
where the first inequality follows from the fact that for any random variable $X$, $\mathrm{Var}[X] \leq \E[X^2]$.

Choosing $\xi = \frac{\kappa \wedge 1}{2 (e - 2)} \leq \kappa$ in \cref{lem:freedman}, the following holds with probability at least $1 - \frac{\delta}{2}$:
\begin{equation}
    (b) = \sum_{t=1}^T M_t \leq \frac{(\kappa \wedge 1)^2}{2} \underbrace{\sum_{t=1}^T \E_{t-1}[{\color{brown}Z_t}]}_{={\color{brown}S_T} \text{!}} + \frac{2(e - 2)}{\kappa \wedge 1} \log\frac{2}{\delta}.
\end{equation}

Now bringing everything together for Eqn.~\eqref{eqn:elliptical}, the following holds with probability at least $1 - \frac{\delta}{2}$:
\begin{align}
    {\color{brown}S_T} = (a) + (b) &\leq \underbrace{2 d^2 \log\left(1 + \frac{\kappa T}{d^2} \right)}_{(a) \leq} + \underbrace{\frac{(\kappa \wedge 1)^2}{2} {\color{brown}S_T} + \frac{2(e - 2)}{\kappa \wedge 1} \log\frac{2}{\delta}}_{(b) \leq} \nonumber \\
    &\leq \frac{1}{2} {\color{brown}S_T} + 2 d^2 \log\left(1 + \frac{\kappa T}{d^2} \right) + \frac{2(e - 2)}{\kappa \wedge 1} \log\frac{2}{\delta}, \label{eqn:self-bounding-regret}
\end{align}
which is a \emph{\textbf{\color{teal}linear, self-bounding inequality}}!

Solving for ${\color{brown}S_T}$, we have that with probability at least $1 - \frac{\delta}{2}$,
\begin{equation}
    {\color{brown}S_T} = \sum_{t=1}^T \E_{t-1}\left[ \vv_t^\top \mV_t^{-1} \vv_t \right] \leq 4 d^2 \log\left(1 + \frac{\kappa T}{d^2} \right) + \frac{4(e - 2)}{\kappa \wedge 1} \log\frac{2}{\delta}.
\end{equation}

Combining everything at Eqn.~\eqref{eqn:regret-gs-1}, we have that with probability at least $1 - \delta$ (after union bound with the confidence sequence),
\begin{equation}
    \MBRReg_{\eta}(T) \lesssim \eta \beta \kappa^{-1} d^2 C_{\min}^{-1} \left( \log\frac{T}{d} \right) \left( d^2 \log\frac{\kappa T}{d} + \kappa^{-1} \log\frac{1}{\delta} \right).
\end{equation}

\paragraph{Part II. $\tilde{\gO}(\sqrt{T})$ Regret Bound.}
By the $\eta$-independent bound of \cref{thm:regularized}, we get
\begin{align}
    \MBRReg_{\eta}(T) &\leq L_\mu \sum_{t=1}^T \E_{\bm\phi_t \sim \hat{\pi}_t}\left[ \bignorm{\mE_t \bm\phi_t}_2 + \bignorm{\mE_t \bm\phi_t}_2^2 \right] \\
    &\leq L_\mu \sqrt{T \sum_{t=1}^T \E_{\bm\phi_t \sim \hat{\pi}_t}\left[ \bignorm{\mE_t \bm\phi_t}_2^2 \right]} + L_\mu \sum_{t=1}^T \E_{\bm\phi_t \sim \hat{\pi}_t}\left[ \bignorm{\mE_t \bm\phi_t}_2^2 \right] \tag{Cauchy-Schwarz \& Jensen} \\
    &\lesssim \sqrt{T \kappa^{-1} d^4 C_{\min}^{-1} \left( \log\frac{T}{d} \right)^2} + \kappa^{-1} d^4 C_{\min}^{-1} \left( \log\frac{T}{d} \right)^2 \tag{Part I} \\
    &= \kappa^{-\frac{1}{2}} C_{\min}^{-\frac{1}{2}} d^2 \sqrt{T} \log\frac{T}{d} + \kappa^{-1} C_{\min}^{-1} d^4 \left( \log\frac{T}{d} \right)^2.
\end{align}
\qed

\subsection{\texorpdfstring{Proof of \cref{lem:coverage}: Coverage Lemma}{Proof of Lemma B.2: Coverage Lemma}}
We begin with this simple yet effective matrix lemma that will be useful throughout (we provide its proof at \cref{app:psd-proof} for completeness):
\begin{lemma}
\label{lem:psd}
    If $\mA \succeq \vzero$ and $\mB \succeq \mC \succeq \vzero$ of compatible sizes, then $\mA \otimes \mB \succeq \mA \otimes \mC$ and $\tr(\mA \mB) \geq \tr(\mA \mC)$.
\end{lemma}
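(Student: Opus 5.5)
The plan is to treat the two claims separately and reduce each to a scalar inequality between quadratic forms.

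\emph{Kronecker claim.} I would use the mixed-product property
\[
(\mA \otimes \mB) - (\mA \otimes \mC) = \mA \otimes (\mB - \mC).
\]
Set $\mD := \mB - \mC \succeq \vzero$. It then suffices to show that $\mA \otimes \mD \succeq \vzero$ whenever $\mA, \mD \succeq \vzero$. Take spectral decompositions $\mA = \sum_i \alpha_i \va_i \va_i^\top$ and $\mD = \sum_k \delta_k \vb_k \vb_k^\top$ with $\alpha_i, \delta_k \geq 0$. By bilinearity and the mixed-product rule,
\[
\mA \otimes \mD = \sum_{i,k} \alpha_i \delta_k (\va_i \otimes \vb_k)(\va_i \otimes \vb_k)^\top,
\]
which is a nonnegative combination of rank-one PSD matrices and hence PSD. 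Symmetry is automatic, since $(\mA \otimes \mD)^\top = \mA^\top \otimes \mD^\top$.

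\emph{Trace claim.} Again by linearity it suffices to show $\tr(\mA \mD) \geq 0$. Writing $\mA = \mA^{1/2}\mA^{1/2}$ and using cyclicity of the trace,
\[
\tr(\mA\mD) = \tr(\mA^{1/2} \mD \mA^{1/2}).
\]
The matrix $\mA^{1/2}\mD\mA^{1/2}$ is PSD, because $\vx^\top \mA^{1/2}\mD\mA^{1/2}\vx = (\mA^{1/2}\vx)^\top \mD (\mA^{1/2}\vx) \geq 0$. Its trace is therefore nonnegative.

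Neither step is a real obstacle. The only care needed is in the Kronecker part: the sizes of $\mA$ and $\mB$ may differ, so I would verify the mixed-product identity $(\va \otimes \vb)(\va \otimes \vb)^\top = (\va\va^\top) \otimes (\vb\vb^\top)$ directly from the definition rather than assume it. For the trace claim, "compatible sizes" forces $\mA$ and $\mB$ to have the same dimension.
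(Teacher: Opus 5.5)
Your proof is correct and follows the paper's route. Both arguments set $\mD := \mB - \mC \succeq \vzero$, show $\mA \otimes \mD \succeq \vzero$, and get $\tr(\mA\mD) = \tr(\mA^{1/2}\mD\mA^{1/2}) \geq 0$ by cyclicity. The only difference is how you justify $\mA \otimes \mD \succeq \vzero$. You use the explicit expansion $\sum_{i,k}\alpha_i\delta_k(\va_i\otimes\vb_k)(\va_i\otimes\vb_k)^\top$, while the paper quotes that the eigenvalues of $\mA\otimes\mD$ are the products $\lambda_i(\mA)\lambda_j(\mD)$; your expansion is essentially the proof of that fact.

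One terminological slip: $\mA\otimes\mB - \mA\otimes\mC = \mA\otimes(\mB-\mC)$ follows from bilinearity of $\otimes$, not the mixed-product property. You do use the mixed-product property correctly later, for $(\va\va^\top)\otimes(\vb\vb^\top) = (\va\otimes\vb)(\va\otimes\vb)^\top$.
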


We begin from the expectation on the RHS and work our way to the LHS:
\begin{align}
    \E_{\tilde{\bm\phi} \sim \rho} \left[ (\bm\phi \otimes \tilde{\bm\phi})^\top \mM (\bm\phi \otimes \tilde{\bm\phi}) \right] &= \E_{\tilde{\bm\phi} \sim \rho} \left[ \tr\left( (\bm\phi \otimes \tilde{\bm\phi})^\top \mM (\bm\phi \otimes \tilde{\bm\phi}) \right) \right] \\
    &= \E_{\tilde{\bm\phi} \sim \rho} \left[ \tr\left( \mM (\bm\phi \otimes \tilde{\bm\phi}) (\bm\phi \otimes \tilde{\bm\phi})^\top \right) \right] \tag{Cyclic property of $\tr(\cdot)$} \\
    &= \tr\left( \mM\E_{\tilde{\bm\phi} \sim \rho} \left[ (\bm\phi \otimes \tilde{\bm\phi}) (\bm\phi \otimes \tilde{\bm\phi})^\top \right] \right) \tag{Linearity of expectation} \\
    &= \tr\left( \mM \E_{\tilde{\bm\phi} \sim \rho} \left[ \bm\phi \bm\phi^\top \otimes \tilde{\bm\phi} \tilde{\bm\phi}^\top \right] \right). \tag{Mixed-product property of $\otimes$}
\end{align}
Now, let us examine $\E_{\tilde{\bm\phi} \sim \rho} \left[ \bm\phi \bm\phi^\top \otimes \tilde{\bm\phi} \tilde{\bm\phi}^\top \right]$.
Due to the linearity of the expectation and \Cref{ass:REC}, we have
\begin{equation}
    \E_{\tilde{\bm\phi} \sim \rho} \left[ \bm\phi \bm\phi^\top \otimes \tilde{\bm\phi} \tilde{\bm\phi}^\top \right]
    = \bm\phi \bm\phi^\top \otimes \E_{\tilde{\bm\phi} \sim \rho} \left[ \tilde{\bm\phi} \tilde{\bm\phi}^\top \right]
    \succeq C_{\min} \bm\phi \bm\phi^\top \otimes \mI_d,
\end{equation}
where the last Lowener order follows from \cref{lem:psd}.
Then, again by \cref{lem:psd}, we have that
\begin{align}
    \E_{\tilde{\bm\phi} \sim \rho} \left[ (\bm\phi \otimes \tilde{\bm\phi})^\top \mM (\bm\phi \otimes \tilde{\bm\phi}) \right] &= \tr\left( \mM \bm\phi \bm\phi^\top \otimes \E_{\tilde{\bm\phi} \sim \rho} \left[ \tilde{\bm\phi} \tilde{\bm\phi}^\top \right] \right) \\
    &\geq C_{\min} \tr\left( \mM \bm\phi \bm\phi^\top \otimes \mI_d \right) \\
    &= C_{\min} \tr\left( \mM \bm\phi \bm\phi^\top \otimes \left( \sum_{j=1}^d \ve_j\ve_j^\top \right) \right) \\
    &= C_{\min} \sum_{j=1}^d \tr\left( \mM \bm\phi \bm\phi^\top \otimes \ve_j \ve_j^\top \right) \\
    &= C_{\min} \sum_{j=1}^d \tr\left( \mM (\bm\phi \otimes \ve_j) (\bm\phi \otimes \ve_j)^\top \right) \tag{Mixed-product property of $\otimes$} \\
    &= C_{\min} \sum_{j=1}^d (\bm\phi \otimes \ve_j)^\top \mM (\bm\phi \otimes \ve_j).
\end{align}
\qed

\subsection{\texorpdfstring{Proof of \cref{lem:psd}: PSD Matrix Lemma}{Proof of Lemma B.4: PSD Matrix Lemma}}
\label{app:psd-proof}

Let $\mD := \mB - \mC$. Since $\mB \succeq \mC$, we have $\mD \succeq \vzero$.

We first show that $\mA \otimes \mB \succeq \mA \otimes \mC$. By bilinearity of the Kronecker product,
\begin{equation}
\mA \otimes \mB - \mA \otimes \mC
= \mA \otimes (\mB - \mC)
= \mA \otimes \mD.
\end{equation}
It therefore suffices to prove that $\mA \otimes \mD$ is positive semidefinite.

Let $\{\lambda_i(\mA)\}_{i=1}^n$ and $\{\lambda_j(\mD)\}_{j=1}^m$ denote the eigenvalues of $\mA$ and $\mD$, respectively. A standard property of Kronecker products implies that the eigenvalues of $\mA \otimes \mD$ are
\[
\bigl\{\lambda_i(\mA)\lambda_j(\mD)\bigr\}_{i \in [n],\, j \in [m]}.
\]
Since $\mA \succeq \vzero$ and $\mD \succeq \vzero$, all eigenvalues $\lambda_i(\mA)$ and $\lambda_j(\mD)$ are nonnegative, hence so are all products $\lambda_i(\mA)\lambda_j(\mD)$. Therefore $\mA \otimes \mD \succeq \vzero$, which yields $\mA \otimes \mB \succeq \mA \otimes \mC$.

Now we prove $\tr(\mA \mB) \geq \tr(\mA \mC)$. Since $\mA \succeq 0$, there exists a symmetric square root $\mA^{1/2}$ such that
$\mA = \mA^{1/2}\mA^{1/2}$. Using cyclicity of trace,
\begin{equation}
\tr(\mA\mD)
= \tr(\mA^{1/2}\mA^{1/2}\mD)
= \tr(\mA^{1/2}\mD\mA^{1/2}).
\end{equation}
Moreover, $\mA^{1/2}\mD\mA^{1/2}$ is positive semidefinite. For any $\vx \in \mathbb{R}^n$,
letting $\vy := \mA^{1/2}\vx$, we have
\begin{equation}
\vx^\top \mA^{1/2}\mD\mA^{1/2}\vx
= \vy^\top \mD \vy
\ge 0,
\end{equation}
where the inequality uses $\mD \succeq \vzero$. Hence $\mA^{1/2}\mD\mA^{1/2} \succeq \vzero$, and therefore
$\tr(\mA^{1/2}\mD\mA^{1/2}) \ge 0$. This implies $\tr(\mA\mD)\ge 0$, completing the proof.
\qed

\newpage
\section{\texorpdfstring{Proof of \cref{thm:regularized-sqrt}: Regret Bound of \texttt{Explore-Then-Commit}}{Proof of Theorem 5.2: Regret Bound of Explore-Then-Commit}}
\label{app:etc}

Recall the nuclear-norm regularized MLE~\citep{fan2019generalized,lee2025gl-lowpopart}: denoting $\mX_t := \phi(\vx_t, \va_t^1) \phi(\vx_t, \va_t^2)^\top$,
\begin{align}
    \widehat{\bm\Theta}_{T_0} &\gets \argmin_{\bm\Theta \in \Skew(d)} \gL_{T_0}(\bm\Theta)
    + \lambda_{T_0} \bignorm{\bm\Theta}_\nuc, \label{eqn:nuc-estimator-app} \\
    \gL_{T_0}(\bm\Theta) &:= \frac{1}{T_0} \sum_{t=1}^{T_0} \left\{ m(\langle \bm\Theta, \mX_t \rangle) - r_t \langle \bm\Theta, \mX_t \rangle \right\}.
\end{align}

We also introduce the following assumption used commonly in logistic and generalized linear bandits~\citep{abeille2021logistic,russac2021glm,lee2024glm}:
\begin{assumption}
\label{assumption:self-concordance}
	The link function $\mu$ is \textbf{self-concordant with constant $R_s \geq 0$}, i.e.,
	\begin{equation}
		\left| \ddmu\left( \bm\phi^\top \bm\Theta \bm\phi' \right) \right| \leq R_s \dmu\left( \bm\phi^\top \bm\Theta \bm\phi' \right), \quad \forall \bm\phi, \bm\phi' \in \gB^d(1), \forall \bm\Theta \in \Skew(d, 2r; S).
	\end{equation}
\end{assumption}
Lastly, we define the following instance-specific curvature quantity:
\begin{equation}
	\kappa_\star := \min_{\bm\phi, \bm\phi' \in \mathcal{B}^d(1)} \dot{\mu}\left( \bm\phi^\top \bm\Theta_\star \bm\phi' \right).
\end{equation}
This parameter has been identified as a fundamental instance-dependent factor in the analysis of logistic and generalized linear bandits \citep{abeille2021logistic,lee2024glm}.
In contrast to the global curvature $\kappa$ (see \cref{def:gbpm}), which involves an additional minimization over $\bm\Theta_\star \in \mathrm{Skew}(d, 2r; S)$, $\kappa_\star$ captures the local geometry around the true parameter $\bm\Theta_\star$.
We note that in many practical regimes, the global bound may be overly pessimistic, such that $\kappa^{-1} \gg \kappa_\star^{-1}$.
In this Appendix, we show that under the additional assumption that $\mu$ is self-concordant, we can obtain dependencies w.r.t. $\kappa_\star^{-1}$ instead of $\kappa^{-1}$.

The following lemma provides a Frobenius error guarantee for the nuclear-norm regularized MLE $\widehat{\bm\Theta}$: 
\begin{lemma}[Theorem 3.2 of \citet{lee2025gl-lowpopart}]
\label{lem:estimator}
    Let $\delta \in (0, 1)$, and $\tilde{\kappa} \in \{\kappa, \kappa_\star\}$ is such that $\tilde{\kappa} = \kappa_\star$ if \Cref{assumption:self-concordance} holds, and $\tilde{\kappa} = \kappa$ if otherwise.
    Suppose that ${\color{violet} T_0 \gtrsim \tilde{\kappa}^{-2} C_{\min}^{-4} d r \log\frac{d}{\delta}}$.
    Then, with $\lambda_{T_0} = \sqrt{\frac{32 L_\mu \log\frac{4d}{\delta}}{T_0}}$, we have the following:
    \begin{equation}
        \sP\left( \bignorm{\widehat{\bm\Theta} - \bm\Theta_\star}_F \lesssim \frac{1}{\tilde{\kappa} C_{\min}^2} \sqrt{\frac{L_\mu r \log\frac{d}{\delta}}{T_0}} \right) \geq 1 - \delta.
    \end{equation}
\end{lemma}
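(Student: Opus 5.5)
The plan is the standard decomposable-regularizer argument for $M$-estimators (Negahban et al.\ style), specialized to the skew-symmetric, low-rank setting. It has three ingredients: a dual-norm bound on the score at $\bm\Theta_\star$, a cone condition coming from decomposability of $\bignorm{\cdot}_\nuc$, and restricted strong convexity (RSC) of $\gL_{T_0}$ with curvature of order $\tilde{\kappa} C_{\min}^2$. Throughout, let $\Delta := \widehat{\bm\Theta} - \bm\Theta_\star \in \Skew(d)$.

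\emph{Step 1 (score bound).} We have $\nabla \gL_{T_0}(\bm\Theta_\star) = \frac{1}{T_0}\sum_{t}(\mu(\langle \bm\Theta_\star, \mX_t\rangle) - r_t)\mX_t$. This is an average of bounded matrix martingale differences, and the conditional variance of each noise term is at most $\dmu \le L_\mu$. A matrix Freedman/Bernstein inequality applied to the Hermitian dilation gives, with probability $1-\delta/2$,
\[
\bignormop{\nabla \gL_{T_0}(\bm\Theta_\star)} \le \sqrt{8 L_\mu \log(4d/\delta)/T_0} \le \lambda_{T_0}/2 .
\]
Since we optimize over $\Skew(d)$, only the skew part of the gradient matters, and projecting onto the skew part does not increase the operator norm.

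\emph{Step 2 (cone).} Let $\mathcal{M}$ be the subspace of matrices whose row and column spaces lie in the column space $U$ of $\bm\Theta_\star$, where $\dim U \le 2r$. Optimality of $\widehat{\bm\Theta}$, convexity of $\gL_{T_0}$, and Step 1 give
\[
0 \ge \gL_{T_0}(\widehat{\bm\Theta}) - \gL_{T_0}(\bm\Theta_\star) + \lambda_{T_0}\left(\bignorm{\widehat{\bm\Theta}}_\nuc - \bignorm{\bm\Theta_\star}_\nuc\right).
\]
The usual decomposability argument then yields $\bignorm{\Delta_{\bar{\mathcal{M}}^\perp}}_\nuc \le 3\bignorm{\Delta_{\bar{\mathcal{M}}}}_\nuc$. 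Because $\Delta_{\bar{\mathcal{M}}}$ has rank at most $4r$, this gives $\bignorm{\Delta}_\nuc \le 4\sqrt{4r}\,\bignorm{\Delta}_F$.

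\emph{Step 3 (RSC), the main obstacle.} By a second-order expansion,
\[
\gL_{T_0}(\bm\Theta_\star+\Delta) - \gL_{T_0}(\bm\Theta_\star) - \langle \nabla\gL_{T_0}(\bm\Theta_\star), \Delta\rangle \ge \frac{\tilde{\kappa}}{2}\cdot\frac{1}{T_0}\sum_t \langle \Delta, \mX_t\rangle^2 .
\]
With the global $\kappa$ this is immediate. For $\tilde{\kappa} = \kappa_\star$, I would use self-concordance, which gives $\dmu(z+h) \ge \dmu(z)e^{-R_s|h|}$. Combined with a localization argument, it shows that $\widehat{\bm\Theta}$ stays in a bounded-$|\langle\Delta,\mX_t\rangle|$ region, so only a constant factor is lost. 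Next I need a population lower bound
\[
\E\langle \Delta, \bm\phi^1 (\bm\phi^2)^\top\rangle^2 = \vec(\Delta)^\top \E[\Sigma_{\vx}\otimes\Sigma_{\vx}]\vec(\Delta) \gtrsim C_{\min}^2 \bignorm{\Delta}_F^2 ,
\]
where $\bm\phi^1,\bm\phi^2$ are conditionally independent draws from $\rho$ given $\vx$ and $\Sigma_{\vx} := \E_{\va\sim\rho(\cdot\mid\vx)}[\phi(\vx,\va)\phi(\vx,\va)^\top]$. If the contexts are degenerate, this follows from the Kronecker argument of \cref{lem:psd}. In general it is exactly where the $C_{\min}^2$ enters, and I expect to need the coverage assumption in a per-context or product form, following \citet{lee2025gl-lowpopart}. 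Finally, I would transfer this bound to the empirical design uniformly over the cone. Writing $\widehat{G}$ and $G$ for the empirical and population design operators on $\vec(\Delta)$, the deviation $|\vec(\Delta)^\top(\widehat{G}-G)\vec(\Delta)|$ is controlled by a concentration bound of order $\sqrt{d\log(d/\delta)/T_0}$ (e.g.\ matrix Bernstein) times $\bignorm{\Delta}_\nuc^2 \le 64 r\bignorm{\Delta}_F^2$. Requiring this to be at most $C_{\min}^2/2$ (with the $\tilde{\kappa}$ factors tracked) produces the sample-size condition $T_0 \gtrsim \tilde{\kappa}^{-2}C_{\min}^{-4} d r\log\frac{d}{\delta}$.

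\emph{Step 4 (combine).} Plugging RSC and Step 1 into the optimality inequality gives
\[
\frac{\tilde{\kappa} C_{\min}^2}{4}\bignorm{\Delta}_F^2 \le \frac{3\lambda_{T_0}}{2}\bignorm{\Delta}_\nuc \le 12\sqrt{r}\,\lambda_{T_0}\bignorm{\Delta}_F .
\]
Hence $\bignorm{\Delta}_F \lesssim \sqrt{r}\lambda_{T_0}/(\tilde{\kappa}C_{\min}^2)$, which equals the claimed rate $\frac{1}{\tilde{\kappa}C_{\min}^2}\sqrt{L_\mu r\log(d/\delta)/T_0}$. A union bound over the events of Steps 1 and 3 finishes the proof.
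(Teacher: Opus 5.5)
\textbf{Verdict.} Your outline (operator-norm score bound, decomposability cone, local RSC via self-concordance, then combine) is the standard route for nuclear-norm regularized MLE, and the cited Theorem~3.2 of \citet{lee2025gl-lowpopart} rests on the same argument; the paper itself gives no proof beyond that citation. There is, however, a genuine gap.

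\textbf{The contextual coverage gap.} The step you flag in Step~3 is a real gap that no argument can close. Under \Cref{ass:REC} as stated, coverage holds only after averaging over $\vx\sim d_0$. In that setting both the population bound $\E\langle\Delta,\mX\rangle^2\gtrsim C_{\min}^2\bignorm{\Delta}_F^2$ on $\Skew(d)$ and the lemma itself are false. Let $\vx$ be uniform on $\{\vx_1,\dots,\vx_d\}$ with $\phi(\vx_i,\cdot)\equiv\ve_i$. Then $\E_{\vx}\Sigma_{\vx}=\mI_d/d$, so $C_{\min}=1/d$. Yet every exploration sample has $\bm\phi^1=\bm\phi^2$, so $\langle\Delta,\mX_t\rangle=(\bm\phi^1)^\top\Delta\bm\phi^1=0$ for every skew $\Delta$. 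The likelihood is flat, $\widehat{\bm\Theta}=\vzero$, and the error equals $\bignorm{\bm\Theta_\star}_F$ for every $T_0$.

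You therefore have to add a hypothesis, in one of two forms:
\begin{enumerate}
\item Per-context coverage $\Sigma_{\vx}\succeq C_{\min}\mI_d$ for $d_0$-a.e.\ $\vx$. Then \Cref{lem:psd} applied pointwise gives $\Sigma_{\vx}\otimes\Sigma_{\vx}\succeq C_{\min}^2\mI_{d^2}$, and you are done after taking expectations.
\item Coverage imposed directly on the matrix design, $\lambda_{\min}(\E[\vec(\mX)\vec(\mX)^\top])\gtrsim C_{\min}^2$. This is what the citation tacitly requires.
\end{enumerate}

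\textbf{Two smaller repairs.} First, ``with the global $\kappa$ this is immediate'' is not right. $\widehat{\bm\Theta}$ minimizes over all of $\Skew(d)$, whereas $\kappa$ only controls $\dmu$ at values $\bm\phi^\top\bm\Theta\bm\phi'$ with $\bm\Theta\in\Skew(d;2r,S)$. That is the range $[-S/2,S/2]$, since singular values of skew matrices come in pairs and so $\bignormop{\bm\Theta}\le\frac12\bignormnuc{\bm\Theta}$. The segment from $\bm\Theta_\star$ to $\widehat{\bm\Theta}$ can leave this range, and for the logistic link $\dmu$ has no global lower bound. Adding the convex constraint $\bignormnuc{\bm\Theta}\le S$ fixes this without affecting the optimality or cone arguments, because $\bm\Theta_\star$ is feasible.

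Second, your transfer step does not give the stated sample size. A deviation of order $\sqrt{d\log(d/\delta)/T_0}$ times $\bignormnuc{\Delta}^2\le 64r\bignorm{\Delta}_F^2$ forces $T_0\gtrsim C_{\min}^{-4}dr^2\log\frac{d}{\delta}$, and $\tilde\kappa$ cancels there. It is simpler to use that the exploration samples are i.i.d.\ with $\bignorm{\vec(\mX_t)}_2\le 1$. Matrix Hoeffding in dimension $d^2$ then gives $\lambda_{\min}\bigl(\frac{1}{T_0}\sum_t\vec(\mX_t)\vec(\mX_t)^\top\bigr)\ge C_{\min}^2/2$ once $T_0\gtrsim C_{\min}^{-4}\log\frac{d}{\delta}$. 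This is full rather than restricted strong convexity. The $\tilde\kappa^{-2}$ in the hypothesis comes instead from the localization: you need $\sqrt{r}\,\lambda_{T_0}/(\kappa_\star C_{\min}^2)$ to lie below the radius $\asymp R_s^{-1}$ on which self-concordance gives $\dmu\ge\kappa_\star/e$, and the stated condition guarantees this.
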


We first invoke the $\eta$-dependent bound of \cref{thm:regularized} and the above lemma to $t \in [[T_0 + 1, T]]$:
\begin{equation}
    \MBRReg_{\eta}(T) \lesssim T_0 + \eta \beta \sum_{t=T_0+1}^T \E_{\bm\phi_t \sim \hat{\pi}_t}\left[ \bignorm{\mE_t \bm\phi_t}_2^2 \right]
    \leq T_0 + T \eta \beta \bignormop{\mE_{T_0}}^2
    \lesssim T_0 + \frac{T \eta \beta}{\tilde{\kappa}^2 C_{\min}^4} \frac{r \log\frac{d}{\delta}}{T_0}.
\end{equation}
This balances out when $T_0 \asymp \tilde{\kappa}^{-1} C_{\min}^{-2} \sqrt{T \eta \beta r \log\frac{d}{\delta}}$.

Now, we invoke the $\eta$-independent bound of \cref{thm:regularized}, which yields
\begin{align}
    \MBRReg_{\eta}(T) &\lesssim T_0 + \sum_{t=T_0+1}^T \E_{\bm\phi_t \sim \hat{\pi}_t}\left[ \bignorm{\mE_t \bm\phi_t}_2 \right] + \sum_{t=T_0+1}^T \E_{\bm\phi_t \sim \hat{\pi}_t}\left[ \bignorm{\mE_t \bm\phi_t}_2^2 \right] \\
    &\leq T_0 + T \bignormop{\mE_{T_0}} + T \bignormop{\mE_{T_0}}^2 \\
    &\lesssim T_0 + \frac{T}{\tilde{\kappa} C_{\min}^2} \sqrt{\frac{L_\mu r \log\frac{d}{\delta}}{T_0}} + \frac{T}{\tilde{\kappa}^2 C_{\min}^4} \frac{L_\mu r \log\frac{d}{\delta}}{T_0}.
\end{align}
This balances out when $T_0 \asymp \left( T^2 \tilde{\kappa}^{-2} C_{\min}^{-4} r \log\frac{d}{\delta} \right)^{1/3}$ and $T \gtrsim \tilde{\kappa} C_{\min}^2 r \log\frac{d}{\delta}$, the latter which we can assume to hold without loss of any generality.
\qed

\begin{remark}[Unknown $T$]
    We assume $T$ is known to optimally tune $T_0$. If $T$ is unknown, the standard doubling trick~\citep{auer1995casino,besson2018doubling} yields the same regret bound up to a constant factor.
\end{remark}

\newpage
\newcommand{\eEdim}{\underline{\mathrm{Edim}}}
\newcommand{\Edim}{\mathrm{Edim}}
\newcommand{\SEC}{\mathrm{SEC}}

\section{Eluder Dimension of GBPM}
\label{app:eluder}

\subsection{\texorpdfstring{Upper Bounding the Eluder Dimension of \citet{wu2025greedy}}{Upper Bounding the Eluder Dimension of Wu et al. (2025)}}
\label{app:eluder-wu-gbpm}

To instantiate the regret bound of \citet{wu2025greedy} in \cref{app:wu}, we first recall their specific notion of the eluder dimension:
\begin{definition}[Eluder dimension, General Preference Model \citep{wu2025greedy}]
\label{def:eluder-wu}
    Under the general preference model, for any $\mathcal{D}_{t-1} = \{(\vx_i, \va^1_i , \va^2_i)\}_{i=1}^{t-1}$, we define the uncertainty of $(\vx, \va^1 , \va^2)$ with respect to $\mathcal{P}$ as
    \begin{equation}
        U_{\texttt{GP}}(\lambda, \vx, \va^1 , \va^2; \mathcal{P}, \mathcal{D}_{t-1}) = \sup_{P_1, P_2 \in \mathcal{P}} \frac{|P_1(\va^1 \succ \va^2 \mid \vx) - P_2(\va^1 \succ \va^2 \mid \vx)|}{\sqrt{\lambda + \sum_{s=1}^{t-1} (P_1(\va^1_s \succ \va^2_s \mid\vx_s ) - P_2(\va^1_s \succ \va^2_s \mid \vx_s ))^2}}.
    \end{equation}
    Then the eluder dimension of $\gP$ is defined as
    \begin{equation}
        d (\mathcal{P}, \lambda, T) := \sup_{\vx_{1:T}, \va^1_{1:T}, \va^2_{1:T}} \sum_{t \in [T]} \min \left\{1, \left[U_{\texttt{GP}}(\lambda, \vx_t, \va^1_t , \va^2_t; \mathcal{P}, \mathcal{D}_{t-1})\right]^2 \right\}.
    \end{equation}
\end{definition}

Using the standard elliptical potential arguments, we now derive an upper bound for this complexity measure under the GBPM:

\begin{proposition}
\label{prop:eluder-wu}
    For GBPM, the eluder dimension $d (\mathcal{P}, \lambda, T)$ is bounded as
    \begin{equation}
        d (\mathcal{P}, \lambda, T) \leq \frac{2d^2 L_\mu ^2}{\kappa ^2}  \log \left( 1 + \frac{4 \kappa^2 S^2 T}{d^2 \lambda} \right).
    \end{equation}
\end{proposition}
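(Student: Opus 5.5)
We bound the eluder dimension of GBPM by reducing the preference uncertainty to an elliptical potential in the lifted $d^2$-dimensional feature space. For $P_1,P_2\in\gP$ with parameters $\bm\Theta_1,\bm\Theta_2\in\Skew(d;2r,S)$, write $\vv(\vx,\va^1,\va^2):=\mathrm{vec}(\phi(\vx,\va^1)\phi(\vx,\va^2)^\top)\in\sR^{d^2}$ and $\bm\Delta:=\mathrm{vec}(\bm\Theta_1-\bm\Theta_2)$. By the mean value theorem and the bounds $\kappa\le\dmu\le L_\mu$ of \cref{def:gbpm},
\begin{equation}
    \kappa\,|\langle\bm\Delta,\vv\rangle| \le |P_1(\va^1\succ\va^2\mid\vx)-P_2(\va^1\succ\va^2\mid\vx)| \le L_\mu\,|\langle\bm\Delta,\vv\rangle|.
\end{equation}

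Next we bound $U_{\texttt{GP}}$ by a Mahalanobis norm. Use the upper bound in the numerator and the lower bound in the denominator:
\begin{equation}
    U_{\texttt{GP}} \le \sup_{\bm\Delta}\frac{L_\mu|\langle\bm\Delta,\vv_t\rangle|}{\sqrt{\lambda+\kappa^2\sum_{s<t}\langle\bm\Delta,\vv_s\rangle^2}} = \frac{L_\mu}{\kappa}\sup_{\bm\Delta}\frac{|\langle\bm\Delta,\vv_t\rangle|}{\sqrt{\bm\Delta^\top\mV_t\bm\Delta}}.
\end{equation}
Here $\mV_t:=\frac{\lambda}{\kappa^2}\mI_{d^2}+\sum_{s<t}\vv_s\vv_s^\top$, which works because $\lambda=\frac{\lambda}{\kappa^2}\kappa^2$. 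The sup over $\bm\Delta$ is attained on a bounded set with $\|\bm\Delta\|_2\le\|\bm\Delta\|_F\le 2S$. Even taking the sup over all of $\sR^{d^2}$, Cauchy--Schwarz gives $U_{\texttt{GP}}\le\frac{L_\mu}{\kappa}\|\vv_t\|_{\mV_t^{-1}}$. This yields
\begin{equation}
    \min\{1,U_{\texttt{GP}}^2\}\le \frac{L_\mu^2}{\kappa^2}\min\{1,\|\vv_t\|_{\mV_t^{-1}}^2\},
\end{equation}
using $L_\mu\ge\kappa$ so that $1\le L_\mu^2/\kappa^2$.

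Finally, apply the elliptical potential lemma (\cref{lem:EPL}) in dimension $d^2$ with $\|\vv_t\|_2\le1$ and regularizer $\lambda/\kappa^2$:
\begin{equation}
    \sum_t\min\{1,\|\vv_t\|^2_{\mV_t^{-1}}\}\le 2d^2\log\Bigl(1+\frac{\kappa^2T}{d^2\lambda}\Bigr).
\end{equation}
Taking the sup over sequences gives the claim. The stated bound has $4\kappa^2S^2T$ inside the log, which suggests the authors instead normalized $\bm\Delta$ by its norm bound $2S$, i.e.\ worked with $\vv_s$ scaled by $2S$. Both routes give an upper bound, and ours is tighter whenever $4S^2\ge1$. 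To match the statement exactly, we would rescale features by $2S$ and regularizer accordingly.

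The main obstacle is the second step. The supremum is over pairs $(P_1,P_2)$, so both numerator and denominator depend on $\bm\Delta$, and we must handle them jointly rather than separately. The two-sided mean-value bounds handle exactly this coupling.
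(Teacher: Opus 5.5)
\textbf{There is a genuine gap in your second step.} You claim $\lambda+\kappa^2\sum_{s<t}\langle\bm\Delta,\vv_s\rangle^2=\kappa^2\,\bm\Delta^\top\mV_t\bm\Delta$ with $\mV_t=\frac{\lambda}{\kappa^2}\mI_{d^2}+\sum_{s<t}\vv_s\vv_s^\top$. This is false: the right-hand side equals $\lambda\|\bm\Delta\|_2^2+\kappa^2\sum_{s<t}\langle\bm\Delta,\vv_s\rangle^2$.

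The constant $\lambda$ in the denominator of $U_{\texttt{GP}}$ is not homogeneous in $\bm\Delta$. So the ratio is not scale-invariant, and you may not pass to a supremum over all of $\sR^{d^2}$. At $t=1$ the ratio is just $L_\mu|\langle\bm\Delta,\vv_1\rangle|/\sqrt{\lambda}$, which is unbounded there. The inequality you actually need, $\lambda\ge\lambda\|\bm\Delta\|_2^2$, holds only when $\|\bm\Delta\|_2\le 1$.

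This is exactly where the parameter bound must enter. Since $\|\bm\Delta\|_2=\|\bm\Theta_1-\bm\Theta_2\|_F\le\|\bm\Theta_1-\bm\Theta_2\|_{\nuc}\le 2S$, you get $\lambda\ge\frac{\lambda}{4S^2}\|\bm\Delta\|_2^2$. So with $\mV_t:=\frac{\lambda}{4S^2\kappa^2}\mI_{d^2}+\sum_{s<t}\vv_s\vv_s^\top$, the ratio is dominated by a homogeneous one. Your Cauchy--Schwarz step, the use of $L_\mu\ge\kappa$, and the elliptical potential lemma then go through unchanged and give exactly $2d^2\log\bigl(1+\frac{4S^2\kappa^2T}{d^2\lambda}\bigr)$. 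That is the paper's argument: it puts $\frac{\lambda}{4S^2}\mI$ into $\mG_t(\bm\Theta_1,\bm\Theta_2)$ and lower-bounds the integrated derivatives by $\kappa$. Apart from this step, your two-sided mean-value handling of the numerator/denominator coupling is fine and equivalent to the paper's.

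\textbf{The factor $4S^2$ is therefore forced, not a normalization choice.} Your claim that the $S$-free bound $\frac{2d^2L_\mu^2}{\kappa^2}\log\bigl(1+\frac{\kappa^2T}{d^2\lambda}\bigr)$ is a valid, tighter alternative is false. Take the logistic link ($L_\mu=\frac14$), $d=2$, $T=1$, $\lambda=1$, and a single query with features $\ve_1,\ve_2$. Let $\bm\Theta_{1,2}=\pm\frac{S}{2}(\ve_1\ve_2^\top-\ve_2\ve_1^\top)\in\Skew(2;2,S)$. Then $d(\gP,1,1)\ge\bigl(\mu(S/2)-\mu(-S/2)\bigr)^2=\tanh^2(S/4)$. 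By contrast, using $\log(1+x)\le x$, your bound is at most $2L_\mu^2T/\lambda=\frac18$ for every admissible $\kappa$. Already $S=2$ gives $\tanh^2(1/2)\approx 0.21>\frac18$.
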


\begin{proof}
    For the proof, let us arbitrarily fix a sequence of context-action-action pairs $(\vx_{1:T}, \va^1_{1:T}, \va^2_{1:T})$, and let us denote the induced sequence of features as $\bm\phi_{1:T}^1$ and $\bm\phi_{1:T}^2$, where $\bm\phi_t^i := \bm\phi(\vx_t, \va_t^i)$ for $t \in [T]$ and $i \in \{1, 2\}.$
    Let us also denote $\bm\Phi_t := \bm\phi_t^1 (\bm\phi_t^2)^\top$.
    For any $\bm\Theta \in \Theta$, the induced preference model is defined as
    \begin{equation}
        P_{\bm\Theta}(\bm\phi_t^1, \bm\phi_t^2) := \mu\left((\bm\phi_t^1)^\top \bm\Theta \phi_t^2 \right)
        = \mu\left( \langle \bm\Theta, \bm\Phi_t \right).
    \end{equation}
    
    We bound the uncertainty via the elliptical potential lemma (\Cref{lem:EPL}).
    Let us denote $P_i = P_{\bm\Theta_i}$ for some arbitrary $\bm\Theta_i \in \Theta.$
    We first upper bound the numerator as follows:
    denoting $\alpha(\vx, \vtheta_1, \vtheta_2) \coloneqq \int_0^1 \dot{\mu}\bigl(\vx^\top \vtheta_1 + z\, \vx^\top (\vtheta_2 - \vtheta_1)\bigr)\, dz$ for $\vx, \vtheta_1, \vtheta_2 \in \sR^d$,
    \begin{align}
        \left| \mu\left( \langle \bm\Theta_1, \bm\Phi_t \rangle \right) - \mu\left( \langle \bm\Theta_2, \bm\Phi_t \rangle \right) \right| & \quad = \left| \alpha\!\left( \operatorname{vec}(\bm{\Phi}_t), \operatorname{vec}(\bm\Theta_1), \operatorname{vec}(\bm\Theta_2) \right) \langle \bm\Theta_1 - \bm\Theta_2, \bm\Phi_t \rangle \right| \tag{Mean-value theorem} \\
        & \quad = \left| \left\langle \bm\Theta_1 - \bm\Theta_2, \underbrace{ \alpha\!\left( \operatorname{vec}(\bm{\Phi}_t), \operatorname{vec}(\bm\Theta_1), \operatorname{vec}(\bm\Theta_2) \right) \bm\Phi_t}_{\triangleq \bar{\bm\Phi}_t(\bm\Theta_1, \bm\Theta_2)} \right\rangle \right| \\
        &\quad \leq \bignorm{\vec( \bm\Theta_1 - \bm\Theta_2 )}_{\mG_t(\bm\Theta_1, \bm\Theta_2)} \bignorm{\vec(\bar{\bm\Phi}_t(\bm\Theta_1, \bm\Theta_2))}_{\mG_t(\bm\Theta_1, \bm\Theta_2)^{-1}} \tag{Cauchy-Schwarz inequality}
    \end{align}
    for some matrix $\mG_t(\bm\Theta_1, \bm\Theta_2) \succ \vzero$ to be determined later.
    
    For the denominator squared:
    \begin{align}
        &\lambda + \sum_{s=1}^{t-1} \left( \mu\left( \langle \bm\Theta_1, \bm{\Phi}_s \rangle \right)
            - \mu\left( \langle \bm\Theta_2, \bm{\Phi}_s \rangle \right) \right)^2 \\
        &= \lambda + \sum_{s=1}^{t-1}
        \left[
            \alpha\!\left(
                \operatorname{vec}(\bm{\Phi}_s),
                \operatorname{vec}(\bm\Theta_1),
                \operatorname{vec}(\bm{\Theta}_2)
            \right)^2
            \langle \bm\Theta_1 - \bm{\Theta}_2 , \bm{\Phi}_s \rangle^2
        \right] \\
        &= \lambda + \operatorname{vec}\!\left( \bm\Theta_1 - \bm{\Theta}_2 \right)^\top
        \left[
            \sum_{s=1}^{t-1}
            \left(
               \alpha\!\left(
                \operatorname{vec}(\bm{\Phi}_s),
                \operatorname{vec}(\bm\Theta_1),
                \operatorname{vec}(\bm{\Theta}_2)
            \right)^2
                \operatorname{vec}(\bm{\Phi}_s)\,
                \operatorname{vec}(\bm{\Phi}_s)^\top
            \right)
        \right]
        \operatorname{vec}\!\left(  \bm\Theta_1 - \bm{\Theta}_2 \right) \\
        &\geq \vec( \bm\Theta_1 - \bm{\Theta}_2)^\top \underbrace{\left[ \frac{\lambda}{4S^2} \mI + \sum_{s=1}^{t-1} \!\left[ \alpha\!\left(
                \operatorname{vec}(\bm{\Phi}_s),
                \operatorname{vec}(\bm\Theta_1),
                \operatorname{vec}(\bm{\Theta}_2)
            \right)^2 
            \operatorname{vec}(\bm{\Phi}_s) \operatorname{vec}(\bm{\Phi}_s)^\top \right] \right]}_{\triangleq \mG_t(\bm\Theta_1, \bm\Theta_2)} \vec( \bm\Theta_1 - \bm{\Theta}_2) \\
        &= \bignorm{\vec( \bm\Theta_1 - \bm{\Theta}_2)}_{\mG_t(\bm\Theta_1, \bm\Theta_2)}^2.
    \end{align}

    Combining the above two inequalities, we have that
    \begin{align}
        U_{\texttt{GP}}(\lambda, \vx, \va^1 , \va^2; \mathcal{P}, \mathcal{D}_{t-1}) &\leq \sup_{\bm\Theta_1, \bm\Theta_2 \in \Theta} \frac{\bignorm{\vec( \bm\Theta_1 - \bm\Theta_2 )}_{\mG_t} \bignorm{\vec(\bar{\bm\Phi}_t(\bm\Theta_1, \bm\Theta_2))}_{\mG_t(\bm\Theta_1, \bm\Theta_2)^{-1}}}{\bignorm{\vec( \bm\Theta_1 - \bm{\Theta}_2)}_{\mG_t(\bm\Theta_1, \bm\Theta_2)}} \\
        &= \sup_{\bm\Theta_1, \bm\Theta_2 \in \Theta} \bignorm{\vec(\bar{\bm\Phi}_t(\bm\Theta_1, \bm\Theta_2))}_{\mG_t(\bm\Theta_1, \bm\Theta_2)^{-1}}
    \end{align}
    
    We can lower-bound $\mG_t(\bm\Theta_1, \bm\Theta_2)$ in the L\"owner sense using the fact that $\alpha(\cdot, \cdot, \cdot) \geq \kappa$,
    \begin{equation}
        \mG_t(\bm\Theta_1, \bm\Theta_2)
        \succeq \frac{\lambda}{4S^2} \mI
        + \sum_{s=1}^{t-1} \kappa^2 \operatorname{vec}({\bm{\Phi}}_s) \operatorname{vec}({\bm{\Phi}}_s)^\top
        = \frac{\lambda}{4S^2} \mI
        + \sum_{s=1}^{t-1} \operatorname{vec}(\kappa {\bm{\Phi}}_s) \operatorname{vec}(\kappa {\bm{\Phi}}_s)^\top \triangleq \mV_t.
    \end{equation}
    Then, as $\alpha(\cdot, \cdot, \cdot) \leq L_\mu$, we have that
    \begin{equation}
        U_{\texttt{GP}}(\lambda, \vx, \va^1 , \va^2; \mathcal{P}, \mathcal{D}_{t-1}) \leq \frac{L_{\mu}}{\kappa} \bignorm{\operatorname{vec}(\kappa \bm\Phi_t)}_{\mV_t^{-1}}.
    \end{equation}
    We then conclude the proof via the elliptical potential lemma (\cref{lem:EPL}):
    \begin{align}
        d(\gP, \lambda, T) &\leq \sum_{t=1}^T \min\left\{1,  \frac{L_{\mu}^2}{\kappa^2} \bignorm{\operatorname{vec}(\kappa \bm\Phi_t)}_{\mV_t^{-1}}^2 \right\} \\
        & \leq \frac{L_{\mu}^2}{\kappa^2} \sum_{t=1}^T \min\left\{1, \bignorm{\operatorname{vec}(\kappa \bm\Phi_t)}_{\mV_t^{-1}}^2 \right\} \tag{$\kappa \leq L_\mu$} \\
        & \leq \frac{2 d^2 L_{\mu}^2}{\kappa^2} \log\left( 1 + \frac{ 4 \kappa^2 S^2 T}{d^2 \lambda} \right).
    \end{align}
\end{proof}

\subsection{Connection to the Standard Eluder Dimensions}
\label{app:eluder-standard}

In this appendix, we will elucidate the connection between \cref{def:eluder-wu} and two other ``standard'' definitions of eluder-type complexities: \emph{sequential extrapolation coefficient (SEC)}~\citep{xie2023coverage} and the original eluder dimension~\citep{russo-vanroy}
For simplicity, we denote $\gZ := \gX \times \gA \times \gA$, and $P(\vz) := P(\va^1 \succ \va^2 \mid \vx)$ for $\vz = (\vx, \va^1, \va^2)$.

First, we recall the definition of SEC adopted for our setting:
\begin{definition}[$\lambda$-regularized SEC, Definition 7 of \citet{xie2023coverage}]
\label{def:sec-lambda}
    Let $\Pi \subseteq \Delta(\gZ)$ be a distribution class. Then, the SEC is defined as
    \begin{equation}
    \SEC_{\lambda}(\gP, \Pi, T)
    := \sup_{P_{1:T}^1, P_{1:T}^2 \subseteq \gP} \ \sup_{\nu_{1:T} \subseteq \Pi}  
    \sum_{t=1}^T
    \frac{\left(\E_{\vz_t \sim \nu_t}[P_t^1(\vz_t)- P_t^2(\vz_t)]\right)^2}{
    \lambda + \sum_{s=1}^{t-1} \E_{\vz_s \sim \nu_s}\big[(P_t^1(\vz_s) - P_t^2(\vz_s))^2\big]}.
\end{equation}
\end{definition}

Note that when the distribution class is restricted to the set of Dirac measures $\mD:=\{\delta_{\vz}:\vz\in Z\}$, we have the following relationship between SEC and \cref{def:eluder-wu}:
\begin{align}
    d(\gP, \lambda, T) &= \sup_{\vz_{1:T}} \sum_{t \in [T]} \min \left\{1, \sup_{P_1, P_2 \in \mathcal{P}} \frac{(P_1(\vz_t) - P_2(\vz_t))^2}{\lambda + \sum_{s=1}^{t-1} (P_1(\vz_s) - P_2(\vz_s))^2} \right\} \\
    &\leq \sup_{\vz_{1:T}} \sum_{t \in [T]} \sup_{P_1, P_2 \in \mathcal{P}} \frac{(P_1(\vz_t) - P_2(\vz_t))^2}{\lambda + \sum_{s=1}^{t-1} (P_1(\vz_s) - P_2(\vz_s))^2} \\
    &= \sup_{P_{1:T}^1, P_{1:T}^2} \sup_{\vz_{1:T}} \sum_{t \in [T]} \frac{(P_t^1(\vz_t) - P_t^2(\vz_t))^2}{\lambda + \sum_{s=1}^{t-1} (P_s^1(\vz_s) - P_s^2(\vz_s))^2} \\
    &= \sup_{P_{1:T}^1, P_{1:T}^2} \sup_{\nu_{1:T} \subseteq \mD} \sum_{t \in [T]} \frac{(\E_{\vz_t \sim \nu_t}[P_t^1(\vz_t) - P_t^2(\vz_t)])^2}{\lambda + \sum_{s=1}^{t-1} (\E_{\vz_s \sim \nu_s}[P_t^1(\vz_s) - P_t^2(\vz_s)])^2} \\
    &= \SEC_{\lambda}(\gP, \mD, T).
\end{align}

Second, we recall the standard eluder dimension of \citet{foster2021instance} and \citet{li2022eluder}:\footnote{The original definition is due to \citet{russo-vanroy} and slightly different, but as mentioned in \citet{li2022eluder}, the ``new'' definition is ``never larger and is sufficient to analyze all the applications of eluder dimension in literature.''}
\begin{definition}[Definition 1 of \citet{li2022eluder}]
\label{def:eluder}
    For any fixed preference $P^* \in \gP$, and scale $\varepsilon \geq 0$, the \textbf{exact eluder dimension} $\eEdim_{P^*}(\gP, \varepsilon)$ is the largest $m \in \sN$ such that there exists a sequence $\{ (\vz_t, P_t) \}_{t \in [m]} \subset \gZ \times \gP$ such that the following holds:
    for all $t \in [m]$,
    \begin{equation}
        \left| P_t(\vz_t) - P^*(\vz_t) \right| > \varepsilon, \quad \text{and} \quad \sum_{s < t} \left( P_t(\vz_s) - P^*(\vz_s) \right)^2 < \varepsilon^2.
    \end{equation}
    Then for all $\varepsilon > 0$, we define:
    \begin{itemize}
        \item The \textbf{eluder dimension} is $\Edim_{P^*}(\gP, \varepsilon) := \sup_{\varepsilon' \geq \varepsilon} \eEdim_{P^*}(\gP, \varepsilon')$.
        \item $\eEdim(\gP, \varepsilon) := \sup_{P^* \in \gP} \eEdim_{P^*}(\gP, \varepsilon')$ and $\Edim(\gP, \varepsilon) := \sup_{P^* \in \gP} \Edim_{P^*}(\gP, \varepsilon')$.
    \end{itemize}
\end{definition}

We first prove that $\Edim(\gP, \varepsilon)$ and $\SEC_\lambda(\gP,\mD,T)$ are equivalent up to some constants and logarithmic factors:

\begin{proposition}
\label{prop:eluder-sec}
    Suppose that $\gZ \subseteq \gB^d(1)$ and $\Theta \subseteq \Skew(d; 2r, S)$ for some $S > 0$.
    Let $\varepsilon > 0$, $T \geq \Edim(\gP, \varepsilon)$, and $\lambda \geq 1$.
    Then,
    \begin{equation}
         \frac{\varepsilon^2 \Edim(\gP, \varepsilon)}{\lambda + \varepsilon^2}
        \leq \SEC_\lambda(\gP, \mD, T) 
        \lesssim \Edim(\gP, T^{-1/2}) \log T.
    \end{equation}
\end{proposition}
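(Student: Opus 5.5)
The plan is to prove the two inequalities separately. The lower bound comes from an explicit witness for the supremum in \cref{def:sec-lambda}. The upper bound uses the classical pigeonhole argument of \citet{russo-vanroy}, organised by dyadic scales.

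\textbf{Lower bound.} Fix $\varepsilon' \geq \varepsilon$, a reference $P^* \in \gP$, and a witness sequence $\{(\vz_t, P_t)\}_{t \in [m]}$ for $\eEdim_{P^*}(\gP, \varepsilon')$, with $m = \eEdim_{P^*}(\gP,\varepsilon') \leq \Edim(\gP,\varepsilon) \leq T$. In \cref{def:sec-lambda}, choose $P_t^1 = P_t$, $P_t^2 = P^*$ and $\nu_t = \delta_{\vz_t}$ for $t \leq m$. For $t > m$, choose anything; those terms are nonnegative. Each of the first $m$ terms then has numerator $(P_t(\vz_t) - P^*(\vz_t))^2 > \varepsilon'^2$ and denominator $\lambda + \sum_{s<t}(P_t(\vz_s)-P^*(\vz_s))^2 < \lambda + \varepsilon'^2$. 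Since the denominator uses the current pair $(P_t^1, P_t^2)$ on past points, this matches the eluder condition exactly. Hence each term exceeds $\varepsilon'^2/(\lambda+\varepsilon'^2) \geq \varepsilon^2/(\lambda+\varepsilon^2)$, because $x \mapsto x/(\lambda+x)$ is increasing. Summing gives $\SEC_\lambda \geq m\,\varepsilon^2/(\lambda+\varepsilon^2)$. Taking the supremum over $\varepsilon' \geq \varepsilon$ and $P^*$ yields the left inequality.

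\textbf{Upper bound.} Fix the sequences $P^1_{1:T}, P^2_{1:T}$ and Dirac measures at $\vz_{1:T}$. Write $\Delta_t := P_t^1 - P_t^2$, so $|\Delta_t| \leq 1$ and each term $w_t := \Delta_t(\vz_t)^2/(\lambda + \sum_{s<t}\Delta_t(\vz_s)^2)$ is at most $1/\lambda \leq 1$. First discard the rounds with $\Delta_t(\vz_t)^2 \leq 1/T$; they contribute at most $T \cdot T^{-1}/\lambda \leq 1$ in total. Partition the remaining rounds into dyadic scales $\Delta_t(\vz_t)^2 \in (4^{-k-1}, 4^{-k}]$ for $k = 0, \dots, O(\log T)$.

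Within scale $k$, set $\varepsilon_k = 2^{-k-1} \geq T^{-1/2}/2$ and split by the size of the denominator. If $\lambda + \sum_{s<t}\Delta_t(\vz_s)^2 \geq j\varepsilon_k^2$, then $w_t \lesssim 1/j$. The Russo--Van Roy counting lemma should give that at most $O(\Edim)$ rounds at this scale have $\sum_{s<t}\Delta_t(\vz_s)^2 < j \varepsilon_k^2$ for each $j$. The argument greedily places each such $\vz_t$ into one of $j+1$ buckets in which it is $\varepsilon_k$-independent of its predecessors. This gives $\sum_{t \in \text{scale } k} w_t \lesssim \Edim(\gP, \varepsilon_k) \cdot \sum_{j \leq T} 1/j \lesssim \Edim(\gP, T^{-1/2}) \log T$, using that $\Edim(\gP,\cdot)$ is nonincreasing in its scale by the $\sup_{\varepsilon' \geq \varepsilon}$ in its definition. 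The $\log T$ scales are then absorbed by noting that $\sum_t w_t$ over each scale telescopes like an elliptical potential. Failing that, I accept an extra logarithm and absorb it into $\lesssim$.

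\textbf{Main obstacle.} The hardest step is that \cref{def:eluder} measures independence relative to a \emph{single} fixed reference $P^*$. In the SEC, by contrast, both $P_t^1$ and $P_t^2$ may vary with $t$, so the counting lemma does not apply verbatim. The first thing I would try is to exploit the structure assumed in the proposition. Both functions are $\mu(\langle \bm\Theta, \bm\phi^1(\bm\phi^2)^\top\rangle)$ with $\bm\Theta \in \Skew(d;2r,S)$. Differences $\Delta_t$ are therefore comparable, via the mean-value argument in \cref{prop:eluder-wu} with factors $\kappa, L_\mu$, to the linear functions $\langle \bm\Theta_1-\bm\Theta_2, \bm\Phi\rangle$. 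Since $\bm\Theta_1 - \bm\Theta_2$ is again skew with nuclear norm at most $2S$, this difference class is itself an admissible "$P - P^*$" class after translating by a fixed base point. The counting lemma can then be run on that class, at the cost of constants depending on $L_\mu/\kappa$ hidden in $\lesssim$.
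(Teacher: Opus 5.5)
Your lower bound is correct and is the paper's argument: the eluder witness with $P_t^1=P_t$, $P_t^2\equiv P^*$ and Dirac $\nu_t$, with the remaining rounds contributing nonnegatively. Your handling of $\varepsilon'\ge\varepsilon$ via monotonicity of $x\mapsto x/(\lambda+x)$ is, if anything, more careful than the paper's. The upper bound, however, has a genuine gap, and it sits exactly at the step you flag as the main obstacle.

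\textbf{The reduction to a fixed reference.} For the bucketing argument to produce $\mathrm{Edim}(\gP,\cdot)$ in the sense of \cref{def:eluder}, each bucket must be a witness sequence with a \emph{single} reference $P^*\in\gP$ and functions $P_t\in\gP$. Your repair does not produce such sequences.
\begin{itemize}
\item The matrix $\bm\Theta_1-\bm\Theta_2$ has rank up to $4r$ and nuclear norm up to $2S$, so $\bm\Theta^*+(\bm\Theta_1-\bm\Theta_2)$ generally leaves $\Skew(d;2r,S)$.
\item The proposition allows $\Theta$ to be an \emph{arbitrary} subset of $\Skew(d;2r,S)$, so no translate of $\Theta-\Theta$ need lie in $\Theta$. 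Hence the translated differences are not of the form $P-P^*$ with $P\in\gP$.
\item The mean-value comparison with factors $\kappa, L_\mu$ moves the two eluder conditions in opposite directions: you need a lower bound on the numerator and an upper bound on the denominator. At best you obtain the eluder dimension of a \emph{different} (linearized, enlarged) class, at a rescaled level and with extra $L_\mu/\kappa$ factors.
\end{itemize}
Returning to $\mathrm{Edim}(\gP,T^{-1/2})$ would then require a lower bound on the eluder dimension of the original class in terms of that one. This is not available for general $\mu$ and general $\Theta$; the paper proves a matching lower bound only for the linear link. What your counting genuinely controls is the pair-based Russo--Van Roy eluder dimension, where independence is witnessed by arbitrary pairs. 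By the remark the paper quotes from \citet{li2022eluder} just before \cref{def:eluder}, that quantity is never smaller than $\mathrm{Edim}$, so bounding by it does not give the stated inequality.

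\textbf{The logarithm.} Even granting the counting lemma, your bookkeeping gives $O(\mathrm{Edim}\cdot\log T)$ per dyadic scale over order $\log T$ scales, i.e.\ $\mathrm{Edim}\cdot\log^2 T$. A generic eluder class has no determinant potential to telescope, so the removal of one logarithm is unsupported. Because the statement tracks $\log T$ explicitly, the extra factor cannot be absorbed into $\lesssim$. A minor point: scales with $\Delta_t(\vz_t)^2$ just above $1/T$ only give $\varepsilon_k>T^{-1/2}/2$, not $\varepsilon_k\ge T^{-1/2}$. Discarding rounds with $\Delta_t(\vz_t)^2\le 4/T$ instead fixes this.

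\textbf{What the paper does.} It reproves none of this. It notes $\mathrm{SEC}_\lambda(\gP,\mD,T)\le \mathrm{SEC}_1(\gP,\mD,T)$ for $\lambda\ge 1$ and invokes \citet[Proposition 7]{xie2023coverage} with the Dirac class $\mD$, which is where both the counting and the single $\log T$ come from. That citation does not engage with your obstacle either: the cited bound is phrased for one function class, here naturally the differences $P^1-P^2$. A self-contained proof along your lines would therefore still need a separate argument comparing the pair-based eluder dimension of $\gP$ with $\mathrm{Edim}(\gP,\cdot)$.
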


\begin{proof}
    We prove each direction separately.
    
    \textbf{Upper Bound.}
    Noting that for $\lambda \geq 1$, $\SEC_\lambda(\gP, \mD, T) \leq \SEC_1(\gP, \mD, T)$, this immediately follows from \citet[Proposition 7]{xie2023coverage} with $\gD = \mD$.
    
    \textbf{Lower Bound.}
    Consider the eluder witness, i.e., a sequence of $\{(\vz_t, P_t)\}_{t \in {d_e}}$ and some fixed preference $P^*$ that attains the eluder dimension $d_e := \Edim(\gP, \varepsilon)$.
    Then, by definition, 
    \begin{align}
        \SEC_{\lambda}(\gP, \mD, T) &= \sup_{P_{1:T}^1, P_{1:T}^2 \subseteq \gP} \ \sup_{\vz_{1:T} \subseteq \gZ} \sum_{t=1}^T \frac{\left(P_t^1(\vz_t)- P_t^2(\vz_t)\right)^2}{
    \lambda + \sum_{s=1}^{t-1} (P_t^1(\vz_s) - P_t^2(\vz_s))^2} \\
        &\geq \sup_{P_{1:T}^1 \subseteq \gP} \ \sup_{\vz_{1:T} \subseteq \gZ} \sum_{t=1}^T
    \frac{\left(P_t^1(\vz_t)- P^*(\vz_t)\right)^2}{
    \lambda + \sum_{s=1}^{t-1} (P_t^1(\vz_s) - P^*(\vz_s))^2} \tag{Set $P_t^2 = P^*$ for all $t \in [T]$} \\
        &\geq \sum_{t=1}^{d_e}
    \frac{\left(P_t(\vz_t)- P^*(\vz_t)\right)^2}{
    \lambda + \sum_{s=1}^{t-1} (P_t(\vz_s) - P^*(\vz_s))^2} \tag{Set $\vz_{1:T}$ and $P^1_{1:T}$ to be the eluder witness sequence} \\
        &> \sum_{t=1}^{d_e} \frac{\varepsilon^2}{\lambda + \varepsilon^2}
        = \frac{d_e \varepsilon^2}{\lambda + \varepsilon^2}.
    \end{align}
\end{proof}

We conclude with a nearly-tight characterization of the eluder dimension of GBPM, whose proof is deferred to the next subsection:

\begin{proposition}
\label{prop:eluder-formal}
    Let $\gX \subseteq \gB^d(1)$ and $\Theta \subseteq \Skew(d, 2r; S)$ for some $S > 0$.
    Define the function class as the \textbf{GBPM}:
    \begin{equation}
        \gP := \left\{ (\vx, \vy) \mapsto \mu\left( \vx^\top \bm\Theta \vy \right) : \bm\Theta \in \Theta \right\},
    \end{equation}
    where we have the following properties: $\kappa \leq \dmu(\vx^\top \bm\Theta \vy) \leq L_\mu$ and $\left| \vx^\top \bm\Theta \vy \right| \leq S$\footnote{This follows from matrix H\"{o}lder inequality: $\left| \vx^\top \bm\Theta \vy \right| = \left| \langle \bm\Theta, \vx \vy^\top \rangle \right| \leq \bignorm{\bm\Theta}_\nuc \bignormop{\vx \vy^\top} \leq S$.}  for all $(\vx, \vy, \bm\Theta) \in \gX \times \gX \times \Theta$.
    Then, its eluder dimension is bounded as follows: for any $\varepsilon < S L_\mu$,
    \begin{equation}
        \Edim(\gP, \varepsilon) \leq \frac{3e}{e - 1} \cdot d^2 \cdot \frac{L_\mu^2}{\kappa^2} \cdot \log\frac{24 S^2 L_\mu^2}{\varepsilon^2}.
    \end{equation}
    Furthermore, if $\mu(z) = \frac{1}{2} + z$, then we have the following lower bound:
    \begin{equation}
        \Edim(\gP, \varepsilon) \geq \binom{d}{2} \log_4 \frac{S}{\sqrt{3} \varepsilon}.
    \end{equation}
\end{proposition}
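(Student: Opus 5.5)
We prove the two claims of \Cref{prop:eluder-formal} separately. The upper bound reduces the eluder dimension to an elliptical-potential count, mirroring \Cref{prop:eluder-wu}. The lower bound embeds a linear bandit of dimension $\binom{d}{2}$.

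\textbf{Upper bound.} Fix $P^*=P_{\bm\Theta^*}$ and a scale $\varepsilon'\ge\varepsilon$. Take an exact-eluder witness sequence $\{(\vz_t,P_{\bm\Theta_t})\}_{t\in[m]}$ with $\vz_t=(\vx_t,\vy_t)$, and write $\bm\Phi_t=\vx_t\vy_t^\top$.
\begin{itemize}
\item By the mean-value theorem, $P_{\bm\Theta_t}(\vz_s)-P^*(\vz_s)=\alpha_{s,t}\langle\bm\Theta_t-\bm\Theta^*,\bm\Phi_s\rangle$ with $\alpha_{s,t}\in[\kappa,L_\mu]$.
\item The witness conditions therefore give $\sum_{s<t}\langle \mD_t,\bm\Phi_s\rangle^2<\varepsilon'^2/\kappa^2$ and $|\langle \mD_t,\bm\Phi_t\rangle|>\varepsilon'/L_\mu$, where $\mD_t:=\bm\Theta_t-\bm\Theta^*$.
\item Let $\mV_t:=\lambda\mI_{d^2}+\sum_{s<t}\vec(\bm\Phi_s)\vec(\bm\Phi_s)^\top$ and use $\|\vec(\mD_t)\|_2\le 2S$. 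Then $\|\vec(\mD_t)\|_{\mV_t}^2\le 4S^2\lambda+\varepsilon'^2/\kappa^2$.
\item Cauchy--Schwarz then gives $\varepsilon'^2/L_\mu^2<(4S^2\lambda+\varepsilon'^2/\kappa^2)\,\|\vec(\bm\Phi_t)\|_{\mV_t^{-1}}^2$.
\item Choose $\lambda=\varepsilon'^2/(4S^2\kappa^2)$. Each potential then exceeds $\kappa^2/(2L_\mu^2)$, which is at most $1/2$.
\end{itemize}
Now sum over $t\le m$ and apply the elliptical potential lemma (\Cref{lem:EPL}) with its $\min\{1,\cdot\}$ form, using $\|\vec(\bm\Phi_t)\|_2\le1$. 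This gives
\[
m\,\frac{\kappa^2}{2L_\mu^2}\le 2d^2\log\Bigl(1+\frac{m}{d^2\lambda}\Bigr)=2d^2\log\Bigl(1+\frac{4S^2\kappa^2 m}{d^2\varepsilon'^2}\Bigr).
\]
This is an implicit inequality of the form $m\le A\log(1+Bm)$. Solve it with the standard lemma: $x\le A\log(1+Bx)$ implies $x\lesssim A\log(AB)$, and the explicit $\frac{e}{e-1}$ constant comes from the inequality $\log y\le y/e$. Together with $\varepsilon<SL_\mu$, this yields $m\le\frac{3e}{e-1}d^2\frac{L_\mu^2}{\kappa^2}\log\frac{24S^2L_\mu^2}{\varepsilon'^2}$. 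The right-hand side is decreasing in $\varepsilon'$, so the supremum over $\varepsilon'\ge\varepsilon$ and over $P^*$ is preserved. \textbf{The main obstacle is this constant-tracking step, which I expect to be the fiddliest part.}

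\textbf{Lower bound} (linear link). Here $P_{\bm\Theta}(\vz)-P^*(\vz)=\langle \bm\Theta-\bm\Theta^*,\bm\Phi\rangle$. Using skew-symmetry, this equals $\sum_{i<j}(\Delta_{ij})(x_iy_j-x_jy_i)$, so the class is a linear class on the $\binom d2$ coordinates $\{\Delta_{ij}\}_{i<j}$.
\begin{itemize}
\item Take $P^*$ with $\bm\Theta^*=0$.
\item For queries, use $\vz=(\ve_i,\ve_j)$, which reads off coordinate $\Delta_{ij}$ alone. Note $\ve_i^\top\bm\Theta\ve_j=\Theta_{ij}$, and $\bm\Theta=c(\ve_i\ve_j^\top-\ve_j\ve_i^\top)$ has nuclear norm $2|c|$.
\item Use the classical geometric construction for each coordinate block: for each pair $(i,j)$, the $k$-th query uses a scaled point or a parameter of size growing geometrically as $\varepsilon\,2^{k}$. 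Previous queries of the same block then contribute less than $\varepsilon^2$. This works for about $\log_4\frac{S}{\sqrt3\varepsilon}$ steps before the nuclear-norm budget $S$ is exhausted.
\item In detail, at step $k$ query $\vz=(2^{-k}\ve_i,\ve_j)$ with parameter value $c_k\approx\varepsilon 2^{k}$. The current value then exceeds $\varepsilon$, while the past sum is $\sum_{l<k}c_k^2 4^{-l}$. Tuning the constants (the $\sqrt3$ and base $4$) keeps this sum below $\varepsilon^2$.
\item Blocks of different pairs are orthogonal and do not interfere.
\end{itemize}
Concatenating the blocks gives a witness of length $\binom d2\log_4\frac{S}{\sqrt3\varepsilon}$, as required.
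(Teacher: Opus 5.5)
\paragraph{Lower bound: the construction fails as written.} Your geometric scaling runs in the wrong direction. At step $k$ you query $\vz=(2^{-k}\ve_i,\ve_j)$. For the current value $2^{-k}c_k$ to exceed $\varepsilon$, you need $c_k>2^k\varepsilon$. But every earlier query $l<k$ is \emph{larger}, so under the current parameter it sees the value $2^{-l}c_k>2^{k-l}\varepsilon\ge 2\varepsilon$. Already the $l=0$ term of your past sum $c_k^2\sum_{l<k}4^{-l}$ exceeds $4^k\varepsilon^2\ge 4\varepsilon^2$. Each block therefore breaks at its second element, and no tuning of $\sqrt3$ or of the base can repair this.

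The fix is to let queries grow and parameters shrink along the sequence, so that earlier (smaller) queries see only a damped value. The paper sets $\bm\Theta_\star=0$ and, for $t=0,\dots,k$ with $\alpha\in(\varepsilon,\sqrt3\varepsilon)$, takes $\vx_t=2^{t-k}\ve_1$, $\vy_t=2^{t-k}\ve_2$ and $\bm\Theta_t=\alpha 4^{k-t}(\ve_1\ve_2^\top-\ve_2\ve_1^\top)$. The current value is then $\alpha>\varepsilon$. The past values are $\alpha4^{s-t}$, with squared sum below $\alpha^2/15<\varepsilon^2$. The norm budget sets $k\approx\log_4(S/\alpha)$.

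Your one-sided version also works once reversed: query $(2^{t-k}\ve_i,\ve_j)$ with $c_t=\alpha2^{k-t}$. The past values are then $\alpha2^{s-t}$, whose squared sum is below $\alpha^2/3$, so $\alpha<\sqrt3\varepsilon$ suffices; this is where $\sqrt3$ naturally arises. Keep the factor you correctly noted, $\|c(\ve_i\ve_j^\top-\ve_j\ve_i^\top)\|_{\nuc}=2|c|$; the paper's write-up bounds $\|\bm\Theta_t\|_{\nuc}$ by $\alpha4^k$ and drops it. With the factor kept, the budget $2c_0\le S$ allows $\lfloor\log_2\frac{S}{2\alpha}\rfloor+1$ steps per block, for any $\alpha\in(\varepsilon,\sqrt3\varepsilon)$ with $2\alpha\le S$. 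A short case check shows this is at least $\log_4\frac{S}{\sqrt3\varepsilon}$ whenever $\varepsilon<S/2$. Your cross-block orthogonality argument is fine.

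\paragraph{Upper bound: correct in substance, but the constants need adjusting.} Your argument is a self-contained rederivation of Proposition 4(ii) of \citet{li2022eluder}. The paper simply invokes that result with $\phi(\vx,\vy)=\mathrm{vec}(\vx\vy^\top)$ and $w(\bm\Theta)=\mathrm{vec}(\bm\Theta)$, i.e., $\mu$-rank at most $d^2$ at scale $S$.

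Your choice $\lambda=\varepsilon'^2/(4S^2\kappa^2)$ does not produce the stated constants. It gives $m\le A\log(1+Bm)$ with $A=4d^2L_\mu^2/\kappa^2$, hence a leading factor $\frac{4e}{e-1}$. Take $\lambda=\varepsilon'^2/(8S^2\kappa^2)$ instead. Each potential then exceeds $2\kappa^2/(3L_\mu^2)$, so $A=3d^2L_\mu^2/\kappa^2$ and $AB=24S^2L_\mu^2/\varepsilon'^2$. For $\varepsilon'<SL_\mu$ you have $AB>24$, which is enough to conclude $m\le\frac{e}{e-1}A\log(AB)$ despite the $1+$ inside the logarithm.

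Monotonicity of the right-hand side in $\varepsilon'$ does not cover $\varepsilon'\ge SL_\mu$, since there your closed-form solution can fail. In that range, argue directly that no witness exists. For skew-symmetric matrices $\|\bm\Theta\|_{\mathrm{op}}\le\frac12\|\bm\Theta\|_{\nuc}$, so $|P_{\bm\Theta}(\vz)-P_{\bm\Theta'}(\vz)|\le L_\mu S\le\varepsilon'$ for all $\bm\Theta,\bm\Theta'\in\Theta$.
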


Note that the same $\widetilde{\Omega}(d^2)$ lower bound applies to the SEC due to \cref{prop:eluder-sec}.
This $\widetilde{\Omega}(d^2)$ scaling implies that eluder-dimension-based frameworks cannot efficiently exploit the low-rank structure of $\bm\Theta$.
The high eluder dimension arises because $\Skew(d;2r)$ contains rank-$2$ ``coordinate spikes'' of the form $(\ve_i\ve_j^\top-\ve_j\ve_i^\top)$.
An adversary can query specific pairs to isolate these directions one-by-one. 
Since the eluder dimension measures worst-case separability rather than metric entropy (which scales as $\gO(dr)$), it reflects the ambient basis size even when the parameter manifold is low-dimensional.

This limitation is best understood through the ``global embedding'' characterization.
Specifically, a standard sufficient condition for bounding the eluder dimension by the $\mu$-rank relies on constructing \emph{global} maps $\phi:\gX\times\gX\to\gB^{d_e}(1)$ and $w:\Theta\to\gB^{d_e}(R)$ such that $\vx^\top\bm\Theta\vy=\langle \phi(\vx,\vy), w(\bm\Theta)\rangle$~\citep[Proposition 4]{li2022eluder}.
While skew-symmetry admits a Schur decomposition $\bm\Theta=\mQ\bm\Lambda\mQ^\top$ that allows the representation
$$
\vx^\top\bm\Theta\vy
=\Big\langle \mathrm{vec}\big((\mQ^\top\vx)(\mQ^\top\vy)^\top\big),\,\mathrm{vec}(\bm\Lambda)\Big\rangle,
$$
this does not yield a valid low-dimensional witness.
Crucially, the feature map depends on the basis $\mQ$ (and thus on the specific parameter $\bm\Theta$), which violates the condition of having a single global feature map across the entire hypothesis class.
Consequently, guarantees relying on such complexity measures (e.g., \texttt{GS} by \citet{wu2025greedy}) incur the full $d^2$ complexity, mirroring the statistical hardness of quadratic functions with full-rank Hessians~\citep[Proposition 3]{osband-vanroy}.

\section{\texorpdfstring{Proof of \Cref{prop:eluder-formal}}{Proof of Proposition F.6}}
For the proof, we recall the notion of \emph{generalized rank} and a useful proposition linking the above two concepts:

\begin{definition}[Definition 3 of \citet{li2022eluder}]
    For a given $\mu : \sR \rightarrow \sR$, the \textbf{$\mu$-rank} of $\gP$ at scale $R > 0$, denoted as $\mu\text{-}\mathrm{rk}(\gP, R)$, is the smallest dimension $d \in \sN$ for which there exist $R_\phi, R_w > 0$ with $R_\phi R_w = R$, and (global) mappings $\phi : \gX \times \gX \rightarrow \gB^d(R_\phi)$ and $w : \gP \rightarrow \gB^d(R_w)$ such that
    \begin{equation}
        P(\vx \succ \vy) = \mu\left( \langle \phi(\vx, \vy), w(P) \rangle \right), \quad \forall (\vx, \vy, P) \in \gX \times \gX \times \gP,
    \end{equation}
    or $\infty$ if no such $d$ exists.
\end{definition}

\begin{proposition}[Proposition 4(ii) of \citet{li2022eluder}]
\label{prop:eluder-rank}
    For all $\varepsilon < R L_\mu$,
    \begin{equation}
        \eEdim(\gP, \varepsilon) \leq \frac{3e}{e - 1} \cdot \mu\text{-}\mathrm{rk}(\gP) \cdot \frac{L_\mu^2}{\kappa^2} \cdot \log\frac{24 R^2 L_\mu^2}{\varepsilon^2}.
    \end{equation}
\end{proposition}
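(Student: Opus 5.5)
The plan is to run the standard elliptical-potential argument directly on an eluder witness sequence, using the global embedding from the definition of $\mu$-rank.

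\textbf{Setup.} Fix $P^* \in \gP$ and let $d_r := \mu\text{-}\mathrm{rk}(\gP, R)$. Take the embedding maps $\phi:\gX\times\gX\to\gB^{d_r}(R_\phi)$ and $w:\gP\to\gB^{d_r}(R_w)$ with $R_\phi R_w = R$. Let $\{(\vz_t,P_t)\}_{t\in[m]}$ be a witness sequence for $\eEdim_{P^*}(\gP,\varepsilon)$, and write
\begin{equation}
\bm\phi_t := \phi(\vz_t), \qquad \vtheta_t := w(P_t)-w(P^*).
\end{equation}
Then $\bignorm{\vtheta_t}_2 \le 2R_w$.

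\textbf{Step 1: linearize the witness conditions.} By the mean-value theorem and $\kappa\le\dmu\le L_\mu$ on the relevant arguments, for every $\vz$,
\begin{equation}
\kappa\,|\langle \phi(\vz),\vtheta_t\rangle| \;\le\; |P_t(\vz)-P^*(\vz)| \;\le\; L_\mu\,|\langle \phi(\vz),\vtheta_t\rangle|.
\end{equation}
The two witness conditions therefore become
\begin{equation}
|\langle\bm\phi_t,\vtheta_t\rangle| > \varepsilon/L_\mu, \qquad \sum_{s<t}\langle\bm\phi_s,\vtheta_t\rangle^2 < \varepsilon^2/\kappa^2.
\end{equation}

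\textbf{Step 2: per-step lower bound on the potential.} Define $\mV_t := \lambda\mI_{d_r} + \sum_{s<t}\bm\phi_s\bm\phi_s^\top$. Step 1 gives
\begin{equation}
\bignorm{\vtheta_t}_{\mV_t}^2 < 4\lambda R_w^2 + \varepsilon^2/\kappa^2.
\end{equation}
Choose $\lambda := \varepsilon^2/(4\kappa^2R_w^2)$, so the right-hand side equals $2\varepsilon^2/\kappa^2$. By Cauchy--Schwarz,
\begin{equation}
\varepsilon^2/L_\mu^2 < \langle\bm\phi_t,\vtheta_t\rangle^2 \le \bignorm{\bm\phi_t}_{\mV_t^{-1}}^2\,\bignorm{\vtheta_t}_{\mV_t}^2.
\end{equation}
Hence $\bignorm{\bm\phi_t}^2_{\mV_t^{-1}} > \kappa^2/(2L_\mu^2)$ for every $t\in[m]$. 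This quantity is at most $1$ since $\kappa\le L_\mu$, so it is also a lower bound on $\min\{1,\bignorm{\bm\phi_t}^2_{\mV_t^{-1}}\}$.

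\textbf{Step 3: sum and apply the elliptical potential lemma.} Summing over the witness and invoking \cref{lem:EPL} gives
\begin{equation}
m\,\frac{\kappa^2}{2L_\mu^2} < \sum_{t=1}^m \min\{1,\bignorm{\bm\phi_t}^2_{\mV_t^{-1}}\} \le 2d_r\log\Big(1+\frac{mR_\phi^2}{d_r\lambda}\Big) = 2d_r\log\Big(1+\frac{4\kappa^2R^2 m}{d_r\varepsilon^2}\Big).
\end{equation}
This yields the implicit inequality
\begin{equation}
m \le a\log(1+bm), \qquad a := 4d_r L_\mu^2/\kappa^2, \quad b := 4\kappa^2R^2/(d_r\varepsilon^2).
\end{equation}

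\textbf{Step 4: solve the implicit inequality (main obstacle).} I will use the elementary fact that $x\le a\log(1+bx)$ implies $x\le \frac{e}{e-1}\,a\log(c\,ab)$ for a suitable numerical constant $c$. It can be proved from concavity of $\log$ via $\log y\le y/(e\,y_0)+\log y_0-1$ with $y_0$ tuned to $ab$. Substituting gives
\begin{equation}
ab \asymp R^2L_\mu^2/\varepsilon^2,
\end{equation}
in which the $d_r$ and $\kappa$ factors cancel. This produces a bound of the form $\frac{C e}{e-1}\,d_r\frac{L_\mu^2}{\kappa^2}\log\frac{c'R^2L_\mu^2}{\varepsilon^2}$. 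The condition $\varepsilon<RL_\mu$ ensures the logarithm exceeds $1$, so that additive constants can be absorbed.

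The conceptual steps are routine. The delicate part is Step 4: obtaining exactly the stated constants $3e/(e-1)$ and $24$. This depends on the precise form of the elliptical potential lemma used, for instance the $\min\{1,\cdot\}$ form with constant $2$ versus a $\log\det$ form. It also depends on the choice of $\lambda$ and on how the implicit inequality is inverted. I expect to adjust $\lambda$, for example to $\varepsilon^2/(8\kappa^2R_w^2)$, and the choice of $y_0$ to land on the stated constants. Finally, since the bound holds for every $P^*$, taking the supremum gives the bound on $\eEdim(\gP,\varepsilon)$.
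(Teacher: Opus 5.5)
Your proposal is correct and is essentially the argument behind the cited result. The paper gives no proof of its own; it imports the statement from Li et al.\ (2022), who use the Russo--Van Roy log-determinant argument on an eluder witness: linearize, lower-bound each witness point's elliptical potential by a constant, then cap the total log-det growth (you use the packaged elliptical potential lemma instead of the explicit recursion $\det\mV_{t+1}=\det\mV_t(1+\|\bm\phi_t\|^2_{\mV_t^{-1}})$). The constants come out exactly with the adjustment you flagged: $\lambda=\varepsilon^2/(8\kappa^2R_w^2)$ gives $\|\bm\phi_t\|^2_{\mV_t^{-1}}>2\kappa^2/(3L_\mu^2)$, hence $m<a\log(1+bm)$ with $a=3d_rL_\mu^2/\kappa^2$ and $ab=24R^2L_\mu^2/\varepsilon^2>24$; since $1+\frac{e}{e-1}u\log u<u^{e/(e-1)}$ for $u\ge 24$, concavity of $y\mapsto a\log(1+by)-y$ forces $m<\frac{e}{e-1}a\log(ab)$ with no extra constant $c$.
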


We prove the upper and lower bounds separately.

\textbf{Upper Bound.}
This follows trivially from adapting \cref{prop:eluder-rank} to our setting by considering $\phi : (\vx, \vy) \mapsto \mathrm{vec}(\vx \vy^\top)$ and $w : \bm\Theta \mapsto \mathrm{vec}(\bm\Theta)$.

\textbf{Lower Bound.}
The construction is largely inspired by that of \citet[Proposition 5]{li2022eluder}, which we adapt to our setting.

We will construct a sequence $\{ (\vx_t, \vy_t, \bm\Theta_t) \}_{t \in [m]}$ that witnesses the claimed lower bound with $\bm\Theta_\star = 0$.
The key observation is that $\Skew(d)$ admits the following orthonormal basis: $\gB \triangleq \left\{ \frac{1}{\sqrt{2}} (\ve_i \ve_j^\top - \ve_j \ve_i^\top) \right\}_{1 \leq i < j \leq d}$.

For given $\varepsilon$, let $\alpha \in (\varepsilon, \sqrt{3}\varepsilon)$ and $k := \floor{\log_4 \frac{S}{\alpha}}$.
Then, we can first consider the following sequence of length $k + 1$: for $t \in \{0\} \cup [k]$,
\begin{equation}
    \vx_t = 2^{t - k} \ve_1, \ \vy_t = 2^{t - k} \ve_2, \quad \bm\Theta_t = \alpha \cdot 2^{2(k - t)} (\ve_1 \ve_2^\top - \ve_2 \ve_1^\top).
\end{equation}
For each $t$, we have that
\begin{equation}
    \mu(\vx_t^\top \bm\Theta_t \vy_t) - \mu(0) = \vx_t^\top \bm\Theta_t \vy_t = \alpha > \varepsilon,
\end{equation}
and
\begin{equation}
    \sum_{s < t} \left( \mu(\vx_s^\top \bm\Theta_t \vy_s) - \mu(0) \right)^2 = \sum_{s < t} \left( \vx_s^\top \bm\Theta_t \vy_s \right)^2 = \alpha^2 \sum_{s < t} 4^{2s - 2t} < \frac{1}{15} \alpha^2 < \varepsilon^2.
\end{equation}
As $\vx_t, \vx_t \in \gB^d(1)$ and $\bignorm{\bm\Theta_t}_\nuc \leq \alpha 2^{2k} \leq \alpha 4^{\log_4\frac{S}{\alpha}} = S$, we have $\eEdim(\gP, \varepsilon) \geq k + 1 \geq \log_4 \frac{S}{\alpha} \geq \log_4 \frac{S}{\sqrt{3} \varepsilon}$.

Now we concatenate $\binom{d}{2} = \frac{d (d - 1)}{2}$ times across the basis $\gB$, i.e., 
\begin{equation}
    \vx_{t,i,j} = 2^{t - k} \ve_i, \ \vy_{t,i,j} = 2^{t - k} \ve_j, \quad \bm\Theta_{t,i,j} = \alpha \cdot 2^{2(k - t)} (\ve_i \ve_j^\top - \ve_j \ve_i^\top)
\end{equation}
for $1 \leq i < j \leq d$, and we are done.
\qed

\newpage
\section{\texorpdfstring{Instantiating Regret Bound of \citet{wu2025greedy} to GBPM}{Instantiating Regret Bound of Wu et al. (2025a) to GBPM}}
\label{app:wu}

\subsection{Regret Bound of \texttt{Greedy Sampling}}

For this section, we will consider the reverse KL-regularization as in \citet{wu2025greedy}, i.e., $\psi(\pi)=\KL(\pi, \pi_{\mathrm{ref}})$ for some fixed $\pi_{\mathrm{ref}} \in \Pi$.
Recall that we defined the regularized and unregularized Max-Best-Response Regrets as
\begin{equation}
    \MBRReg_{\eta}(T) := \sum_{t=1}^T \max_{\pi \in \Pi} \left\{ \frac{1}{2} - J_{\eta}(\hat{\pi}_t^1, \pi) \right\}, \quad
    \MBRReg(T) := \sum_{t=1}^T \max_{\pi \in \Pi} \left\{ \frac{1}{2} - J(\hat{\pi}_t^1, \pi) \right\}.
\end{equation}

In this section, we derive the regularized and unregularized regret bound of \texttt{Greedy Sampling (GS)} of \citet{wu2025greedy} for GBPM, based on general function approximation. We first recall the regret bound from \citet{wu2025greedy}:
\begin{theorem}[Theorem 1 of \citet{wu2025greedy}]
    Suppose that the preference class $\gP$ is finite with cardinality $N_\gP = |\gP| < \infty.$
    For any $\delta \in (0, 1)$, with probability at least $1 - \delta$, \texttt{GS} attains the following regret bound:
    \begin{equation}
        \MBRReg_{\eta} (T) = O \left(e^{\eta} \, d(\mathcal{P}, \lambda, T) \log (N_{\mathcal{P}}T / \delta) \right).
    \end{equation}
\end{theorem}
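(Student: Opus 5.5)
The plan is to follow the standard three-stage template for KL-regularized greedy sampling. We (i) control the in-sample squared prediction error of the MLE over the finite class $\gP$, (ii) bound the instantaneous regularized dual gap by a squared prediction error along the played distribution, using the closed-form Gibbs structure of reverse-KL best responses, and (iii) convert the cumulative prediction errors into $d(\gP,\lambda,T)$ through the uncertainty in \cref{def:eluder-wu}. The $e^{\eta}$ factor will arise exactly in stage (ii), as a density-ratio bound between Gibbs policies.

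\textbf{Stage (i): in-sample error.} For the cross-entropy MLE $\widehat{P}_t$ on $\gD_{t-1}$, I would invoke a standard martingale argument for log-loss over a finite class: a Chernoff bound on the log-likelihood ratio together with a union bound over $\gP$ and $t\in[T]$. This gives, with probability at least $1-\delta$ and simultaneously for all $t$,
\begin{equation}
\sum_{s<t}\big(\widehat{P}_t(\vz_s)-P^*(\vz_s)\big)^2 \lesssim \log(N_\gP T/\delta) =: \beta_T .
\end{equation}
Here I use that squared Hellinger distance between Bernoullis dominates the squared difference of their means.

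\textbf{Stage (ii): dual gap via Gibbs forms.} With $\psi=\KL(\cdot,\pi_{\mathrm{ref}})$, both the greedy policy $\hat{\pi}_t$ (the SNE under $\widehat{P}_t$) and the true best response $\tilde{\pi}_t$ are Gibbs policies:
\begin{equation}
\hat{\pi}_t(\va\mid\vx)\propto\pi_{\mathrm{ref}}(\va\mid\vx)\,e^{\eta \widehat{P}_t(\va\succ\hat{\pi}_t\mid\vx)},\qquad
\tilde{\pi}_t(\va\mid\vx)\propto\pi_{\mathrm{ref}}(\va\mid\vx)\,e^{-\eta P^*(\hat{\pi}_t\succ\va\mid\vx)} .
\end{equation}
Because preferences lie in $[0,1]$, both densities relative to $\pi_{\mathrm{ref}}$, and hence relative to each other, lie in $[e^{-\eta},e^{\eta}]$. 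Writing the dual gap as a difference of log-partition functions, and using that the value under $\widehat{P}_t$ equals $\tfrac12$ (\cref{lem:symmetric}), the first-order terms cancel. What remains is a second-order remainder: $\eta$ times a variance of $(\widehat{P}_t-P^*)(\va\succ\va')$ under a policy interpolating $\hat{\pi}_t$ and $\tilde{\pi}_t$. The density-ratio bounds then change measure back to the sampling distribution of GS, giving
\begin{equation}
\DGap_\eta(\hat{\pi}_t)\lesssim e^{O(\eta)}\,\E_{\vz\sim\nu_t}\big[(\widehat{P}_t(\vz)-P^*(\vz))^2\big]\wedge 1,
\end{equation}
where $\nu_t$ is the (context, action, action) law actually played at round $t$. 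The $\eta$ produced by the expansion is absorbed into the $\eta^{-1}$ from the KL term, so no polynomial factor of $\eta$ survives.

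\textbf{Stage (iii): eluder conversion.} For the realized $\vz_t$, the definition of $U_{\texttt{GP}}$ yields
\begin{equation}
(\widehat{P}_t(\vz_t)-P^*(\vz_t))^2\le U_{\texttt{GP}}(\lambda,\vz_t;\gP,\gD_{t-1})^2\,(\lambda+\beta_T).
\end{equation}
Combining this with the trivial bound $1$ and summing gives $\sum_t\min\{1,\cdot\}\lesssim (\lambda+\beta_T)\,d(\gP,\lambda,T)$. Stage (ii) involves the expectation over $\nu_t$ rather than the realized $\vz_t$, so I would bridge the two with the Freedman-type self-bounding argument of \cref{lem:freedman}, exactly as in the proof of \cref{thm:regularized-log}. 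The resulting deviation costs only an additive $\log(1/\delta)$.

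\textbf{Main obstacle.} I expect stage (ii) to be the hardest step. Showing that the KL-regularized gap is genuinely quadratic in the prediction error requires tracking the log-partition expansion carefully, and the change of measure from the interpolated or best-response policy to the played policy is what costs $e^{\eta}$. I would first attempt the expansion along the one-parameter family $\pi_\alpha\propto \pi_{\mathrm{ref}}e^{\eta f_\alpha}$, with $f_\alpha$ linearly interpolating the two exponents, and bound its second derivative by $\eta^2\,\mathrm{Var}_{\pi_\alpha}$.
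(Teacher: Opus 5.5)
The paper does not prove this statement; it imports it from \citet{wu2025greedy}. Your Stages (i) and (iii) match the ingredients the paper does supply for its instantiation: the Hellinger/martingale bound of \cref{lem:ye-wu-lem1} with a union bound, and the uncertainty-ratio conversion behind \cref{prop:eluder-wu}. Your Freedman bridge from $\E_{\nu_t}$ to realized errors via \cref{lem:freedman} is also sound.

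The gap is in Stage (ii). As written it yields $e^{O(\eta)}$, not the claimed $e^{\eta}$, and neither of your two justifications for the sharper constant holds.

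\emph{Density ratios.} If $\hat{\pi}_t/\pi_{\mathrm{ref}}$ and $\pi_\alpha/\pi_{\mathrm{ref}}$ each lie in $[e^{-\eta},e^{\eta}]$, it only follows that their mutual ratio lies in $[e^{-2\eta},e^{2\eta}]$, not in $[e^{-\eta},e^{\eta}]$.

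\emph{The factor $\eta$.} The $\eta$ from the expansion is not cancelled by the KL coefficient $\eta^{-1}$. Set $\hat g(b)=\E_{a\sim\hat{\pi}_t}[\widehat P_t(a\succ b)]$, define $g^*$ likewise with $P^*$, and let $\Delta=g^*-\hat g$ and $F(h)=\eta^{-1}\log\E_{\pi_{\mathrm{ref}}}[e^{-\eta h}]$. The per-context dual gap is exactly
\[
F(g^*)-F(\hat g)=\eta\int_0^1(1-\alpha)\,\mathrm{Var}_{\pi_\alpha}(\Delta)\,d\alpha,\qquad \pi_\alpha\propto\hat{\pi}_t\,e^{-\alpha\eta\Delta}.
\]
The $\eta^{-1}$ has already been spent producing the single surviving factor $\eta$. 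Your crude density bounds then give $\DGap_\eta(\hat{\pi}_t)\le\tfrac{\eta}{2}e^{2\eta}\,\E_{\nu_t}[(\widehat P_t-P^*)^2]$. That leads to regret of order $\eta e^{2\eta}\,d(\gP,\lambda,T)\log(N_\gP T/\delta)$, which is $e^{O(\eta)}$ but not $O(e^{\eta})$.

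\emph{The fix.} Reuse the antisymmetry that already kills your first-order term; it is the same mechanism as $Z=0$ in \cref{lem:anti-symmetry}.
\begin{itemize}
\item Since $\E_{\hat{\pi}_t}[\Delta]=0$, Jensen gives $\E_{\hat{\pi}_t}[e^{-\alpha\eta\Delta}]\ge 1$.
\item Since also $\Delta\ge-1$, you get the one-sided, $\alpha$-dependent bound $d\pi_\alpha/d\hat{\pi}_t\le e^{\alpha\eta}$, hence $\mathrm{Var}_{\pi_\alpha}(\Delta)\le e^{\alpha\eta}\E_{\hat{\pi}_t}[\Delta^2]$.
\item The $(1-\alpha)$-weighted integral, not the KL coefficient, then absorbs the polynomial factor: $\eta\int_0^1(1-\alpha)e^{\alpha\eta}\,d\alpha=(e^{\eta}-1-\eta)/\eta\le e^{\eta}-1$.
\item Jensen gives $\E_{\hat{\pi}_t}[\Delta^2]\le\E_{a,b\sim\hat{\pi}_t}[(P^*-\widehat P_t)(a\succ b)^2]$.
\end{itemize}
Together these give $\DGap_\eta(\hat{\pi}_t)\le(e^{\eta}-1)\,\E_{\nu_t}[(\widehat P_t-P^*)^2]$. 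Your Stage (iii) then delivers the stated $O\bigl(e^{\eta}d(\gP,\lambda,T)\log(N_\gP T/\delta)\bigr)$ for $\lambda\lesssim\log(N_\gP T/\delta)$.

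This change of measure requires $\nu_t=d_0\otimes\hat{\pi}_t\otimes\hat{\pi}_t$, i.e., both players sample from the greedy SNE, as in \citet{wu2025greedy}. You should state this explicitly, because ``the law actually played'' is not enough. With an arbitrary second-action law, such as the $\rho$ used by this paper's \texttt{GS}, there is no density-ratio control and Stage (ii) breaks.
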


\paragraph{Instantiation for GBPM.}
We first instantiate the KL-regularized regret bound for GBPM:

\begin{theorem}[KL-regularized Regret Bound of \texttt{Greedy Sampling}]
\label{thm:wu-kl-regret}
    For any $\delta \in (0,1)$, with probability with at least $1 - \delta$, \texttt{GS} (when applied to \GBPM) attains the following regret bound:
    \begin{equation}
        \MBRReg_{\eta} (T) \lesssim e^{\eta} \cdot \frac{d^2 L_{\mu}^2}{\kappa^2} \cdot \log T \cdot \left(\log\frac{T}{\delta} + dr \log (LST) \right).
    \end{equation}
\end{theorem}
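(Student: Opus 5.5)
The plan is to instantiate Theorem 1 of \citet{wu2025greedy}, which gives $\MBRReg_\eta(T) = O(e^{\eta} d(\gP,\lambda,T)\log(N_\gP T/\delta))$ for a \emph{finite} preference class. This requires two ingredients: a bound on the eluder dimension $d(\gP,\lambda,T)$ of \GBPM, and a reduction from the continuous class $\{\mu(\vx^\top\bm\Theta\vy) : \bm\Theta\in\Skew(d;2r,S)\}$ to a finite cover whose log-cardinality scales as $dr\log(\cdot)$.

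\textbf{Eluder dimension.} I would take this directly from \Cref{prop:eluder-wu}, choosing $\lambda$ of constant order (say $\lambda \asymp 1$, or $\lambda \asymp S^2$). This gives
\[
d(\gP,\lambda,T)\lesssim \frac{d^2L_\mu^2}{\kappa^2}\log T,
\]
and it accounts for the $\frac{d^2 L_\mu^2}{\kappa^2}\log T$ factor in the statement.

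\textbf{Covering.} The class $\Skew(d;2r,S)$ is not finite, so I would replace it by an $\epsilon$-net $\gN_\epsilon$ in Frobenius norm. Every $\bm\Theta$ in the class has rank at most $2r$, so it factors as $\bm\Theta = \mU\bm\Sigma\mV^\top$ with $\mU,\mV\in\sR^{d\times 2r}$ having orthonormal columns and $\bignorm{\bm\Sigma}_F\le S$. The standard low-rank covering argument (Cand\`es--Plan) then gives $\log|\gN_\epsilon|\lesssim dr\log(S/\epsilon)$. Two points need checking. First, projecting each net point onto $\Skew(d)$ via $\bm\Theta\mapsto(\bm\Theta-\bm\Theta^\top)/2$ at most doubles the distance, so skew-symmetry can be restored. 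Second, the rank may grow to at most $4r$ after this projection. If membership in the exact class matters, I would instead pick each net point from the class itself at radius $2\epsilon$.

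\textbf{Handling misspecification.} Since $\mu$ is $L_\mu$-Lipschitz and features lie in $\gB^d(1)$, every $P\in\gP$ is within $L_\mu\epsilon$ of the induced preference of some net element, uniformly. I would choose $\epsilon \asymp 1/(L T)$, where $L$ absorbs $L_\mu$; this is where the $\log(LST)$ term comes from. I expect the main obstacle to be the misspecification itself: $P^*$ need not lie in the finite net, while Wu et al.'s theorem assumes realizability. I would handle it in one of two ways. The first is to rerun their MLE/least-squares confidence argument with an additive approximation error of $T\cdot(L_\mu\epsilon)^2 = O(1)$ in the log-likelihood, which their analysis tolerates up to constants. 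The second is to run \texttt{GS} on the net and argue that the per-round regret changes by at most $O(L_\mu\epsilon)$, so the cumulative change is $O(1)$. I would also confirm that passing to a subclass or a nearby class does not increase the bound on $d(\gP,\lambda,T)$, which holds because the proof of \Cref{prop:eluder-wu} only uses $\kappa\le\dmu\le L_\mu$.

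\textbf{Assembly.} Combining $\log(N_\gP T/\delta)\lesssim \log(T/\delta)+dr\log(LST)$ with the eluder bound gives the stated result:
\[
\MBRReg_\eta(T) \lesssim e^{\eta}\cdot\frac{d^2L_\mu^2}{\kappa^2}\cdot\log T\cdot\Bigl(\log\frac{T}{\delta}+dr\log(LST)\Bigr).
\]
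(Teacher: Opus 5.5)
Your proposal is correct and follows the paper's proof. The paper instantiates the regret theorem of \citet{wu2025greedy} with the eluder bound of \Cref{prop:eluder-wu}, and uses a Cand\`es--Plan net (\Cref{lem:covering-number}) to replace $\log N_{\gP}$ by $\log\frac{T}{\delta}+dr\log(LST)$, with the discretization error made $O(1)$ by a polynomially small $\epsilon$. One accounting slip: the log-likelihood discrepancy is controlled deterministically only at first order, $\lesssim LT\epsilon$, not $T(L_\mu\epsilon)^2$, but your choice $\epsilon\asymp 1/(LT)$ still makes it $O(1)$.

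In the paper there is no misspecification to handle. \texttt{GS} computes the MLE over the continuous class $\Skew(d,2r;S)$, and the net appears only in the analysis (\Cref{lem:prior-mle}): the per-$\bm\Theta$ exponential-martingale inequality is union-bounded over the net, then transferred to the MLE via its nearest net point and Lipschitzness of the log-likelihood. That is exactly your first option. Your second option, running \texttt{GS} on the net, would additionally need a change-of-measure argument, because the data are generated under $\bm\Theta_\star$ rather than under the net point.
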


\begin{proof}
The proof consists of two parts: 1) Extending the preference model class size term for the infinite preference space, and 2) Bounding the eluder dimension of \GBPM.

We first extend the term $N_{\mathcal{P}}=|\gP|$ for the infinite space case $\Theta \triangleq \Skew(d, 2r; S)$ by using covering number arguments.
We denote $P_{\bm{\Theta}}$ as the preference probability given by \GBPM~for parameter $\bm{\Theta}$.
For simplicity, we use the following notations introduced in the previous section. We denote $\gZ := \gX \times \gA \times \gA$ and $P(\vz) := P(\va^1 \succ \va^2 \mid \vx)$  for $\vz = (\vx, \va^1, \va^2)$.
Using these, we have the following lemma, whose proof is provided in \cref{app:prior-mle}:
\begin{lemma}
\label{lem:prior-mle}
    Let $\{(\vz_i, r_i)\}_{i \in [t]}$ be a potentially adaptively collected data with $r_i \sim \Ber(P(\vz_i))$. 
    Denote the (constrained) MLE as $\widehat{\bm{\Theta}}_t := \argmax_{\bm{\Theta} \in \Skew(d, 2r; S)} \sum_{i \in [t]} \ell_i(\bm\Theta)$, where $\ell_i(\bm\Theta) := \left( r_i \log P_{\bm{\Theta}}(\vz_i) + (1 - r_i) \log (1 - P_{\bm{\Theta}}(\vz_i)) \right)$.
    Suppose that $\ell_i(\cdot)$ is $L$-Lipschitz w.r.t. the Frobenius norm.
    Then, for any $\delta \in (0, 1)$ the following holds:
\begin{equation}
    \sP\left( \sum_{i=1}^t (P_{\widehat{\bm{\Theta}}_t}(\vz_i) - P_{\bm{\Theta}_\star}(\vz_i))^2 \lesssim \log\frac{T}{\delta} + dr \log LST\right) \geq 1 - \delta, \quad \forall t \in [T].
\end{equation}
\end{lemma}
With this lemma and our eluder dimension bound (\cref{prop:eluder-wu}), the derivation of the regret bound in \citet{wu2025greedy} follows through, with $\log \left( \frac{N_{\mathcal{P}}T}{\delta} \right)$ and $\lambda$ both replaced with $\log\frac{T}{\delta} + dr \log LST $.
\end{proof}

\paragraph{Converting to Unregularized Regret Bound.}
We now convert the KL-regularized regret bound to its unregularized counterpart via the following lemma:
\begin{lemma}
\label{lem:kl-to-unreg}
    Suppose $D_{\mathrm{ref}} := \max_{\pi \in \Pi} \KL (\pi, \pi_{\mathrm{ref}}) < \infty$. Then we have 
    \begin{equation}
        \MBRReg (T) \leq \MBRReg_{\eta} (T) + \eta^{-1} D_{\mathrm{ref}} T.
    \end{equation}
\end{lemma}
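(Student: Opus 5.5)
Fix $t$ and write $\hat{\pi} = \hat{\pi}_t^1$. The plan is to compare the unregularized dual gap of $\hat{\pi}$ with its regularized dual gap, term by term, and then sum over $t$.

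The unregularized dual gap is $\max_{\pi}\{\tfrac12 - J(\hat{\pi}, \pi)\}$. Let $\pi'$ be a maximizer, or a near-maximizer with an arbitrarily small slack if the maximum is not attained. By the definition of $J_\eta$,
\begin{equation}
    J(\hat{\pi}, \pi') = J_{\eta}(\hat{\pi}, \pi') + \eta^{-1}\E_{\vx \sim d_0}[\psi(\hat{\pi}(\cdot\mid\vx))] - \eta^{-1}\E_{\vx \sim d_0}[\psi(\pi'(\cdot\mid\vx))].
\end{equation}
Here $\psi(\cdot) = \KL(\cdot, \pi_{\mathrm{ref}}) \geq 0$. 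This lets us drop the $\psi(\hat{\pi})$ term, and it gives
\begin{equation}
    \tfrac12 - J(\hat{\pi}, \pi') \leq \tfrac12 - J_{\eta}(\hat{\pi}, \pi') + \eta^{-1}\E_{\vx \sim d_0}[\KL(\pi'(\cdot\mid\vx), \pi_{\mathrm{ref}}(\cdot\mid\vx))].
\end{equation}

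Two bounds then finish the per-round step. First, $\tfrac12 - J_{\eta}(\hat{\pi}, \pi') \leq \max_{\pi}\{\tfrac12 - J_{\eta}(\hat{\pi}, \pi)\}$, which is exactly the per-round term of $\MBRReg_\eta$. Second, the KL term is at most $\eta^{-1} D_{\mathrm{ref}}$ by the definition of $D_{\mathrm{ref}}$. I read this definition as the maximum over policies, which bounds the expected per-context KL: either the maximum is taken pointwise in $\vx$, or the expectation over $d_0$ is already built into the policy-level KL. Summing over $t \in [T]$ then gives the claim.

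The argument is routine. The only delicate points are sign bookkeeping and the nonnegativity of $\psi$, which drops the max-player's own regularizer. Note that nonnegativity is needed here, and that $\psi$ maps into $\sR_{\geq 0}$ by the paper's standing assumption.
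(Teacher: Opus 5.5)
Your proposal is correct and follows essentially the same argument as the paper. The paper rewrites $J$ as $J_\eta$ plus the two regularizer terms, splits the per-round maximum via $\max_\pi(f+g)\le\max_\pi f+\max_\pi g$, drops the nonpositive term $-\eta^{-1}\KL(\hat{\pi}_t^1,\pi_{\mathrm{ref}})$, and bounds the remaining KL term by $\eta^{-1}D_{\mathrm{ref}}$; this is exactly what you do by evaluating at a (near-)maximizer $\pi'$.
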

\begin{proof}
Recall that the KL-regularized objective is defined as
\begin{equation}
    J_{\eta}(\pi, \pi') = J(\pi, \pi') - \eta^{-1} \KL(\pi, \pi_{\mathrm{ref}}) + \eta^{-1} \KL(\pi', \pi_{\mathrm{ref}})
\end{equation}
Then,
\begin{align}
    &\MBRReg (T)
    = \sum_{t=1}^T \max_{\pi \in \Pi}\left(\frac{1}{2} - J(\hat{\pi}_t^1, \pi) \right)\\
    &= \sum_{t=1}^T \max_{\pi \in \Pi}\left(\frac{1}{2} - \left( J(\hat{\pi}_t^1, \pi) - \eta^{-1} \KL(\hat{\pi}_t^1, \pi_{\mathrm{ref}}) + \eta^{-1} \KL(\pi, \pi_{\mathrm{ref}}) \right) - \eta^{-1} \KL(\hat{\pi}_t^1, \pi_{\mathrm{ref}}) + \eta^{-1} \KL(\pi, \pi_{\mathrm{ref}}) \right) \\
    & \leq \sum_{t=1}^T \max_{\pi \in \Pi}\left(\frac{1}{2} - J_{\eta}(\hat{\pi}_t^1, \pi) \right) + \sum_{t=1}^T \max_{\pi \in \Pi}\left(- \eta^{-1} \KL(\hat{\pi}_t^1, \pi_{\mathrm{ref}}) + \eta^{-1}\KL(\pi, \pi_{\mathrm{ref}}) \right) \\
     & \leq \MBRReg_{\eta} (T) + \sum_{t=1}^T \max_{\pi \in \Pi}\eta^{-1}\KL(\pi, \pi_{\mathrm{ref}}) \\
     & = \MBRReg_{\eta}  (T) + \eta^{-1} D_{\mathrm{ref}} T.
\end{align}
\end{proof}

Putting everything together, we have the following corollary of \cref{thm:wu-kl-regret} for the unregularized regret bound:

\begin{corollary}[Unregularized Regret Bound of \texttt{GS}]
\label{cor:wu-unreg-regret}
    Suppose $D_{\mathrm{ref}} := \max_{\pi \in \Pi} \KL (\pi, \pi_{\mathrm{ref}}) < \infty$. Then with probability at least $1 - \delta$, the unregularized regret of \texttt{GS} under \GBPM~satisfies
    \begin{equation}
        \MBRReg(T) \lesssim e^\eta d^2 \kappa^{-2} L_\mu ^2 \cdot \log T \cdot \left(\log \frac{T}{\delta} + dr \log(LST) \right) + \eta^{-1} D_{\mathrm{ref}} T.
    \end{equation}
    Furthermore, there exists no $\eta$ (even dependent on $T$) such that the RHS is $\gO(T^{1-\gamma})$ for any $\gamma \in (0, 1]$.
\end{corollary}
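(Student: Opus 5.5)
The first claim follows by chaining \cref{lem:kl-to-unreg} with \cref{thm:wu-kl-regret}. \cref{lem:kl-to-unreg} gives, deterministically, $\MBRReg(T) \leq \MBRReg_{\eta}(T) + \eta^{-1} D_{\mathrm{ref}} T$, for the same sequence of max-player policies $\{\hat{\pi}_t^1\}$ produced by \texttt{GS}. On the probability-$(1-\delta)$ event of \cref{thm:wu-kl-regret}, I would substitute the bound $\MBRReg_{\eta}(T) \lesssim e^{\eta} d^2 L_\mu^2 \kappa^{-2} \log T \,(\log\frac{T}{\delta} + dr \log(LST))$. No further union bound is needed, since the lemma holds on every sample path. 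This gives the displayed inequality.

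For the impossibility claim, I would argue by contradiction. Write the right-hand side as $R(\eta) = A_T e^{\eta} + D_{\mathrm{ref}} T/\eta$, where
\[
A_T \asymp d^2 L_\mu^2 \kappa^{-2} \log T \left(\log\tfrac{T}{\delta} + dr\log(LST)\right).
\]
I would first note that $A_T \geq c > 0$ for all $T \geq 3$, with $c$ independent of $T$. I would also assume $D_{\mathrm{ref}} > 0$; this is the nondegenerate case, since otherwise $\Pi = \{\pi_{\mathrm{ref}}\}$ and the problem is trivial. Now suppose that for some $\gamma \in (0,1]$ and some choice $\eta = \eta(T)$ we had $R(\eta(T)) \leq C T^{1-\gamma}$ for all large $T$.

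Both summands are nonnegative, so each is individually bounded by $C T^{1-\gamma}$. From the first summand, $c\, e^{\eta(T)} \leq C T^{1-\gamma}$, which forces $\eta(T) \leq (1-\gamma)\log T + \log(C/c)$. Plugging this into the second summand gives
\[
\frac{D_{\mathrm{ref}} T}{\eta(T)} \;\geq\; \frac{D_{\mathrm{ref}}\, T}{(1-\gamma)\log T + \log(C/c)} .
\]
This lower bound is at least of order $T/\log T$, which is $\omega(T^{1-\gamma})$ for every $\gamma \in (0,1]$. That contradicts the assumed bound on the second summand.

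The only delicate point is bookkeeping: the constants hidden in $\lesssim$ must be independent of $\eta$ and $T$, and the $\log T$ factors in $A_T$ must only help the lower bound. Both hold here, since $A_T$ is nondecreasing in $T$. Hence no tuning of $\eta$, even a $T$-dependent one, makes the right-hand side sublinear at a polynomial rate. This illustrates the claimed failure of the $e^{\eta}$-dependent bound as $\eta \to \infty$.
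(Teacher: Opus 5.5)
Your proposal is correct and follows the paper's proof. The bound is \cref{thm:wu-kl-regret} plugged into the deterministic conversion of \cref{lem:kl-to-unreg}, and impossibility comes from the two incompatible constraints on $\eta$: the paper derives $\eta = \Omega(T^\gamma)$ from the $\eta^{-1}D_{\mathrm{ref}}T$ term and then notes that $e^{\eta}$ is superpolynomial, whereas you run the same contradiction in reverse ($\eta \lesssim \log T$ from the exponential term, hence a $T/\log T$ lower bound on the second term). Your explicit nondegeneracy assumptions ($D_{\mathrm{ref}} > 0$ and $A_T$ bounded away from zero) are left implicit in the paper, and the first one is genuinely needed for the second claim.
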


\begin{proof}
    The regret bound is a direct result from the KL-regularized regret bound in \cref{thm:wu-kl-regret}, converted to unregularized regret via \cref{lem:kl-to-unreg}.
    
    For the second claim, 
    suppose that this is true. Then, for the second term, we require $\eta^{-1} D_{\mathrm{ref}} T = \gO(T^{1-\gamma})$ to hold, which implies $\eta = \Omega(T^\gamma)$. Plugging this into the first term leads to an additive term of $\gO(e^{T^\gamma})$, which is superpolynomial: a contradiction. This concludes the proof.
\end{proof}

\subsection{\texorpdfstring{Proof of \cref{lem:prior-mle}: MLE Estimator Bound}{Proof of Lemma F.3: MLE Estimator Bound}}
\label{app:prior-mle}
We define the probability mass function (pmf) of $r \mid \vz \sim \Ber(P(\vz))$ as
\begin{equation}
    P(r \mid \vz) = P(\vz)^r (1 - P(\vz))^{1-r}, \quad r \in \{0, 1\}.
\end{equation}
Then we have the following lemma, whose proof is deferred to \cref{app:ye-wu-lem1}:
\begin{lemma}
\label{lem:ye-wu-lem1}
    For each $\bm{\Theta} \in \Theta$ and $t \in [T]$, the following holds:
    \begin{equation}
        \sP\left( \sum_{i=1}^t (P_{\bm{\Theta}}(\vz_i) - P_{\bm{\Theta}_\star}(\vz_i))^2 \leq \log\frac{1}{\delta} +  \sum_{i=1}^t \log\frac{P_{\bm{\Theta}_\star}(r_i \mid \vz_i)}{P_{\bm{\Theta}}(r_i \mid \vz_i)} \right) \geq 1 - \delta
    \end{equation}
\end{lemma}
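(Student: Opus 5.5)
This is the standard ``log-likelihood ratio dominates squared Hellinger/TV'' argument, carried out with a Chernoff bound on an exponential supermartingale. Fix $\bm\Theta \in \Theta$. For each $i$ define the half log-likelihood ratio
\begin{equation}
    Y_i := \tfrac{1}{2}\log\frac{P_{\bm\Theta}(r_i \mid \vz_i)}{P_{\bm\Theta_\star}(r_i \mid \vz_i)} .
\end{equation}
Let $\gF_{i-1}$ be the history before $r_i$ is revealed, so that $\vz_i$ is $\gF_{i-1}$-measurable and $r_i \mid \gF_{i-1} \sim \Ber(P_{\bm\Theta_\star}(\vz_i))$.

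\textbf{Step 1: one-step conditional moment.} Compute $\E[e^{Y_i} \mid \gF_{i-1}] = \sum_{r\in\{0,1\}} \sqrt{P_{\bm\Theta}(r\mid\vz_i)P_{\bm\Theta_\star}(r\mid\vz_i)} = 1 - H^2_i$. Here $H^2_i$ is the squared Hellinger distance between the two Bernoulli laws, with the normalization $\tfrac12\sum_r(\sqrt{p_r}-\sqrt{q_r})^2$.

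\textbf{Step 2: compare Hellinger with squared difference.} Use $H^2 \ge \tfrac{1}{2}\mathrm{TV}^2$, where for Bernoulli laws $\mathrm{TV} = |P_{\bm\Theta}(\vz_i)-P_{\bm\Theta_\star}(\vz_i)|$. Together with $1-x \le e^{-x}$, this gives
\begin{equation}
    \E[e^{Y_i}\mid\gF_{i-1}] \le \exp\bigl(-c\,(P_{\bm\Theta}(\vz_i)-P_{\bm\Theta_\star}(\vz_i))^2\bigr)
\end{equation}
for an absolute constant $c$. I expect $c = 1/2$ with this normalization. Alternatively, one can avoid normalization issues by bounding $(\sqrt p-\sqrt q)^2+(\sqrt{1-p}-\sqrt{1-q})^2 \ge (p-q)^2/2$ directly.

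\textbf{Step 3: supermartingale and Markov.} The previous step makes
\begin{equation}
    M_t := \exp\Bigl(\sum_{i\le t}\bigl[Y_i + c\,(P_{\bm\Theta}(\vz_i)-P_{\bm\Theta_\star}(\vz_i))^2\bigr]\Bigr)
\end{equation}
a nonnegative supermartingale with $\E M_t \le 1$. Markov's inequality gives $\sP(M_t \ge 1/\delta)\le\delta$. On the complement,
\begin{equation}
    c\sum_{i\le t}(P_{\bm\Theta}(\vz_i)-P_{\bm\Theta_\star}(\vz_i))^2 \le \log\frac1\delta + \frac12\sum_{i\le t}\log\frac{P_{\bm\Theta_\star}(r_i\mid\vz_i)}{P_{\bm\Theta}(r_i\mid\vz_i)} .
\end{equation}
Multiplying through by a constant yields the claimed inequality. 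If the statement is read with exact constant $1$, I would use the factor $1/2$ in the exponent (a Rényi-$1/2$ argument) and check that $2c \ge 1$ after rescaling. If the constants do not quite match, the statement holds up to the absolute constant implicit in its later use (the $\lesssim$ of \cref{lem:prior-mle}).

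\textbf{Main obstacle.} The only delicate point is the measurability and adaptivity bookkeeping: $\bm\Theta$ must be fixed in advance, and each $\vz_i$ must be predictable, so that the tower property applies. The stated lemma is for a fixed $\bm\Theta$ and fixed $t$, so no union bound is needed here. If uniformity in $t$ were desired, Ville's inequality on the same supermartingale gives it for free. The uniformity over $\bm\Theta$ needed in \cref{lem:prior-mle} is handled afterwards by covering.
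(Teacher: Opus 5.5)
Your route is the paper's route. The paper also computes the conditional exponential moment of the half log-likelihood ratio as the Bhattacharyya coefficient ($1-H^2$ in your normalization). It then lower-bounds Hellinger by total variation, which for Bernoulli laws is $|P_{\bm\Theta}(\vz_i)-P_{\bm\Theta_\star}(\vz_i)|$, and concludes with the exponential martingale inequality of \cref{lem:martingale-exponential}. That last step is exactly your supermartingale-plus-Markov (or Ville) step. Your Steps 1--3 are correct and, with $c=\frac12$, give with probability at least $1-\delta$
\[
\sum_{i\le t}\bigl(P_{\bm\Theta}(\vz_i)-P_{\bm\Theta_\star}(\vz_i)\bigr)^2\le 2\log\frac1\delta+\sum_{i\le t}\log\frac{P_{\bm\Theta_\star}(r_i\mid\vz_i)}{P_{\bm\Theta}(r_i\mid\vz_i)} .
\]

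The thing to correct is your last paragraph. Multiplying by $2$ does not give the stated inequality, because the factor $2$ lands on $\log\frac1\delta$. Checking $2c\ge 1$ only controls the coefficient on the left-hand side.

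In fact no argument can give coefficient $1$ on $\log\frac1\delta$, because that version is false. Take a fixed $\vz$ with $P_{\bm\Theta_\star}(\vz)=\frac12$ and $P_{\bm\Theta}(\vz)=\frac12+\epsilon$; this is realizable, e.g.\ with $\bm\Theta_\star=\vzero$ and a small $\bm\Theta$. Set $\epsilon^2 t=\log\frac1\delta$ and let $t\to\infty$. The left side then equals $\log\frac1\delta$, so the event fails exactly when the log-likelihood-ratio sum is negative. That sum has mean $\approx 2\log\frac1\delta$ and standard deviation $\approx 2\sqrt{\log(1/\delta)}$. So the failure probability tends to $\Phi(-\sqrt{\log(1/\delta)})$, with $\Phi$ the standard normal CDF. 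For $\log\frac1\delta=9$ this is about $1.3\cdot 10^{-3}$, which exceeds $\delta\approx 1.2\cdot 10^{-4}$.

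The paper's proof reaches constant $1$ only through a slip. Its moment computation is the $\lambda=\frac12$ one (the square root), for which the confidence term in \cref{lem:martingale-exponential} is $\frac1\lambda\log\frac1\delta=2\log\frac1\delta$. Also, the displayed step $\log\bigl(2\,\E\sqrt{\cdot}\bigr)=\log(1-H^2)$ should read $2\log\E\sqrt{\cdot}=2\log(1-H^2/2)\le -H^2$ for the unnormalized $H$ used there.

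So state your result with $2\log\frac1\delta$. That is all \cref{lem:prior-mle} needs, since it only claims a $\lesssim$ bound.
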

Let $\Theta_\varepsilon$ be an $\varepsilon$-net of ${\Theta}$ in terms of the Frobenius norm.
Then by the union bound, we have:
\begin{equation}
    \sP\left( \sum_{i=1}^t (P_{\bm{\Theta}}(\vz_i) - P_{\bm{\Theta}_\star}(\vz_i))^2 \leq \log\frac{|\Theta_\varepsilon|}{\delta} +  \sum_{i=1}^t \log\frac{P_{\bm{\Theta}_\star} (r_i \mid \vz_i)}{P_{\bm{\Theta}}(r_i \mid \vz_i)}, \quad \forall \bm{\Theta} \in \Theta_\varepsilon \right) \geq 1 - \delta.
\end{equation}

Let $\widehat{\bm{\Theta}}_{\varepsilon, t}$ be the epsilon net element corresponding to $\widehat{\bm{\Theta}}_t$,
i.e., $\lVert \widehat{\bm{\Theta}}_t - \widehat{\bm{\Theta}}_{\varepsilon, t} \rVert_F \leq \varepsilon$. Then,
\begin{equation}
    \sP\left( \sum_{i=1}^t (P_{\widehat{\bm{\Theta}}_{\varepsilon, t}}(\vz_i) - P_{\bm{\Theta}_\star}(\vz_i))^2 \leq \log\frac{|\Theta_\varepsilon|}{\delta} + \sum_{i=1}^t \log\frac{P_{\bm{\Theta}_\star}(r_i \mid \vz_i)}{P_{\widehat{\bm{\Theta}}_{\varepsilon, t}}(r_i \mid \vz_i)} \right) \geq 1 - \delta.
\end{equation}
Using the inequality $(a - b)^2 \geq \frac{1}{2} (a - c)^2 - (b - c)^2$ and the optimality of the MLE, with probability at least $1 - \delta$ the following holds:
\begin{align}
    & \frac{1}{2} \sum_{i=1}^t (P_{\widehat{\bm{\Theta}}_t}(\vz_i) - P_{\bm{\Theta}_\star}(\vz_i))^2 
    - \sum_{i=1}^t (P_{\widehat{\bm{\Theta}}_t}(\vz_i ) - P_{\widehat{\bm{\Theta}}_{\varepsilon, t}}(\vz_i))^2 \\
    & = \log\frac{|\Theta_\varepsilon|}{\delta} +  \sum_{i=1}^t \log\frac{P_{\bm{\Theta}_\star}(r_i \mid \vz_i)}{P_{\widehat{\bm{\Theta}}_t}(r_i \mid \vz_i)} +  \sum_{i=1}^t \log\frac{P_{\widehat{\bm{\Theta}}_t}(r_i \mid \vz_i)}{P_{\widehat{\bm{\Theta}}_{\varepsilon, t}}(r_i \mid \vz_i)} \\
    & \leq \log\frac{|\Theta_\varepsilon|}{\delta} +  \sum_{i=1}^t \log\frac{P_{\widehat{\bm{\Theta}}_t}(r_i \mid \vz_i)}{P_{\widehat{\bm{\Theta}}_{\varepsilon, t}}(r_i \mid \vz_i)}.
\end{align}

Now, note that for any $\bm\Theta$,
\begin{equation}
    \log P_{\bm\Theta}(r_i \mid \vz_i) = r_i \log P_{\bm\Theta}(\vz_i) + (1 - r_i) \log (1 - P_{\bm\Theta}(\vz_i)),
\end{equation}
which is $L$-Lipschitz in $\bm\Theta$ by given.
With this, we can bound the log sum on the right as
\begin{equation}
    \sum_{i=1}^t \log\frac{P_{\widehat{\bm{\Theta}}_t}(r_i \mid \vz_i)}{P_{\widehat{\bm{\Theta}}_{\varepsilon, t}}(r_i \mid \vz_i)}
    \leq  L t \bignorm{\widehat{\bm\Theta}_{\varepsilon, t} - \widehat{\bm\Theta}_t}_F
    \leq L t \varepsilon.
\end{equation}

Since we assumed that $\log P_{\bm{\Theta}}$ is $L$-Lipschitz, it follows that $P_{\bm{\Theta}}$ is also $L$-Lipschitz.\footnote{As the Lipschitz constant is the maximum gradient norm by the Rademacher's theorem, $\bignorm{\nabla_{\bm\Theta} P_{\bm\Theta}} = P_{\bm\Theta} \cdot \bignorm{\nabla_{\bm\Theta} \log P_{\bm\Theta}} \leq L$.} Therefore,
\begin{equation}
    \sum_{i=1}^t (P_{\widehat{\bm{\Theta}}_t}(\vz_i ) - P_{\widehat{\bm{\Theta}}_{\varepsilon, t}}(\vz_i))^2 \leq Lt \varepsilon^2.
\end{equation}

We now bound the cardinality of the $\varepsilon$-net $|\Theta_\varepsilon|$ by bounding the covering number of a slightly larger set:
\begin{lemma}[Lemma 3.1 of \citet{candes2011tight}]
\label{lem:covering-number}
    Let $\Theta(d, 2r; S) \coloneqq \{ \mX \in \sR^{d \times d} \mid \|\mathbf{X}\|_F \leq S, \operatorname{rank}(\mX) \leq 2r \} \supseteq \Skew(d, 2r; S)$. For any $\varepsilon > 0$, there exists an $\varepsilon$-net $\Theta_\varepsilon$ of $\Theta(d, 2r; S)$ w.r.t. $\bignorm{\cdot}_F$ with $|\Theta_\varepsilon| \leq \left( \frac{9S}{\varepsilon} \right)^{2(2d+1)r}.$
\end{lemma}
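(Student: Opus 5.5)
This is a standard covering-number bound for low-rank matrices: factor through the SVD and cover each factor separately. Write any $\mX \in \Theta(d,2r;S)$ as $\mX = \mathbf{U}\bm\Sigma\mathbf{V}^\top$, where $\mathbf{U},\mathbf{V} \in \sR^{d\times 2r}$ have orthonormal columns and $\bm\Sigma$ is diagonal with $\|\bm\Sigma\|_F = \|\mX\|_F \le S$. If the rank is smaller than $2r$, pad with zero singular values and arbitrary orthonormal columns. The net is the product of an $\varepsilon/3$-net for each of the three factors.

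\textbf{Factor nets.}
\begin{itemize}
\item For the diagonal factor, take an $\varepsilon/3$-net $\bar{\mathcal{D}}$ of the Frobenius ball of radius $S$ in $\sR^{2r}$. By the standard volumetric bound, $|\bar{\mathcal{D}}| \le (9S/\varepsilon)^{2r}$.
\item For the orthonormal factors, use the $(1,2)$-norm $\|\mathbf{U}\|_{1,2} := \max_i \|\mathbf{U}_{:,i}\|_2$. The set $\mathcal{O}_{d,2r}$ of matrices with orthonormal columns lies in the product of $2r$ unit balls of $\sR^d$. So it admits an $\varepsilon/(3S)$-net $\bar{\mathcal{O}}$ in this norm, covering each column independently, with $|\bar{\mathcal{O}}| \le (9S/\varepsilon)^{2dr}$.
\end{itemize}

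\textbf{The net.} Set $\Theta_\varepsilon := \{\bar{\mathbf{U}}\bar{\bm\Sigma}\bar{\mathbf{V}}^\top\}$, ranging over all triples from these nets. Its cardinality is at most $(9S/\varepsilon)^{2r + 2dr + 2dr} = (9S/\varepsilon)^{2(2d+1)r}$, as claimed. The net points need not lie in $\Theta(d,2r;S)$. If an internal net is required, the usual factor-of-two argument fixes this at the cost of constants, but the stated lemma does not demand it.

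\textbf{Approximation.} Given $\mX = \mathbf{U}\bm\Sigma\mathbf{V}^\top$, pick the nearest net points and telescope:
\[
\mX - \bar{\mX} = (\mathbf{U}-\bar{\mathbf{U}})\bm\Sigma\mathbf{V}^\top + \bar{\mathbf{U}}(\bm\Sigma-\bar{\bm\Sigma})\mathbf{V}^\top + \bar{\mathbf{U}}\bar{\bm\Sigma}(\mathbf{V}-\bar{\mathbf{V}})^\top .
\]
\begin{itemize}
\item \emph{First term.} Since $\mathbf{V}$ has orthonormal columns, $\|(\mathbf{U}-\bar{\mathbf{U}})\bm\Sigma\mathbf{V}^\top\|_F = \|(\mathbf{U}-\bar{\mathbf{U}})\bm\Sigma\|_F$. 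Because $\bm\Sigma$ is diagonal, this equals $\big(\sum_i \sigma_i^2 \|\mathbf{U}_{:,i}-\bar{\mathbf{U}}_{:,i}\|_2^2\big)^{1/2} \le \|\bm\Sigma\|_F \|\mathbf{U}-\bar{\mathbf{U}}\|_{1,2} \le \varepsilon/3$.
\item \emph{Second term.} This is the main obstacle, since $\bar{\mathbf{U}}$ is no longer orthonormal. If the net for $\mathcal{O}_{d,2r}$ is chosen inside $\mathcal{O}_{d,2r}$, then $\|\bar{\mathbf{U}}\|_{\mathrm{op}} \le 1$ and the term is at most $\|\bm\Sigma - \bar{\bm\Sigma}\|_F \le \varepsilon/3$. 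An internal net is obtained from an external $\varepsilon/(6S)$-net by replacing each point with a nearby point of the set. This costs an extra factor of $2$ inside the base, which I would absorb by tuning constants. This is the step I expect to need the most care to keep the base exactly at $9$.
\item \emph{Third term.} By the same argument as the first, it is at most $\varepsilon/3$, using $\|\bar{\bm\Sigma}\|_F \le S$ (the net for the diagonal factor is taken inside the ball).
\end{itemize}

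Summing the three terms gives $\|\mX - \bar{\mX}\|_F \le \varepsilon$. Since $\Skew(d,2r;S) \subseteq \Theta(d,2r;S)$, the net also covers the skew-symmetric class, which is how the lemma is used above.
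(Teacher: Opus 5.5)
Your proof is the SVD/three-factor net argument of Lemma 3.1 in Cand\`es--Plan, which the paper cites without reproducing, correctly rescaled to the Frobenius ball of radius $S$, rank $2r$ and $d\times d$ matrices. The count $(9S/\varepsilon)^{2r+4dr}$ and the three-term telescoping are right.

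The step that fails as you propose it is making the net of $\mathcal{O}_{d,2r}$ lie \emph{inside} $\mathcal{O}_{d,2r}$. You genuinely need this: both the second and the third terms drop $\bar{\mathbf{U}}$ from the Frobenius norm using $\bar{\mathbf{U}}^\top\bar{\mathbf{U}}=\mathbf{I}_{2r}$. Your route is an external $\varepsilon/(6S)$-net with each point then moved into $\mathcal{O}_{d,2r}$. With the volumetric bound you use, this pushes the base of each orthonormal-factor count to about $12S/\varepsilon$. It cannot be absorbed by re-tuning constants: for instance when $d=2$, no split of the $\varepsilon$-budget among the three terms brings the product back under $(9S/\varepsilon)^{2(2d+1)r}$.

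The fix is the packing form of the volumetric argument, which gives internal nets with no loss in the constant.
\begin{itemize}
\item Let $\bar{\mathcal{O}}$ be a maximal $\varepsilon/(3S)$-separated subset of $\mathcal{O}_{d,2r}$ in $\|\cdot\|_{1,2}$. By maximality it is an $\varepsilon/(3S)$-net, and it lies in $\mathcal{O}_{d,2r}$.
\item The $\|\cdot\|_{1,2}$-balls of radius $\varepsilon/(6S)$ around its points are disjoint. They are contained in the $(1+\varepsilon/(6S))$-dilate of the unit ball, which is a product of $2r$ Euclidean unit balls of $\sR^d$.
\item Comparing volumes in $\sR^{2dr}$ gives $|\bar{\mathcal{O}}|\le(1+6S/\varepsilon)^{2dr}\le(9S/\varepsilon)^{2dr}$.
\item The same argument yields an internal $\varepsilon/3$-net of the radius-$S$ ball of diagonal matrices, of size at most $(9S/\varepsilon)^{2r}$. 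This is what Cand\`es--Plan use implicitly.
\end{itemize}
As a by-product, every net point $\bar{\mathbf{U}}\bar{\bm\Sigma}\bar{\mathbf{V}}^\top$ has rank at most $2r$ and Frobenius norm $\|\bar{\bm\Sigma}\|_F\le S$. So the net lies in $\Theta(d,2r;S)$ anyway, and your remark that it may not is moot.

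Finally, the inequality $1+6S/\varepsilon\le 9S/\varepsilon$ requires $\varepsilon\le 3S$. For $3S<\varepsilon\le 9S$ the singleton $\{\mathbf{0}\}$ is already a net. For $\varepsilon>9S$ the stated bound is below $1$, so the lemma must be read with $\varepsilon\le 9S$. This is harmless for its use in the paper, where $\varepsilon$ is tiny.
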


Putting the bounds together, we have:

\begin{equation}    
    \sP\left( \sum_{i=1}^t (P_{\widehat{\bm{\Theta}}_t}(\vz_i)  - P_{\bm{\Theta}_\star}(\vz_i))^2 \lesssim \log\frac{1}{\delta} +  Lt \varepsilon + Lt \varepsilon^2 + dr \log\frac{S}{\varepsilon} \right) \geq 1 - \delta.
\end{equation}

Choosing $\varepsilon \approx \frac{1}{(Lt)^2}$,
\begin{equation}
    \sP\left( \sum_{i=1}^t (P_{\widehat{\bm{\Theta}}_t}(\vz_i)  - P_{\bm{\Theta}_\star}(\vz_i))^2 \lesssim \log\frac{1}{\delta} + dr \log(LSt) \right) \geq 1 - \delta.
\end{equation}

Setting $\delta = \delta / T$ and taking the union bound over $t$, we have:
\begin{equation}
    \sP\left( \forall t \in [T], \,  \sum_{i=1}^t (P_{\widehat{\bm{\Theta}}_t}(\vz_i)  - P_{\bm{\Theta}_\star}(\vz_i))^2 \lesssim \log\frac{T}{\delta} + dr \log(LST) \right) \geq 1 - \delta.
\end{equation}
which concludes the proof.
\qed

\section{\texorpdfstring{Proof of \cref{lem:ye-wu-lem1}}{Proof of Lemma H.6}}
\label{app:ye-wu-lem1}
    The proof closely follows that of \citet[Lemma 1]{ye2024general} and \citet[Lemma 3]{wu2025greedy}.
    
    For the function $P_{\bm{\Theta}}$ defined by the fixed $\bm{\Theta} \in \Theta$, we first upper bound its logarithmic moment generating function as
    \begin{align}
        & \log \mathbb{E}\exp\!\left(\sum_{i=1}^t \log \frac{P_{\bm{\Theta}}(r_i \mid \vz_i)}{P_{\bm{\Theta}_\star}(r_i \mid \vz_i)}\right)\\
        & = \log \mathbb{E}\exp\!\Biggl(\sum_{i=1}^{t-1} \log \frac{P_{\bm{\Theta}}(r_i \mid \vz_i)}{P_{\bm{\Theta}_\star}(r_i \mid \vz_i)}
        + \log\Bigl(2\,\mathbb{E}_{r_t \mid \vz_t}\sqrt{\frac{P_{\bm{\Theta}}(r_t \mid \vz_t)}{P_{\bm{\Theta}_\star}(r_t \mid \vz_t)}}\Bigr)\Biggr)
        \\
        &= \log \mathbb{E}\exp\!\Biggl(\sum_{i=1}^{t-1} \log \frac{P_{\bm{\Theta}}(r_i \mid \vz_i)}{P_{\bm{\Theta}}(r_i \mid \vz_i)}
        + \log\Bigl(1 - H\,\bigl(P_{\bm{\Theta}}(r_t \mid \vz_t)\,\|\,P_{\bm{\Theta}_\star}(r_t \mid \vz_t)\bigr)^2\Bigr)\Biggr)
        \\
        &\le \log \mathbb{E}\exp\!\Biggl(\sum_{i=1}^{t-1} \log \frac{P_{\bm{\Theta}}(r_i \mid \vz_i)}{P_{\bm{\Theta}_\star}(r_i \mid \vz_i)}
        - H\,\bigl(P_{\bm{\Theta}}(r_t \mid \vz_t)\,\|\,P_{\bm{\Theta}_\star}(r_t \mid \vz_t)\bigr)^2\Biggr)
        \\
        &\le \cdots \le -\sum_{i=1}^{t} H\,\bigl(P_{\bm{\Theta}}(r_i \mid \vz_i)\,\|\,P_{\bm{\Theta}_\star}(r_i \mid \vz_i)\bigr)^2,
    \end{align}
    where $H(P\|Q)^2$ is the squared Hellinger distance between probability measures $P$ and $Q$ on $\Omega$, defined as
    \begin{equation}
    H(P\|Q)^2 := \int_{\Omega} \left(\sqrt{p(z)}-\sqrt{q(z)}\right)^2\, d\mu(z),
    \end{equation}
    with $p$ and $q$ denoting their respective densities with respect to a base measure $\mu$.
    
    We continue to lower-bound the Hellinger distance by
    \begin{align}
        \sum_{i=1}^{t} \Bigl(H\,\bigl(P_{\bm{\Theta}} (r_i \mid \vz_i)\,\|\,P_{\bm{\Theta}_\star}(r_i \mid \vz_i)\bigr)\Bigr)^2
        &\ge \sum_{i=1}^{t} \Bigl(\mathrm{TV}\,\bigl(P_{\bm{\Theta}}(r_i \mid \vz_i)\,\|\,P_{\bm{\Theta}_\star}(r_i \mid \vz_i)\bigr)\Bigr)^2
        \\
        &= \sum_{i=1}^{t} \Bigl(P_{\bm{\Theta}}(\vz_i) - P_{\bm{\Theta}_\star}(\vz_i)\Bigr)^2,
    \end{align}
    where the inequality uses the fact that for any distribution $p,q$, $H(p,q) \ge \mathrm{TV}(p,q)$~\citep[Theorem B.9]{Zhang_2023}.

    Then, by \cref{lem:martingale-exponential}, we obtain for each ${\bm{\Theta}} \in \Theta$, with probability at least $1-\delta$,
    \begin{align}
        \sum_{i=1}^{t} \log \frac{P_{\bm{\Theta}} (r_i \mid \vz_i)}{P_{\bm{\Theta}_\star}(r_i \mid \vz_i)}
        &\le \log\!\left(\frac{1}{\delta}\right)
        + \log \mathbb{E}\exp\!\left(\sum_{i=1}^{t} \log \frac{P_{\bm{\Theta}} (r_i \mid \vz_i)}{P_{\bm{\Theta}_\star}(r_i \mid \vz_i)}\right)
        \\
        &\le -\sum_{i=1}^{t} H\!\bigl(P_{\bm{\Theta}} (r_i \mid \vz_i)\,\|\,P_{\bm{\Theta}_\star}(r_i \mid \vz_i)\bigr)^2
        + \log\!\left(\frac{1}{\delta}\right)
        \\
        &\le -\sum_{i=1}^{t} \Bigl(P_{\bm{\Theta}} (\vz_i) - P_{\bm{\Theta}_\star}(\vz_i)\Bigr)^2
        + \log\!\left(\frac{1}{\delta}\right).
    \end{align}

\newpage
\section{Auxiliary Lemmas}

\begin{lemma}[Martingale Exponential Inequalities; Theorem 13.2 of \citet{Zhang_2023}]
\label{lem:martingale-exponential}
    Consider a sequence of random functions $\xi_1(\mathcal{Z}_1), \ldots, \xi_t(\mathcal{Z}_t), \ldots$ with respect to filtration $\{\mathcal{F}_t\}$. We have for any $\delta \in (0, 1)$ and $\lambda > 0$:
    \[
    \mathbb{P} \left( \exists n > 0 : - \sum_{i=1}^n \xi_i \ge \frac{\log(1/\delta)}{\lambda} + \frac{1}{\lambda} \sum_{i=1}^n \log \mathbb{E}_{Z_i^{(y)}} \exp(-\lambda \xi_i) \right) \le \delta,
    \]
    where $Z_t = (Z_t^{(x)}, Z_t^{(y)})$ and $\mathcal{Z}_t = (Z_1, \ldots, Z_t)$.
\end{lemma}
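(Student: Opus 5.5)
The last statement is Lemma~\ref{lem:martingale-exponential}, the martingale exponential inequality (Theorem 13.2 of \citet{Zhang_2023}). I would prove it by the standard supermartingale construction followed by Ville's maximal inequality.

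Set $W_n := \exp\bigl(-\lambda \sum_{i=1}^n \xi_i - \sum_{i=1}^n \log \E_{Z_i^{(y)}} \exp(-\lambda \xi_i)\bigr)$, with $W_0 = 1$. The conditional expectation $\E_{Z_i^{(y)}}$ is taken given $\mathcal{F}_{i-1}$ and $Z_i^{(x)}$, i.e.\ over the fresh randomness $Z_i^{(y)}$ only.

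\textbf{Step 1: $W_n$ is a nonnegative martingale (in particular a supermartingale).} Condition on $\mathcal{F}_{n-1}$ and $Z_n^{(x)}$. The factor $W_{n-1}$ and the normalizer $\E_{Z_n^{(y)}}\exp(-\lambda\xi_n)$ are measurable with respect to this conditioning. Hence
\[
\E_{Z_n^{(y)}}[W_n] = W_{n-1}\,\frac{\E_{Z_n^{(y)}}\exp(-\lambda\xi_n)}{\E_{Z_n^{(y)}}\exp(-\lambda\xi_n)} = W_{n-1}.
\]
Taking a further expectation over $Z_n^{(x)}$ gives $\E[W_n \mid \mathcal{F}_{n-1}] = W_{n-1}$.

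\textbf{Step 2: maximal inequality.} Ville's inequality for nonnegative supermartingales with $W_0 = 1$ gives $\sP(\exists n: W_n \ge 1/\delta) \le \delta$. To justify it, take the stopping time $\tau = \inf\{n : W_n \ge 1/\delta\}$. Optional stopping gives $\E[W_{\tau\wedge N}] \le 1$, so Markov's inequality bounds the probability of crossing by time $N$ by $\delta$; then let $N \to \infty$.

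\textbf{Step 3: rearrange.} The event $W_n \ge 1/\delta$ is exactly
\[
-\lambda\sum_{i=1}^n \xi_i \ge \log(1/\delta) + \sum_{i=1}^n \log\E_{Z_i^{(y)}}\exp(-\lambda\xi_i).
\]
Dividing by $\lambda>0$ gives the stated event, which proves the lemma.

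The only delicate point is measure-theoretic. We must ensure the normalizers are finite and positive, so that $W_n$ is well defined; the statement implicitly assumes integrability of $\exp(-\lambda\xi_i)$. We must also ensure the conditioning structure, with $Z_i^{(x)}$ revealed before $Z_i^{(y)}$, is respected. The algebra itself is routine.
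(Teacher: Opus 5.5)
Your proposal is correct and follows essentially the same route as the source: the paper gives no proof of its own and simply imports Theorem 13.2 of \citet{Zhang_2023}. That theorem is proved exactly by showing that $\exp\bigl(-\lambda\sum_{i\le n}\xi_i - \sum_{i\le n}\log \E_{Z_i^{(y)}} e^{-\lambda\xi_i}\bigr)$ is a nonnegative unit-mean martingale, then applying a stopping-time plus Markov (Ville-type) argument, as you do.
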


\begin{lemma}[Multiplicative Chernoff Bounds; Corollary 2.18 of \citet{Zhang_2023}]
\label{lem:multiplicative-chernoff}
    Assume that $X \in [0, 1]$ with $\mathbb{E}X = \mu$. Then for all $\epsilon > 0$,
    \begin{align}
        \mathbb{P}\left(\bar{X}_n \geq (1+\epsilon)\mu\right) &\leq \exp\left[\frac{-2n\mu\epsilon^2}{2+\epsilon}\right] \\
        \mathbb{P}\left(\bar{X}_n \leq (1-\epsilon)\mu\right) &\leq \exp\left[\frac{-2n\mu\epsilon^2}{2}\right].
    \end{align}
    Moreover, for $t > 0$, we have
    \[
        \mathbb{P}\left(\bar{X}_n \geq \mu + \sqrt{\frac{2\mu t}{n}} + \frac{t}{3n}\right) \leq \exp(-t).
    \]
\end{lemma}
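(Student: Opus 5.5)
The plan is the Cram\'er--Chernoff method applied to the i.i.d.\ average $\bar X_n=\frac1n\sum_{i=1}^n X_i$ with $X_i\in[0,1]$ and mean $\mu$. First I reduce to the Bernoulli case. Since $x\mapsto e^{\lambda x}$ is convex, $e^{\lambda x}\le 1-x+xe^{\lambda}$ on $[0,1]$, hence $\E e^{\lambda X}\le 1+\mu(e^\lambda-1)\le \exp(\mu(e^\lambda-1))$ for every $\lambda\in\sR$. Markov's inequality applied to $e^{\lambda n\bar X_n}$ then gives
\begin{equation}
\sP\left(\bar X_n\ge (1+\epsilon)\mu\right)\le \exp\left(-n\mu\left[(1+\epsilon)\log(1+\epsilon)-\epsilon\right]\right)
\end{equation}
after optimizing at $\lambda=\log(1+\epsilon)$. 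For the lower tail I use $\lambda=\log(1-\epsilon)<0$ and obtain the exponent $n\mu[(1-\epsilon)\log(1-\epsilon)+\epsilon]$. Keeping the sharper Bernoulli form $\E e^{\lambda X}\le 1+\mu(e^\lambda-1)$ instead gives the exact exponent $n\,\mathrm{kl}((1\pm\epsilon)\mu\,\|\,\mu)$, which I keep as a fallback.

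The second step is to lower bound these exponents by elementary calculus. The function $h(\epsilon)=(1+\epsilon)\log(1+\epsilon)-\epsilon$ satisfies $h(\epsilon)\ge \epsilon^2/(2+2\epsilon/3)\ge \epsilon^2/(2+\epsilon)$; I check this by comparing derivatives at $0$ twice. Similarly, $(1-\epsilon)\log(1-\epsilon)+\epsilon\ge\epsilon^2/2$ on $[0,1)$, which follows from the series $\sum_{k\ge2}\epsilon^k/(k(k-1))$. The third display is Bernstein's inequality. From $\mathrm{Var}(X)\le\E X^2\le\mu$ and $X-\mu\le 1$, I get $\log\E e^{\lambda(X-\mu)}\le \mu(e^\lambda-1-\lambda)$. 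The usual inversion of the Bennett function then yields the deviation $\sqrt{2\mu t/n}+t/(3n)$ at confidence level $e^{-t}$.

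The main obstacle is the extra factor $2$ in the numerators of the first two displays as worded, namely $-2n\mu\epsilon^2/(2+\epsilon)$ and $-2n\mu\epsilon^2/2$. The exponents produced above are exactly half of these. I would first try to recover the factor through the Bernoulli KL bound. However, a check in the regime $\mu\to0$ with $\epsilon$ small (the Poisson limit) gives $\mathrm{kl}((1\pm\epsilon)\mu\,\|\,\mu)\approx\mu\epsilon^2/2$. Since the true tail probability is $\exp(-n\mu\epsilon^2/2\,(1+o(1)))$ there, this suggests the doubled constants cannot hold uniformly over $\mu$. I would therefore prove the bounds with exponents $n\mu\epsilon^2/(2+\epsilon)$ and $n\mu\epsilon^2/2$, and verify that every downstream use in the paper only needs the inequality up to absolute constants. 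I would also record the $\mu\to0$ example to flag that the factor $2$ as printed is likely a transcription issue relative to \citet[Corollary 2.18]{Zhang_2023}, rather than something a sharper argument can deliver.
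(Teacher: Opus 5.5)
Your proposal is correct, and your central observation is right: with the factor $2$, the first two displays are false, so no argument can prove them as printed. This is more than a heuristic. For $X\sim\mathrm{Ber}(\mu)$, Cram\'er's theorem gives $\frac1n\log\sP(\bar X_n\ge(1+\epsilon)\mu)\to-\mathrm{kl}((1+\epsilon)\mu\,\|\,\mu)$. At $\mu=0.01$, $\epsilon=0.1$ one has $\mathrm{kl}(0.011\,\|\,0.01)\approx4.9\times10^{-5}$, whereas the printed exponent is $2\mu\epsilon^2/(2+\epsilon)\approx9.5\times10^{-5}$. Similarly, $\mathrm{kl}(0.009\,\|\,0.01)\approx5.2\times10^{-5}<\mu\epsilon^2=10^{-4}$. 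So both doubled bounds fail for large $n$, and the exponents $n\mu\epsilon^2/(2+\epsilon)$ and $n\mu\epsilon^2/2$ that you prove are the correct, standard ones.

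The paper has no proof to compare against. It imports the lemma by citation and never invokes it; its appendix arguments rely on Freedman's inequality, matrix Hoeffding/Azuma, and the martingale exponential inequality instead. Your planned audit of downstream uses is therefore vacuous, and the slip affects nothing else. Your Cram\'er--Chernoff derivation via $\E e^{\lambda X}\le 1+\mu(e^\lambda-1)$ is the standard argument behind the cited corollary.

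Three details to tighten when you write it out:

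\begin{enumerate}
\item For the third display, do not solve the quadratic from the Bernstein form $\exp(-ns^2/(2(\mu+s/3)))$; that only gives $\sqrt{2\mu t/n}+2t/(3n)$. Instead use $e^\lambda-1-\lambda\le\lambda^2/(2(1-\lambda/3))$ for $\lambda\in[0,3)$ and invert the resulting sub-gamma bound. Equivalently, use $h^{-1}(x)\le\sqrt{2x}+x/3$. This yields exactly $t/(3n)$.
\item For the lower tail, $\lambda=\log(1-\epsilon)$ needs $\epsilon<1$. For $\epsilon>1$ the event is empty when $\mu>0$. For $\epsilon=1$, letting $\lambda\to-\infty$ gives $e^{-n\mu}\le e^{-n\mu/2}$.
\item ``Comparing derivatives at $0$ twice'' must be completed by a global inequality: $h''(\epsilon)=\frac{1}{1+\epsilon}\ge\frac{27}{(3+\epsilon)^3}$ for all $\epsilon\ge0$, where the right side is the second derivative of $\epsilon^2/(2+2\epsilon/3)$. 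Alternatively, $\log(1+\epsilon)\ge\frac{2\epsilon}{2+\epsilon}$ gives $h(\epsilon)\ge\frac{\epsilon^2}{2+\epsilon}$ in one line.
\end{enumerate}
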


\begin{lemma}[Elliptical Potential Lemma; Lemma 11 of \citet{abbasiyadkori2011linear}]
\label{lem:EPL}
   Let $\vx_1, \cdots, \vx_T \in \gB^d(X)$ be a sequence of vectors and $\mV_t := \lambda \mI + \sum_{s=1}^{t-1} \vx_s \vx_s^\intercal$.
    Then, we have 
    \begin{equation}
        \sum_{t=1}^T \min\left\{ 1, \lVert \vx_t \rVert^2_{\mV_t^{-1}} \right\} \leq 2 d \log\left( 1 + \frac{X^2 T}{d\lambda} \right).
    \end{equation}
\end{lemma}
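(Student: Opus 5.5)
The plan is the standard determinant-telescoping argument. First I would replace each summand by a logarithm. Using the elementary inequality $\min\{1,u\} \leq 2\log(1+u)$ for all $u \geq 0$, it suffices to bound $\sum_{t=1}^T \log\left(1 + \lVert \vx_t \rVert^2_{\mV_t^{-1}}\right)$ by $d\log\left(1 + \frac{X^2 T}{d\lambda}\right)$. The inequality holds on $[0,1]$ because $\log(1+u) \geq u/2$ there, by concavity and checking the endpoints. It holds for $u \geq 1$ because $2\log 2 \geq 1$.

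Second, I would use the rank-one determinant update. Since $\mV_{t+1} = \mV_t + \vx_t \vx_t^\top = \mV_t^{1/2}\left(\mI + \mV_t^{-1/2}\vx_t\vx_t^\top \mV_t^{-1/2}\right)\mV_t^{1/2}$, the matrix determinant lemma gives $\det \mV_{t+1} = \det \mV_t \left(1 + \lVert \vx_t \rVert^2_{\mV_t^{-1}}\right)$. Taking logarithms and telescoping from $t=1$ to $T$ then yields
\begin{equation}
    \sum_{t=1}^T \log\left(1 + \lVert \vx_t \rVert^2_{\mV_t^{-1}}\right) = \log\frac{\det \mV_{T+1}}{\det(\lambda \mI)}.
\end{equation}

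Third, I would bound the determinant by the trace. Applying AM--GM to the eigenvalues of the positive definite matrix $\mV_{T+1}$ gives $\det \mV_{T+1} \leq \left(\tr(\mV_{T+1})/d\right)^d$. The trace satisfies $\tr(\mV_{T+1}) = d\lambda + \sum_{t=1}^T \lVert \vx_t\rVert_2^2 \leq d\lambda + T X^2$. Dividing by $\det(\lambda\mI) = \lambda^d$ gives $\log\frac{\det \mV_{T+1}}{\lambda^d} \leq d\log\left(1 + \frac{X^2 T}{d\lambda}\right)$, and multiplying by $2$ completes the argument.

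None of these steps is a real obstacle. The only points needing care are the scalar inequality in the first step, where the constant $2$ enters, and noting that $\mV_t \succ \vzero$ so that the inverses and the square root in the determinant identity are well defined.
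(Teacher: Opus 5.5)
Your proof is correct and follows the same determinant-telescoping argument as Lemma~11 of \citet{abbasiyadkori2011linear}, which the paper cites without reproducing. That argument combines $\min\{1,u\}\le 2\log(1+u)$, the matrix determinant lemma to telescope $\log\det \mV_t$, and AM--GM on the eigenvalues to get $\det \mV_{T+1} \le (\tr \mV_{T+1}/d)^d$. The only unstated hypothesis is $\lambda>0$, which you already use to guarantee $\mV_t \succ \vzero$.
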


\newpage
\section{Discussions on Regrets}
\label{app:other-regrets}
\subsection{Four Regret Definitions and Discussions}
A standard measure of performance in online learning is \emph{regret}. However, because the interaction is two-player and self-play, there are several ways to define regret, arising from different yet closely related communities: no-regret learning in games, dueling bandits/RL, and RL in two-player zero-sum games.
In the main text, we only consider $\MBRReg_{\eta}$, and so in this Appendix, we provide the deferred discussions regarding four regrets:

\begin{definition}
    The four regrets are defined as follows.
    \begin{enumerate}
        \item[(a)] \textbf{Average-Nash Regret}:
        \begin{equation}
            \ANReg_{\eta}(T) := \max_{\pi^1, \pi^2 \in \Pi} \sum_{t=1}^T \left\{ J_{\eta}(\pi^1, {\color{red}\hat{\pi}_t^2}) - J_{\eta}({\color{blue}\hat{\pi}_t^1}, \pi^2) \right\}.
        \end{equation}
        \item[(b)] \textbf{Average-Best-Response Regret}:
        \begin{equation}
            \ABRReg_{\eta}(T) := \sum_{t=1}^T \max_{\pi^1, \pi^2 \in \Pi} \left\{ J_{\eta}(\pi^1, {\color{red}\hat{\pi}_t^2}) - J_{\eta}({\color{blue}\hat{\pi}_t^1}, \pi^2) \right\}.
        \end{equation}
        \item[(c)] \textbf{Max-Nash Regret}:
        \begin{equation}
            \MNReg_{\eta}(T) := \max_{\pi \in \Pi} \sum_{t=1}^T \left\{ \frac{1}{2} - J_{\eta}({\color{blue}\hat{\pi}_t^1}, \pi) \right\}.
        \end{equation}
        \item[(d)] \textbf{Max-Best-Response Regret}:
        \begin{equation}
            \MBRReg_{\eta}(T) := \sum_{t=1}^T \max_{\pi \in \Pi} \left\{ \frac{1}{2} - J_{\eta}({\color{blue}\hat{\pi}_t^1}, \pi) \right\}.
        \end{equation}
    \end{enumerate}
    The unregularized variants are defined similarly and denoted without the $\eta$ subscript.
\end{definition}

\paragraph{Categorization Criteria.}
There are two criteria that determine each regret definition.
The first criterion is, at each time $t$, whether to consider the regrets of both players simultaneously, or to consider the regret of the max-player only.
This distinguishes between \emph{Average} or \emph{Max}.
The second criterion is whether to compare against a fixed comparator or to compare against the best response at each time $t$, which is usually time-varying.
This distinguishes between \emph{Nash} and \emph{Best-Response}.

Intuitively, the \textit{Average} regret definitions consider the ``suboptimality'' of both policies simultaneously. The difference between \textit{Nash} and \textit{Best-Response} is whether the regret is defined w.r.t. a fixed comparator (\textit{Nash}) or a dynamically changing comparator (\textit{Best-Response}).
Thus, in classical literature, they are also known as \emph{external} and \emph{internal (swap)} regrets.

\paragraph{Average Regrets.}
$\ANReg(T)$ is the notion originally considered in the seminal work of \citet{freund-schapire}, followed by numerous works on no-regret learning dynamics in games~\citep{daskalakis2011noregret,daskalakis2015noregret,daskalakis2018OGDA,rakhlin-sridharan,rakhlin-sridharan2,srygkanis2015games}, recently adopted to game-theoretic LLM alignment~\citep{zhang2025improving}.
$\ANReg(T)$ is precisely the regret considered in contextual dueling bandits~\citep[Eqn. (3)]{dudik2015dueling} and dueling RL~\citep[Eqn. (4)]{saha2023duelingrl}; indeed, for any $\pi \in \Pi$, utilizing the anti-symmetry of $J(\cdot, \cdot)$, we can rewrite
$J(\pi, {\color{red}\hat{\pi}_t^2}) - J({\color{blue}\hat{\pi}_t^1}, \pi) = J(\pi, {\color{blue}\hat{\pi}_t^1}) + J(\pi, {\color{red}\hat{\pi}_t^2}) - 1.$
This also slightly resembles Borda regret in dueling bandits~\citep{saha2021adversarial,wu2024dueling}, and average regret in dueling bandits under linear stochastic transitivity~\citep{saha2021dueling,bengs2021survey,bengs2022dueling}.

On the other hand, $\ABRReg(T)$ strongly resembles the notion of \textit{best response regret} in adversarial dueling bandits~\citep[Eqn. (1)]{saha2022contextual}, but there is a key difference.
In \citet{saha2022contextual}, the comparator at time $t$ is the same for both players, whereas in our regret setting, it differs for each player.
Basically, each player must compete with the worst-case (strongest) adversary from her perspective, who chooses the best response from his perspective.

\paragraph{Max Regrets.}
The notion of considering the regret of the \textit{max} player dates back to the self-play framework for RL in two-player zero-sum games~\citep{bai2020self-play,bai2020self-play2,liu2021self-play,jin2022exploiter,xiong2022self-play}; this idea has been recently applied to theoretical analyses of online RLHF under general preference~\citep{ye2024general,wu2025greedy}.
Basically, the intuition is that the learner only cares about obtaining the NE policy for the max-player, which is the policy that is actually deployed in practice.

Note that the min-player's policies ${\color{red}\hat{\pi}_t^2}$ do not contribute to the regret at all, and thus, often, the min-player acts as an exploration agent whose sole role is to collect as much information as possible to facilitate the learning of the max-player~\citep{bai2020self-play,bai2020self-play2,liu2021self-play,jin2022exploiter,xiong2022self-play,xiong2024iterative,ye2024general}.

\subsection{Online-to-Batch Conversion}
\label{app:online-to-batch}
A standard consequence of no-regret learning in repeated zero-sum games is an \emph{online-to-batch conversion}: the time-averaged (mixed) policies form an approximate Nash equilibrium. We formalize this statement in the following proposition.

\begin{proposition}[Online-to-batch conversion]
\label{prop:online-to-batch}
Let $\{(\hat{\pi}_t^1,\hat{\pi}_t^2)\}_{t=1}^T\subseteq \Pi\times\Pi$ be any policy sequence.
Define the uniform mixture policies as $\bar{\pi}_T^i := \frac1T\sum_{t=1}^T \hat{\pi}_t^i$ for $i \in \{1, 2\}$ and for simplicity, let us denote $\bar{\pi}_T := \bar{\pi}_T^1$.

\smallskip
\noindent\textbf{(a) Average regrets.}
For average regrets, $(\bar{\pi}_T^1, \bar{\pi}_T^2)$ is a $\frac{\Reg(T)}{T}$-approximate symmetric NE:
\[
\max_{\pi^1,\pi^2\in\Pi}
\Bigl\{J_{\eta}(\pi^1,\bar{\pi}_T^2)-J_{\eta}(\bar{\pi}_T^1,\pi^2)\Bigr\}
\;\le\; \frac{\ANReg_{\eta}(T)}{T}
\;\le\; \frac{\ABRReg_{\eta}(T)}{T}.
\]

\smallskip
\noindent\textbf{(b) Max regrets.}
For max regrets, $\bar{\pi}_T$ is a $\frac{2 \Reg(T)}{T}$-approximate symmetric NE:
\[
\max_{\pi\in\Pi}\Bigl\{J_{\eta}(\pi,\bar{\pi}_T)-J_{\eta}(\bar{\pi}_T,\pi)\Bigr\}
\;\le\; \frac{2\MNReg_{\eta}(T)}{T}
\;\le\; \frac{2\MBRReg_{\eta}(T)}{T}.
\]
\end{proposition}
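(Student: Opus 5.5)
Both parts follow from two structural facts about $J_{\eta}$, after which the argument is Jensen's inequality plus exchanging a max with a sum.

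\emph{Fact 1 (concave--convex structure).} For each fixed $\vx$, the unregularized term $J(\pi^1,\pi^2)$ is bilinear in $(\pi^1(\cdot\mid\vx),\pi^2(\cdot\mid\vx))$, since it is an expectation over independent draws. The uniform mixture $\bar{\pi}^i_T$ is taken pointwise in $\vx$. The regularizer $\psi$ is convex on $\Delta(\gA)$, so $-\eta^{-1}\E_{\vx}[\psi(\pi^1)]$ is concave in $\pi^1$ and $+\eta^{-1}\E_{\vx}[\psi(\pi^2)]$ is convex in $\pi^2$. Hence $J_{\eta}(\cdot,\pi^2)$ is concave and $J_{\eta}(\pi^1,\cdot)$ is convex. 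By Jensen over the $T$ mixture components,
\begin{equation}
J_{\eta}(\bar{\pi}^1_T,\pi^2)\ \ge\ \frac1T\sum_{t=1}^T J_{\eta}(\hat{\pi}^1_t,\pi^2), \qquad J_{\eta}(\pi^1,\bar{\pi}^2_T)\ \le\ \frac1T\sum_{t=1}^T J_{\eta}(\pi^1,\hat{\pi}^2_t).
\end{equation}

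\emph{Fact 2 (anti-symmetry).} Because $\bm\Theta_\star$ is skew-symmetric, $\bm\phi^\top\bm\Theta_\star\tilde{\bm\phi}=-\tilde{\bm\phi}^\top\bm\Theta_\star\bm\phi$. Combined with $\mu(z)+\mu(-z)=1$, this gives $J(\pi,\pi')+J(\pi',\pi)=1$. The regularization terms enter $J_{\eta}$ with opposite signs, so they cancel in the sum, and therefore $J_{\eta}(\pi,\pi')=1-J_{\eta}(\pi',\pi)$ for all $\pi,\pi'$.

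\textbf{Part (a).} Fix arbitrary $\pi^1,\pi^2$. Fact 1 yields
\begin{equation}
J_{\eta}(\pi^1,\bar{\pi}^2_T)-J_{\eta}(\bar{\pi}^1_T,\pi^2)\ \le\ \frac1T\sum_{t=1}^T\bigl\{J_{\eta}(\pi^1,\hat{\pi}^2_t)-J_{\eta}(\hat{\pi}^1_t,\pi^2)\bigr\}.
\end{equation}
Taking $\max_{\pi^1,\pi^2}$ of both sides gives the first inequality, since the right-hand side becomes exactly $\ANReg_{\eta}(T)/T$. The second inequality $\ANReg_{\eta}(T)\le\ABRReg_{\eta}(T)$ holds because a maximum of a sum is at most the sum of the termwise maxima.

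\textbf{Part (b).} By Fact 2,
\begin{equation}
J_{\eta}(\pi,\bar{\pi}_T)-J_{\eta}(\bar{\pi}_T,\pi)=1-2J_{\eta}(\bar{\pi}_T,\pi)=2\Bigl(\tfrac12-J_{\eta}(\bar{\pi}_T,\pi)\Bigr).
\end{equation}
Concavity in the first argument (Fact 1) then bounds this by $\frac2T\sum_{t=1}^T\bigl(\tfrac12-J_{\eta}(\hat{\pi}^1_t,\pi)\bigr)$. Maximizing over $\pi$ gives $2\MNReg_{\eta}(T)/T$. Exchanging the max with the sum once more gives $\MNReg_{\eta}(T)\le\MBRReg_{\eta}(T)$, which completes the chain.

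The only step needing care is Fact 2: it is where skew-symmetry of $\bm\Theta_\star$ and symmetry of $\mu$ are used, and where one must check that the regularizer terms cancel rather than add.
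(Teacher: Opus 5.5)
Your proof is correct and follows the paper's argument: concavity/convexity of $J_{\eta}$ (bilinearity of $J$ plus Jensen on $\psi$) for the averaged policies, then exchanging the max with the sum, with the anti-symmetry $J_{\eta}(\pi,\pi')=1-J_{\eta}(\pi',\pi)$ used in part (b). The only difference is one of ordering. In (b) you apply anti-symmetry to $\bar{\pi}_T$ before averaging, so you need only concavity in the first argument, whereas the paper averages both terms first and then applies the identity termwise. You also spell out why the identity holds, which the paper leaves implicit.
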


\begin{proof}
\textbf{(a)} Fix any $\pi^1,\pi^2\in\Pi$.
By the bilinearity of $J$ and Jensen's inequality w.r.t. $\psi(\cdot)$,
\[
    J_{\eta}(\pi^1,\bar{\pi}_T^2)
    = J(\pi^1,\bar{\pi}_T^2) - \eta^{-1}\psi(\pi^1) + \eta^{-1}\psi(\bar{\pi}_T^2)
    \le \frac1T\sum_{t=1}^T J_{\eta}(\pi^1,\hat{\pi}_t^2),
\]
and similarly,
\[
    J_{\eta}(\bar{\pi}_T^1,\pi^2)
    = J(\bar{\pi}_T^1,\pi^2) - \eta^{-1}\psi(\bar{\pi}_T^1) + \eta^{-1}\psi(\pi^2)
    \ge \frac1T\sum_{t=1}^T J_{\eta}(\hat{\pi}_t^1,\pi^2).
\]
Subtracting the two inequalities and taking $\max_{\pi^1,\pi^2}$ yields
\begin{align}
    \max_{\pi^1,\pi^2}\Bigl\{J_{\eta}(\pi^1,\bar{\pi}_T^2)-J_{\eta}(\bar{\pi}_T^1,\pi^2)\Bigr\}
    & \le \frac{1}{T} \max_{\pi^1,\pi^2}\sum_{t=1}^T\Bigl(J_{\eta}(\pi^1,\hat{\pi}_t^2)-J_{\eta}(\hat{\pi}_t^1,\pi^2)\Bigr) \\
    & \leq \frac{1}{T}\sum_{t=1}^T\max_{\pi^1,\pi^2}\Bigl(J_{\eta}(\pi^1,\hat{\pi}_t^2)-J_{\eta}(\hat{\pi}_t^1,\pi^2)\Bigr)
\end{align}

\textbf{(b)} By the same averaging argument (bilinearity of $J$ and Jensen's inequality),
for every $\pi\in\Pi$,
\[
J_{\eta}(\pi,\bar{\pi}_T)-J_{\eta}(\bar{\pi}_T,\pi)
\le \frac{1}{T}\sum_{t=1}^T \Bigl(J_{\eta}(\pi,\hat{\pi}_t^1)-J_{\eta}(\hat{\pi}_t^1,\pi)\Bigr).
\]
Since for each $t$, $J_{\eta}(\pi,\hat{\pi}_t^1)-J_{\eta}(\hat{\pi}_t^1,\pi)
= 2\Bigl(\tfrac12 - J_{\eta}(\hat{\pi}_t^1,\pi)\Bigr)$,
taking $\max_\pi$ and substituting yields
\begin{align}
    \max_{\pi}\bigl\{J_{\eta}(\pi,\bar{\pi}_T)-J_{\eta}(\bar{\pi}_T,\pi)\bigr\} \le \frac{2}{T}\max_{\pi}\sum_{t=1}^T\Bigl(\tfrac12 - J_{\eta}(\hat{\pi}_t^1,\pi)\Bigr)  \leq \frac{2}{T}\sum_{t=1}^T\max_{\pi}\Bigl(\tfrac12 - J_{\eta}(\hat{\pi}_t^1,\pi)\Bigr). \tag*{\qedhere}
\end{align}
\end{proof}

\newpage
\section{Synthetic Experiments}
\label{app:experiment}

In this section, we present empirical results to numerically validate the theoretical regret bounds of $\texttt{Greedy Sampling (GS)}$ established in \cref{sec:regret-logarithmic}.

\subsection{Experiment Setup.}
\label{app:experiment-setup}
We consider a bilinear preference model with logistic link function $\mu(z) = 1 / (1 + e^{-z})$ and reverse KL regularizer. We use $K$ feature vectors in an uncontextualized setting, randomly sampled via a uniform distribution, and normalized to $\ell_2$
norm $\leq1$. 

We use the following hyperparameters for our experiments:
\begin{itemize}
    \item $d=5, \, K=20, \, S=5$
    \item $d=10, \, K=40, \, S=10$
    \item $r=1$
    \item $\eta \in \{10^{-2}, \cdots, 10^4 \}$
    \item $T = 10000$
\end{itemize}

All reported metrics are averaged over 20 independent random seeds, with standard deviations shown as shaded regions or error bars.

\subsection{Implementation Details and Reproducibility}
\label{app:implementation-details}

To ensure reproducibility and bridge the gap between continuous bounds and discrete floating-point arithmetic, we detail our experimental setup and numerical stabilizations (source code provided in the supplement). 
\paragraph{Instance Generation.} To ensure that the true preference matrix $\bm{\Theta}_\star$ rigorously satisfies the theoretical assumptions of being low-rank, exactly skew-symmetric ($\bm{\Theta}_\star + \bm{\Theta}_\star^\top = 0$), and bounded in norm, we construct it using its real spectral decomposition. We first draw a Gaussian matrix $\mathbf{G} \in \mathbb{R}^{d \times 2r}$ and compute its QR decomposition to obtain an orthonormal basis matrix $\mathbf{Q} \in \mathbb{R}^{d \times 2r}$. We then construct a block-diagonal skew-symmetric core matrix $\mathbf{B} \in \mathbb{R}^{2r \times 2r}$ consisting of $r$ independent $2 \times 2$ blocks of the form $\left[\begin{smallmatrix} 0 & s_i \\ -s_i & 0 \end{smallmatrix}\right]$, where the singular values $s_i$ are sampled uniformly from $[0.1, 1.0]$. The unnormalized parameter matrix is assembled as $\mathbf{\Theta} = \mathbf{Q} \mathbf{B} \mathbf{Q}^\top$, guaranteeing an exact rank of $2r$. We apply a minor anti-symmetrization $\frac{1}{2}(\mathbf{\Theta} - \mathbf{\Theta}^\top)$ solely to correct floating-point inaccuracies, and scale the matrix such that its Frobenius norm is exactly $\|\bm{\Theta}_\star\|_F = S$. 

\paragraph{Equiliibrium and Estimation Procedures.} Both the base exploration policy $\pi_0$ and the reference policy $\pi_{\text{ref}}$ are initialized as the uniform distribution over the $K$ available items ($\pi_0 = \pi_{\text{ref}} = \mathbf{1}/K$). Since our experiments focus on the reverse-KL regularizer, the regularized game admits exact log-ratio coordinates, which allows us to compute equilibria utilizing SciPy's \texttt{root} function with the \texttt{hybr} method. 
Computing fixed points and log-likelihoods in extreme regimes ($\eta \gg 1$) requires specific numerical heuristics. Directly solving the fixed-point equation $p = \text{BR}_\eta(p)$—where $\text{BR}_\eta$ denotes the regularized best response operator against an opponent policy—is highly unstable in this low-temperature limit due to exponential sensitivity. To address this, we employ an $\eta$-continuation (homotopy) method, sequentially solving the fixed point via adaptively damped Mann iterations. 

\paragraph{Numerical Stability and Reproducibility.} We use several numerical safeguards to preserve stability. For preference estimation, we utilize the Online Newton Step (ONS) algorithm to sequentially update the estimated parameter matrix, alongside offline Maximum Likelihood Estimation (MLE). During these updates, inner products are clipped to $[-50, 50]$ to prevent exponential overflow in the link function, and the ONS logistic variance term (which acts as the Hessian approximation) is strictly bounded to $[10^{-6}, 0.25]$ to prevent inverse covariance degeneracy. 

All experiments were executed on standard consumer-grade CPU hardware without the need for hardware accelerators. The complete evaluation suite, including the 20 independent random seeds across all dimension and regularization configurations, executes to completion within a few hours. 
Full code can be found in \url{https://github.com/minju-hong/online_rlhf_gbpm.git}.

\subsection{Main Results}
\label{app:exp-results}

We first plot the cumulative $\MBRReg$ for varying regularization strengths $\eta$. As illustrated in \cref{fig:1}, the regret tightly fits a $\log T$ curve for small $\eta$ and shifts to a $\sqrt{T}$ curve for large $\eta$, corroborating the bounds established in \cref{thm:regularized-log}.

To quantify this phase transition, we fit the empirical regret to both logarithmic and square-root models and plot the goodness-of-fit ($R^2$)
in \cref{fig:2}. Equating the two upper bounds in
\cref{thm:regularized-log} gives the theoretical crossover
\[
    \eta_{\mathrm{cross}}(d,T)
    =
    \frac{\kappa^{1/2} C_{\min}^{1/2}}{\beta}
    \frac{\sqrt{T}}{d^2 \log(T/d)}.
\]
Thus, for $\eta \gtrsim \eta_{\mathrm{cross}}(d,T)$, the
$\sqrt{T}$ term is selected by the minimum, whereas for $\eta \lesssim \eta_{\mathrm{cross}}(d,T)$, the logarithmic term is selected. Ignoring constants and the mild logarithmic dependence, this threshold scales as $d^{-2}\sqrt{T}$ when $C_{\min}$ is held fixed. Accordingly, increasing the dimension from $d=5$ to $d=10$ shifts the predicted crossover toward smaller $\eta$, up to the dependence of $C_{\min}$ on $d$.

Finally, we show the cumulative regret at $T=10^4$ scaling with respect to $\eta$ (\cref{fig:3}). The regret initially grows linearly but strictly plateaus in the large-$\eta$ regime, perfectly matching the behavior of our unified $\min(\cdot, \cdot)$ bound.

\begin{figure}[h]
    \centering
    \begin{minipage}[b]{0.45\textwidth}
        \centering
        \includegraphics[width=\textwidth]{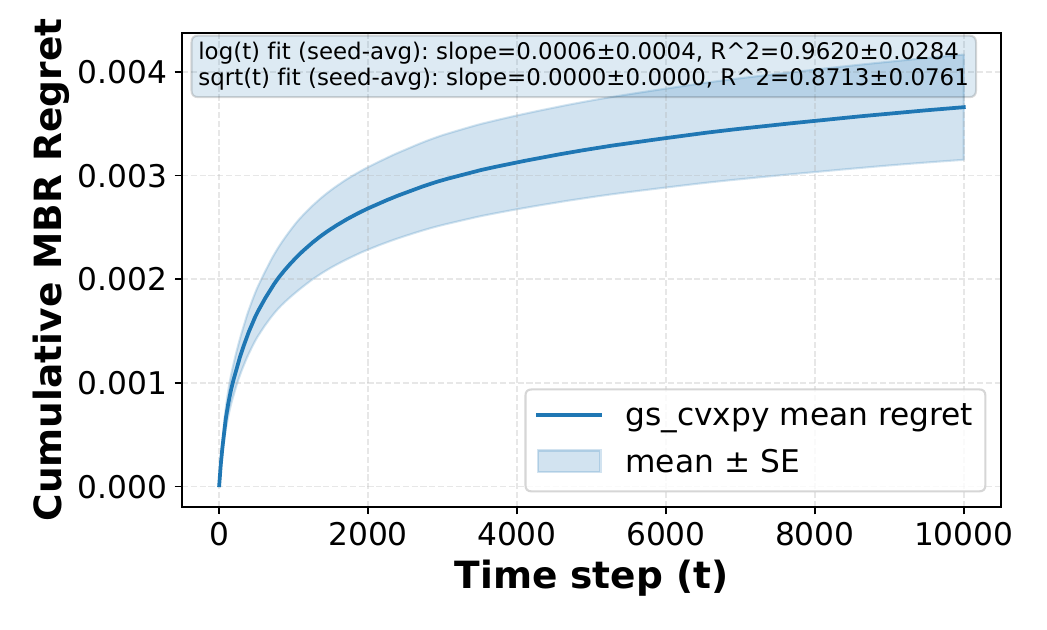}
        \vspace{0.1cm}
        
        \small (a) Small $\eta=0.01$: $\log T$ regime
        \label{fig:1a}
    \end{minipage}
    \hfill
    \begin{minipage}[b]{0.45\textwidth}
        \centering
        \includegraphics[width=\textwidth]{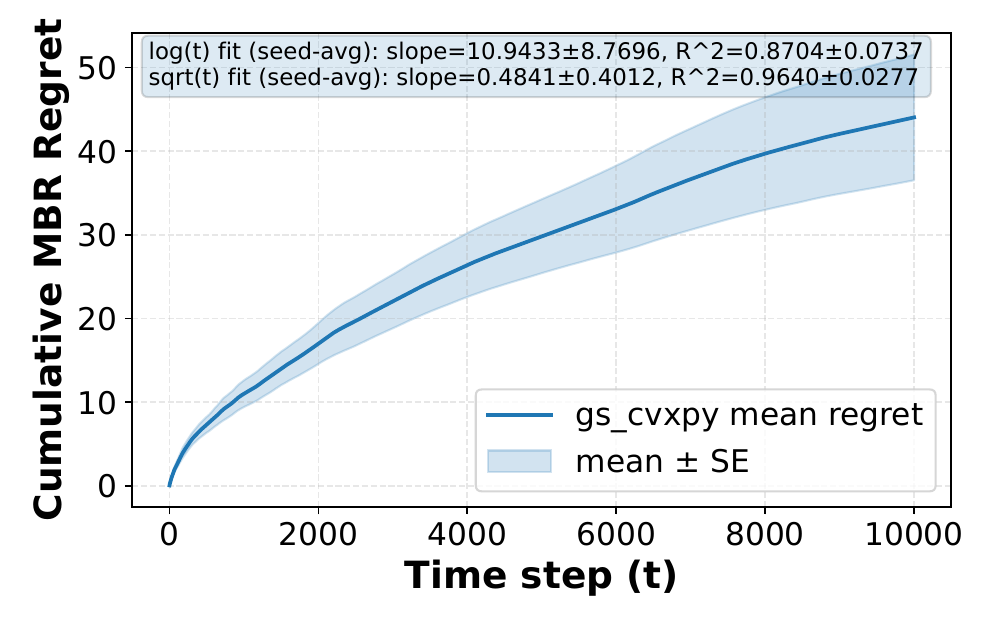}
        \vspace{0.1cm}
        
        \small (b) Large $\eta=1000$: $\sqrt{T}$ regime
        \label{fig:1b}
    \end{minipage}
    \vspace{-0.15cm}
    \caption{MBR Regret Trajectories.} 
    \label{fig:1}
\end{figure}

\begin{figure}[h]
    \centering
    \begin{minipage}[b]{0.45\textwidth}
        \centering
        \includegraphics[width=\textwidth]{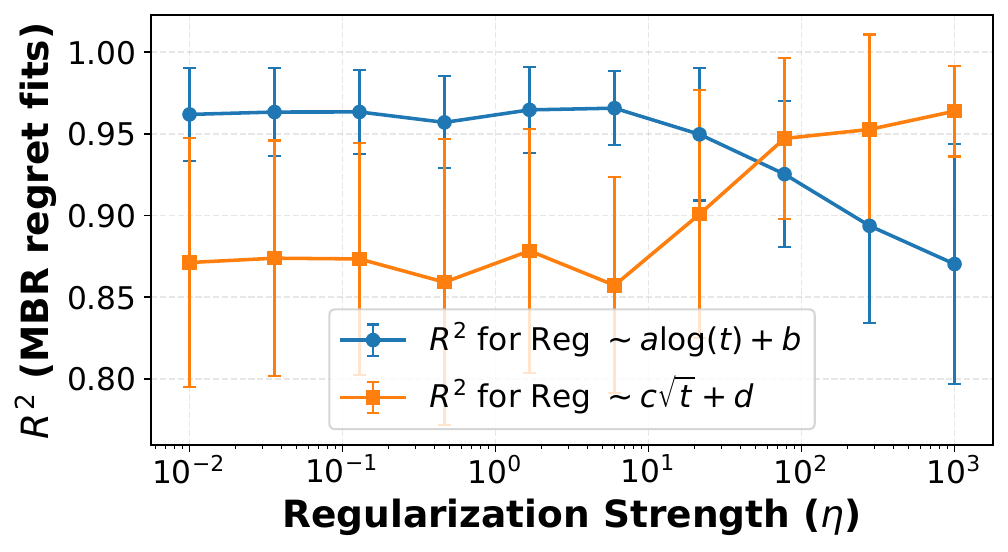}
        \vspace{0.1cm}
        
        \small (a) $d=5$
        \label{fig:2a}
    \end{minipage}
    \hfill
    \begin{minipage}[b]{0.45\textwidth}
        \centering
        \includegraphics[width=\textwidth]{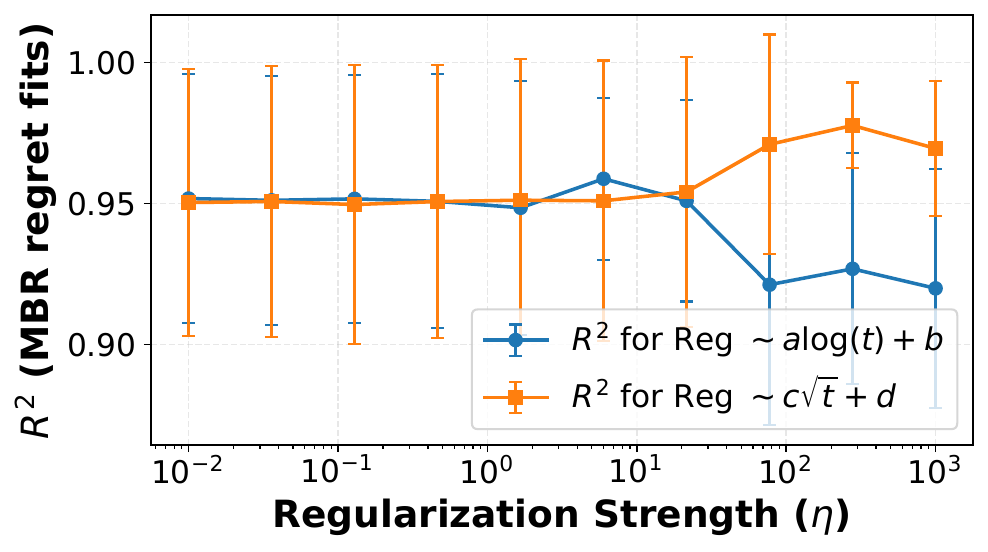}
        \vspace{0.1cm}
        
        \small (b) $d=10$
        \label{fig:2b}
    \end{minipage}
    \vspace{-0.15cm}
    \caption{Crossover Point Analysis.} 
    \label{fig:2}
\end{figure}

\begin{figure}[h!]
    \centering
    \includegraphics[width=0.8\textwidth]{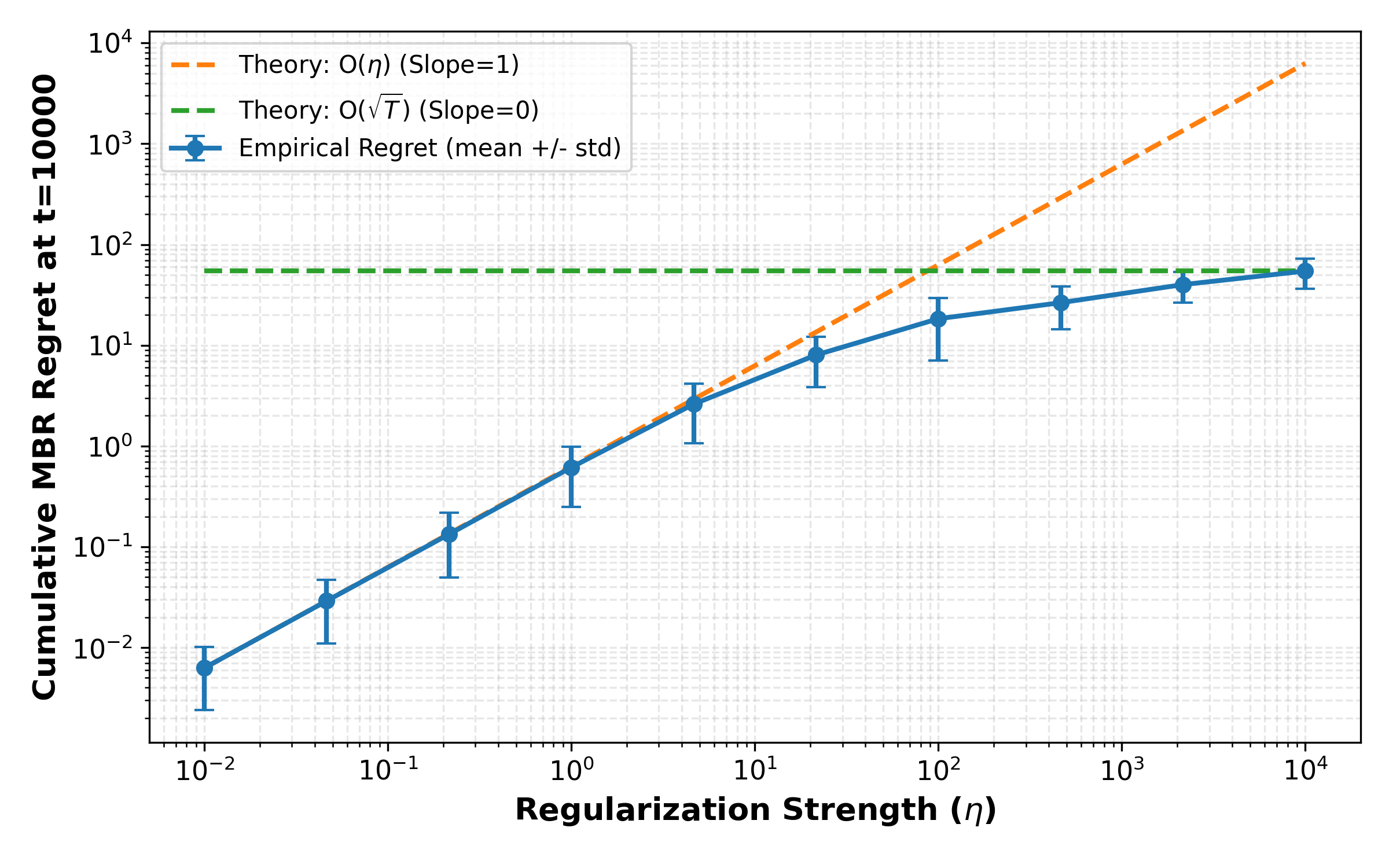}
    \vspace{-0.15cm}
    \caption{Final Regret Scaling.}
    \label{fig:3}
\end{figure}
\clearpage

\newpage
\section{Future Directions}
\label{app:future}

\paragraph{Relaxing the Feature Diversity Assumption.}
Our current regret bounds rely on the feature diversity assumption (\cref{ass:REC}), characterized by the minimum eigenvalue $C_{\min}$.
While this assumption is standard in the contextual bandits literature that involves either greedy sampling (e.g., algorithms without sophisticated exploration strategies) or high dimensions, it may still be restrictive for general RLHF applications.
Recent works have investigated minimal assumptions required for greedy strategies, such as the local anti-concentration (LAC) property proposed by \citet{kim2024greedy}.
Investigating the impact of such relaxed conditions on online RLHF with \GBPM~ (e.g., whether we can still obtain polylogarithmic regret with \texttt{GS}) remains an important open question.

\paragraph{Instance-Specific Guarantees for Unregularized Regret.}
While our work establishes $\tilde{\gO}(\sqrt{T})$ guarantees for \textit{unregularized} regret (via \cref{thm:regularized-log}), these bounds reflect worst-case hardness.
For instance, \citet{ito2025twoplayer} demonstrated that in tabular games with bandit feedback, the Nash regret for the Tsallis-INF algorithm~\citep{tsallis1988,abernethy2015tsallis,zimmert-seldin} scales with the ``sparsity'' or ``entropy'' of the NE set, potentially achieving logarithmic regret $\gO(\log T)$ when the NE is unique and deterministic (a pure strategy), and even rates of the form $\gO(T^c)$ for some $c \in (0, 1)$, depending on the geometry of the set of Nash Equilibria.
Adapting such instance-dependent guarantees to the contextual~\GBPM~setting is non-trivial, even when the link function $\mu$ is linear.
Recent advances in ``Best-of-Both-Worlds'' algorithms for linear contextual bandits \citep{kuroki2024bow,kato2025tsallis-inf} may provide a promising starting point.

\paragraph{Computationally Efficient Algorithms.}
Our current theoretical framework assumes access to a computational oracle for finding the NE (\cref{oracle:NE}), which may be computationally expensive in practice. 
Developing efficient variants of our algorithms is a practical priority.
Promising approaches include leveraging online estimation techniques such as Online Mirror Descent (OMD) \citep{zhang2025onepass}, minimax optimization techniques such as optimistic OMD \citep{rakhlin-sridharan,srygkanis2015games,zhang2025improving}, or reductions to offline/online regression oracles \citep{foster2020beyond}.

\end{document}